\documentclass{article}

\usepackage{microtype}
\usepackage{graphicx}
\usepackage{subcaption}
\usepackage{booktabs} 

\usepackage{hyperref}

 \usepackage[preprint]{icml2026}

\usepackage{amsmath}
\usepackage{amssymb}
\usepackage{mathtools}
\usepackage{amsthm}

\usepackage{makecell}

\newcommand{\eqdef}{\; { := }\;}

\newcommand{\R}{\mathbb{R}}

\newcommand{\E}{\mathbb{E}}
\newcommand{\C}{\mathcal{C}}

\newcommand{\cD}{\mathcal{D}}

\usepackage[capitalize,noabbrev]{cleveref}

\theoremstyle{plain}
\newtheorem{theorem}{Theorem}[section]

\newtheorem{lemma}[theorem]{Lemma}

\theoremstyle{definition}
\newtheorem{definition}[theorem]{Definition}
\newtheorem{assumption}[theorem]{Assumption}
\theoremstyle{remark}

\usepackage[textsize=tiny]{todonotes}

\icmltitlerunning{Communication-Efficient Gluon in Federated Learning}

\begin{document}

\twocolumn[
  \icmltitle{Communication-Efficient Gluon in Federated Learning}



  \icmlsetsymbol{equal}{*}

  \begin{icmlauthorlist}
    \icmlauthor{Xun Qian}{yyy}
    \icmlauthor{Alexander Gaponov}{yyy}
    \icmlauthor{Grigory Malinovsky}{yyy}
    \icmlauthor{Peter Richt\'{a}rik}{yyy}
 
  \end{icmlauthorlist}

  \icmlaffiliation{yyy}{Center of Excellence for Generative AI, King Abdullah University of Science and Technology, Thuwal, Saudi Arabia}

  \icmlcorrespondingauthor{Xun Qian}{xun.qian@kaust.edu.sa}

  \icmlkeywords{Muon, Compressed Gluon, MVR, Error Feedback}

  \vskip 0.3in
]



\printAffiliationsAndNotice{}  

\begin{abstract}
  Recent developments have shown that Muon-type optimizers based on linear minimization oracles (LMOs) over non-Euclidean norm balls have the potential to get superior practical performance than Adam-type methods in the training of large language models. Since large-scale neural networks are trained across massive machines, communication cost becomes the bottleneck. To address this bottleneck, we investigate Gluon, which is an extension of Muon under the more general layer-wise $(L^0, L^1)$-smooth setting, with both unbiased and contraction compressors. In order to reduce the compression error, we employ the variance reduced technique in SARAH in our compressed methods. The convergence rates and improved communication cost are achieved under certain conditions. As a byproduct, a new variance reduced algorithm with faster convergence rate than Gluon is obtained. We also incorporate momentum variance reduction (MVR) to these compressed algorithms and comparable communication cost is derived under weaker conditions when $L_i^1 \neq 0$. Finally, several numerical experiments are conducted to verify the superior performance of our compressed algorithms in terms of communication cost. 
\end{abstract}

\section{Introduction}

For the training of large-scale neural networks, Adam-type optimizers \cite{kinga2015method, reddi2019convergence, loshchilovdecoupled} have been the dominant methods for the past decade. However, this paradigm was challenged by the introduction of Muon \cite{jordan6muon}, which demonstrated superior empirical performance over AdamW \cite{loshchilovdecoupled} in training a 1.5B-parameter NanoGPT model. Subsequently, the efficiency advantage of Muon was further confirmed in training even larger Mixture-of-Experts language models with up to 16B parameters \cite{liu2025muon}. The theoretical convergence of Muon has since been established in several works \cite{li2025note, kovalev2025understanding, shen2025convergence}, inspiring numerous extensions \cite{pethick2025training, riabinin2025gluon, shulgin2025beyond, kovalev2025non}. Among these, Gluon \cite{riabinin2025gluon} is a notable layer-wise framework based on the Linear Minimization Oracle (LMO), designed to better capture the layer-wise geometry of neural networks.

With the rapid growth of both training data and model parameters in modern deep learning \cite{team2025kimi,comanici2025gemini}, single-machine training has become infeasible due to computational and memory constraints. This has necessitated the adoption of distributed training paradigms \cite{you2017large,konevcny2016federated}. The corresponding optimization task can be formulated as the following distributed non-convex problem.
\begin{equation}\label{p:p}
	\min_{X \in {\cal S}} \left\{  f(X) \eqdef \frac{1}{n} \sum_{\tau=1}^n f^\tau (X) \right\}, 
\end{equation}
where $f^\tau (X) \eqdef \E_{\xi_\tau \sim \cD_\tau} [f_{\xi_\tau} (X) ]$, $n$ is the number of workers, ${\cal S}$ is a $d$-dimensional vector space which can also be represented by the product space ${\cal S}_1 \otimes \cdots \otimes {\cal S}_p$,  $f_{\xi_\tau}: {\cal S} \to \R$ are potentially non-convex and non-smooth but continuously differentiable functions, and $\cD_\tau$ is the probability distribution of the training data stored on the $\tau$-th worker. We assume that $f(\cdot)$ has a lower bound, i.e., $\inf_{X \in {\cal S}} f(X) > -\infty$ in this work.

\paragraph{Compression} In the distributed setting, the communication of parameters and gradients can be much slower than the local computation for large-scale neural networks \cite{kairouz2021advances}. In order to remedy this communication bottleneck, compression is a well-known and efficient strategy \cite{seide20141,alistarh2017qsgd,qian2021error}. There are mainly two types of compressors:  the unbiased compressor and contraction compressor defined as follows. 
\begin{definition}[Unbiased compressor]\label{df:Qi}
	A random operator $Q_i: \mathcal{S}_i \to \mathcal{S}_i$ with the properties 
	\begin{equation}\label{eq:Qi}
		\E_{Q_i} [Q_i (X)] = X, \ \E_{Q_i} [\|Q_i(X)\|_2^2] \leq (\omega + 1) \|X\|_2^2, 
	\end{equation}
	for all $X \in \mathcal{S}_i$ is an $\omega$-compression operator. 
\end{definition}

\begin{definition}[Contraction compressor]\label{df:Ci}
	A random operator $\C_i: \mathcal{S}_i \to \mathcal{S}_i$ is called a {contraction compressor} if there exist a constant $0\leq \delta \leq 1$ such that for all $X \in \mathcal{S}_i$, 
	\begin{equation}\label{eq:Ci}
		\E \left[  \| X - \C_i(X)\|_2^2 \right] \leq (1-\delta) \|X\|_2^2. 
	\end{equation}
\end{definition}

Common unbiased compressors include random sparsification \cite{stich2018sparsified}, random dithering \cite{alistarh2017qsgd}, and natural compression \cite{horvoth2022natural}. RandK and TopK \cite{stich2018sparsified} are frequently used contraction compressors. For more examples, see \cite{beznosikov2023biased, demidovich2023guide}. When the contraction compressor is directly used, gradient descent algorithm could diverge \cite{beznosikov2023biased}, and error feedback mechanism \cite{seide20141,qian2021error,richtarik2021ef21} need to be used to guarantee the convergence.

\paragraph{Momentum variance reduction} MVR is first proposed in STORM \cite{cutkosky2019momentum} to get faster convergence rate than SGD with momentum. There are also several works that incorporate MVR into Muon-type methods, for instance, \cite{sfyraki2025lions,chang2025convergence,huang2025limuon,qian2025muon}, and the improved convergence rates are achieved under different conditions.

While there are several distributed muon-type methods \cite{liu2025muon,therien2025muloco,ahn2025dion} showing the superior empirical performance, the theoretical convergence is only guaranteed in \cite{gruntkowska2025error} as far as we know, where the contraction compressor and EF21 \cite{richtarik2021ef21} are applied to Muon. In this work, we aim to study distributed Gluon with both unbiased compressors and contraction compressors comprehensively. Our contributions are summarized below. 

\paragraph{Contributions}  1) We propose Compressed Gluon and Compressed Gluon with Error Feedback, with unbiased compressors and contraction compressors, respectively. The convergence rates and communication cost are analyzed. Specifically, the expected totally communication cost of them is ${\cal O}(\frac{\sqrt{n}d}{\epsilon^2})$ and ${\cal O}(\frac{nd}{\epsilon^2})$, respectively\footnote{We only list the dominant terms here. The full expressions can be found in corresponding sections.}, better than the ${\cal O}(\frac{nd}{\epsilon^3})$ cost of EF21-Muon with stochastic gradients in \cite{gruntkowska2025error}. Compressed Gluon reduces to VR-MARINA in certain specific settings, except for a different choice of step size, and achieves the same convergence rate as VR-MARINA in \cite{gorbunov2022marinafasternonconvexdistributed}. 

2) We incorporate MVR into Compressed Gluon and Compressed Gluon with Error Feedback. Comparable results are achieved, except that weaker conditions are needed when $L_i^1 \neq 0$. The communication cost and oracle complexity results for these four methods are summarized in Table \ref{tab:comparison}. 

3) We recover several new algorithms as special cases, including a new variance reduced algorithm with ${\cal O} (\frac{1}{n^{1/3}K^{1/3}})$ convergence rate, Local Gluon, and Local Gluon with Error Compensated MVR.

\begin{table*}[t] 
	\centering
	\caption{Summary of the results to reach the $\epsilon$-precision for the problem (\ref{p:p}), i.e., $\min_{k=0, ..., K-1} \sum_{i=1}^p t_i \E \left[  \|\nabla_i f(X^k) \|_{(i)\star} \right] \leq \epsilon$. Dependences on the numerical constants,``quality" of the starting point, smoothness constants, and Hessian variance constants, etc., are omitted in the complexity bounds. We assume the compression ratios of unbiased compressor and contraction compressor are $\Theta(\omega+1)$ and $\Theta(\frac{1}{\delta})$, respectively.  Abbreviations: ``Communication cost" = the total communication cost needed to reach the $\epsilon$-precision in expectation;  ``Oracle complexity" = the total number of (stochastic) first-order oracle calls needed to reach the $\epsilon$-precision in expectation. The value of parameter $c$ in the Communication cost is the same as in the corresponding Oracle complexity. }
	\label{tab:comparison}
	
	\small 
	
	\begin{tabular}{lcc}
		\toprule
		\textbf{Method} & \textbf{Communication Cost} & \textbf{Oracle Complexity} \\
		\midrule

		\textbf{Compressed Gluon} & ${\cal O}\left(  nd + \frac{1}{c^2 \epsilon^4 \sqrt{n}} + \frac{\sqrt{nd}}{\epsilon^2}  \right)$ & ${\cal O}\left(  \frac{1}{\epsilon^2} + (1 + \frac{1}{c^2}) \frac{1}{\epsilon^4 \sqrt{n}}  \right)$, for any $0< c^2 \leq \frac{1}{\epsilon^2 n^{3/2}}$ \\
		\midrule

		\textbf{Compressed Gluon with MVR} & ${\cal O}\left(  nd + \frac{1}{c^2 \epsilon^4 \sqrt{n}} + \frac{\sqrt{nd}}{\epsilon^2}  \right)$ & ${\cal O}\left(  \frac{1}{\epsilon^2} + (1 + \frac{1}{c^2}) \frac{1}{\epsilon^4 \sqrt{n}}  \right)$, for any $0< c^2 \leq \frac{1}{\epsilon^2 n^{3/2}}$ \\
		\midrule

		\textbf{Compressed Gluon with Error Feedback} & ${\cal O}\left(  \frac{1}{c^2 \epsilon^4}  + \frac{nd}{\epsilon^2}  \right)$& ${\cal O}\left(  \frac{1}{c^2 \epsilon^4}  +  \frac{c^2}{\epsilon^2} \right)$, for any $0< c^2 \leq \frac{1}{\epsilon^2 n}$ \\
		\midrule

		\textbf{\makecell[l]{\textbf{Compressed Gluon with Error } \\ \textbf{Feedback and MVR}}} & ${\cal O}\left(  \frac{1}{c^2 \epsilon^4}  + \frac{nd}{\epsilon^2}  \right)$& ${\cal O}\left(  \frac{1}{c^2 \epsilon^4}  +  \frac{c^2}{\epsilon^2} \right)$, for any $0< c^2 \leq \frac{1}{\epsilon^2 n}$ \\
		\bottomrule
	\end{tabular}
\end{table*}

\section{Compressed Gluon}

	First we introduce the assumptions. To explore the layer-wise geometry of neural networks, the layer-wise $(L^0, L^1)$-smoothness assumption \citep{riabinin2025gluon} is used. 
	
	\begin{assumption}\label{as:L0L1smooth}
		The function $f: \mathcal{S} \mapsto \mathbb{R}$ is layer-wise $(L^0, L^1)$-smooth with constants $L^0 := (L^0_1, \ldots, L^0_p) \in \mathbb{R}_+^p$ and $L^1 := (L^1_1, \ldots, L^1_p) \in \mathbb{R}_+^p$. That is, the inequality 
		\begin{eqnarray*}
			&&\quad  \|\nabla_i f(X) - \nabla_i f(Y)\|_{(i)\star} \\ 
			&& \leq \left(L^0_i + L^1_i \|\nabla_i f(X)\|_{(i)\star}\right) \|X_i - Y_i\|_{(i)}
			\label{eq:layer_smooth}
		\end{eqnarray*}
		holds for all $i = 1, \ldots, p$ and all $X = [X_1, \ldots, X_p] \in \mathcal{S}$, $Y = [Y_1, \ldots, Y_p] \in \mathcal{S}$, where $\|\cdot\|_{(i)\star}$ is the dual norm associated with $\|\cdot\|_{(i)}$, , $\nabla f(X) = [\nabla_1 f(X), \dots, \nabla_p f(X)]$. 
	\end{assumption}

	\begin{assumption}\label{as:Lismooth}
		There exists $L_i \geq 0$ such that 
		$$
		\| \nabla_i f^\tau(X) - \nabla_i f^\tau (Y) \|_2^2 \leq L_i^2 \|X_i-Y_i\|_{(i)}^2, 
		$$
		$\forall X, Y \in {\cal S}, \ i = 1, ..., p$, and $\tau = 1,..., n$. 
	\end{assumption}
	
	\begin{assumption}\label{as:boundedvariance}
		The stochastic gradient estimator $\nabla f_{\xi_\tau} : {\cal S} \to {\cal S}$ is unbiased and has bounded variance for each $\tau \in \{  1, ..., n  \}$. That is, $\E_{\xi_\tau \sim {\cal D}_\tau} \left[  \nabla f_{\xi_\tau} (X)  \right] = \nabla f^\tau (X)$ for all $X \in {\cal S}$ and there exists $\sigma \geq 0$ such that
		$$
		\E_{\xi_\tau \sim {\cal D}_\tau} \left[  \|\nabla_i f_{\xi_\tau} (X) - \nabla_i f^\tau(X) \|_2^2 \right] \leq \sigma^2,
		$$
		$\forall X \in {\cal S}, \ i = 1, ..., p$, and $\tau = 1,..., n$. 
	\end{assumption}
	
	\begin{assumption}\label{as:rho}
		$\| X\|_{(i)\star} \leq \rho\|X\|_2$ for any $X$ and $i \in \{1, ..., p\}$. 
	\end{assumption}

	We also make the following Hessian Variance assumption \cite{qian2025muon}. 
	\begin{assumption}\label{as:HV}
		There exists $\delta_i \geq 0$ for all $i \in \{  1, ..., p  \}$ and $\tau \in \{1, ..., n\}$ such that 
		\begin{align}
			&\E [ \| \nabla_i f_{\xi_\tau} (X)  - \nabla_i f_{\xi_\tau} (Y) - (\nabla_i f^\tau(X) - \nabla_i f^\tau(Y))\|_2^2 ]  \nonumber  \\ 
			&\leq \delta_i^2 \|X_i-Y_i\|_{(i)}^2, \label{eq:HV}
		\end{align}
		where $\| \cdot\|_2$ is the Euclidean norm. 
	\end{assumption}

	The above $\delta_i^2$ can be bounded by the expected smoothness constant and norm equivalence constant. In fact, in the Euclidean case, Assumption \ref{as:HV} reduces to the average ${\cal L}$-smoothness Assumption (Assumption 3.2 there) in \cite{gorbunov2022marinafasternonconvexdistributed}. However, we prefer to call it Hessian varaince assumption since the left-hand-side in (\ref{eq:HV}) can be estimated by the variance of the Hessian matrix.

	Gluon \cite{riabinin2025gluon} extends Muon \cite{jordan6muon} to the layer-wise case, and the momentum for layer $i$ is $M_i^k = \beta M_i^{k-1} + (1-\beta) \nabla_i f_{\xi^k}(X^k)$, where $\beta$ is the momentum decay factor, $\xi^k$ is randomly sampled from the data points, and $X^k$ is the current model parameter. In the distributed setting, where $n$ worker nodes are available, the momentum would be $M_i^k = \beta M_i^{k-1} + (1-\beta) \cdot \frac{1}{n} \sum_{\tau=1}^n  \nabla_i f_{\xi_{\tau}^k}(X^k)$, where the stochastic gradient is calculated on each node and then aggregated. To reduce the communication cost, the stochastic gradient can be compressed by compressor $Q_i$. However, the compression error could be large since $\nabla_i f_{\xi_{\tau}^k}(X^k)$ does not converge to zero generally. A natural way is to introduce two auxiliary vectors $v_{1, i}$ and $v_{2, i}$, such that $v_{1, i}$ is close to $\nabla_i f_{\xi_{\tau}^k}(X^k)$ and $\E[v_{1, i}] = \E[v_{2, i}]$. Then the compression error of $Q_i(\nabla_i f_{\xi_{\tau}^k}(X^k) - v_{1, i})$ could be small, and 
	$$
	g_i^{\tau, k} = Q_i(\nabla_i f_{\xi_{\tau}^k}(X^k) - v_{1, i}) + v_{2, i}
	$$
	can be an stochastic estimator of $\nabla_i f^{\tau} (X^k)$ since $\E [g_i^{\tau, k}] = \nabla_i f^{\tau} (X^k)$. Moreover, since the uncompressed vector $v_{2, i}$ need to be transferred, $v_{2, i}$ should be updated with low probability (controlled by $u^k$ in our case).

	The rest is how to choose $v_{1, i}$ and $v_{2, i}$. A possible choice is $v_{1, i} = \nabla_i f_{\xi_{\tau}^k}({\tilde W})$ and $v_{2, i} = \nabla_i f^{\tau} ({\tilde W})$, where ${\tilde W}$ is a checkpoint, which comes from the search direction in SVRG \cite{johnson2013accelerating} and loopless SVRG \cite{kovalev2020don}. However, a more suitable choice which we use is from the search direction in SARAH \cite{nguyen2017sarah} and PAGE \cite{li2021page}, where $v_{1, i} = \nabla_i f_{\xi_{\tau}^k}(X^{k-1})$, $v_{2, i} = g_i^{\tau, k-1}$, and $v_{2, i}$ is initialized with $\nabla_i f^{\tau}(X^0)$ and updated by the full gradient of $f^\tau ({\tilde W})$ at the checkpoint with low probability. The reason is that in this choice, the compression error of $Q_i(\nabla_i f_{\xi_{\tau}^k}(X^k) - v_{1, i})$ can be upper-bounded by some factor times $\|X^k - X^{k-1}\|$, which can be controlled directly by the radius in the linear minimization oracle.

	The full gradient is calculated in SARAH for finite-sum problem. While for the more general problem (\ref{p:p}) we considered, the calculation of full gradient may not be available. Hence, we use a large batch of stochastic gradients for $v_{2, i}$ when needed instead, which leads to Compressed Gluon, i.e., Algorithm \ref{alg:q-cgluon}. In the Euclidean case and with $p=1$, $t_i =1$, $\eta= \gamma \|M_i^k\|_2$ at the $k$-th step, and $\beta \equiv 0$, Algorithm  \ref{alg:q-cgluon} actually reduces to VR-MARINA \cite{gorbunov2022marinafasternonconvexdistributed}.  We have the following convergence results for Compressed Gluon.

	\begin{theorem}\label{th:cs-Gluon}
	Let Assumptions \ref{as:L0L1smooth}, \ref{as:Lismooth}, \ref{as:boundedvariance}, \ref{as:rho}, and \ref{as:HV} hold. Assume each $Q_i$ in Algorithm \ref{alg:q-cgluon} is the unbiased compressor satisfying (\ref{eq:Qi}).   Let $X^0, ..., X^{K-1}$ be the iterates of Algorithm \ref{alg:q-cgluon}, and $M_i^0 = \frac{1}{Bn} \sum_{\tau=1}^n \sum_{j=1}^B \nabla_i f_{\xi_{\tau, j}^k} (X^0)$. 
	
	1. If $L_i^1 =0$, then for $0<q\leq 1$, 
	\begin{eqnarray}
		&& \min_{k=0, ..., K-1} \sum_{i=1}^p t_i \E \left[  \|\nabla_i f(X^k) \|_{(i)\star} \right]  \nonumber \\ 
		&\leq&  \frac{\Delta^0}{\eta K} + \frac{2\sum_{i=1}^p t_i \rho \sigma}{\alpha K \sqrt{Bn}} +   \frac{4\sqrt{\alpha}\sum_{i=1}^pt_i \rho \sigma}{\sqrt{(2-\alpha)(\alpha+\beta q)} \sqrt{Bn}}  \nonumber  \\ 
		&& +    \frac{2\sqrt{2(1-q)\alpha} \sum_{i=1}^p\sqrt{(\omega+1)\delta_i^2 + \omega L_i^2} t_i^2 \rho \eta}{\sqrt{(2-\alpha)(\alpha+\beta q)} \sqrt{qn}}  \nonumber  \\ 
		&&  + \frac{2\sum_{i=1}^p L_i^0t_i^2 \eta}{\alpha} + \frac{\sum_{i=1}^p L_i^0 t_i^2 \eta}{2};  \label{eq:th-cs-Gluon-1}
	\end{eqnarray}
	for $q=0$, 
	\begin{eqnarray}
		&& \min_{k=0, ..., K-1} \sum_{i=1}^p t_i \E \left[  \|\nabla_i f(X^k) \|_{(i)\star} \right]  \nonumber \\ 
		&\leq&  \frac{\Delta^0}{\eta K} + \frac{2\sum_{i=1}^p t_i \rho \sigma}{\alpha K \sqrt{Bn}}  +   \frac{2\sqrt{2}\sum_{i=1}^pt_i \rho \sigma}{ \sqrt{Bn}}  \nonumber  \\ 
		&& +   \frac{2\sqrt{2} \sqrt{K}\sum_{i=1}^p\sqrt{(\omega+1)\delta_i^2 + \omega L_i^2} t_i^2 \rho \eta}{ \sqrt{n}}  \nonumber  \\ 
		&&  + \frac{2\sum_{i=1}^p L_i^0t_i^2 \eta}{\alpha} + \frac{\sum_{i=1}^p L_i^0 t_i^2 \eta}{2}.  \label{eq:th-cs-Gluon-2}
	\end{eqnarray}
	
	2. If $L_i^1 \neq 0$, we let $\frac{\eta}{\alpha} \leq \min_{i}  \frac{1}{5L_i^1 t_i}$. Then $\left(  \frac{2}{\alpha} + \frac{1}{2}  \right) L_i^1 t_i \eta \leq \frac{1}{2}$ for all $i$, and the inequalities (\ref{eq:th-cs-Gluon-1}) and (\ref{eq:th-cs-Gluon-2}) remain valid when their right-hand sides are multiplied by a factor $2$. 
\end{theorem}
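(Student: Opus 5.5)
The plan is the standard LMO-descent argument of Gluon, combined with a SARAH/MARINA-type variance recursion for the compressed estimator $g_i^k=\frac1n\sum_\tau g_i^{\tau,k}$. I assume the update is $X_i^{k+1}=X_i^k+t_i\eta\,\mathrm{LMO}_{\|\cdot\|_{(i)}\le 1}(M_i^k)$, so that $\|X_i^{k+1}-X_i^k\|_{(i)}=t_i\eta$. I also assume the momentum is a convex combination of $M_i^{k-1}$ and $g_i^k$ with weight $\alpha$ on the new estimator, while $\beta$ enters through how the estimator itself is mixed with the checkpoint.

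\textbf{Step 1 (descent).} From Assumption \ref{as:L0L1smooth}, integrating along the segment gives
\[
f(X^{k+1})\le f(X^k)+\langle\nabla f(X^k),X^{k+1}-X^k\rangle+\sum_i \tfrac{L_i^0+L_i^1\|\nabla_i f(X^k)\|_{(i)\star}}{2}t_i^2\eta^2 .
\]
Using the LMO property $\langle M_i^k,X_i^{k+1}-X_i^k\rangle=-t_i\eta\|M_i^k\|_{(i)\star}$ and the triangle inequality, I get
\[
f(X^{k+1})\le f(X^k)-\eta\sum_i t_i\|\nabla_i f(X^k)\|_{(i)\star}+2\eta\sum_i t_i\|M_i^k-\nabla_i f(X^k)\|_{(i)\star}+\sum_i\tfrac{L_i^0+L_i^1\|\nabla_i f(X^k)\|_{(i)\star}}{2}t_i^2\eta^2 .
\]

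\textbf{Step 2 (momentum error).} I split $M_i^k-\nabla_i f(X^k)$ by unrolling the recursion into two parts.

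The first is a deterministic drift, $\sum_{j}(1-\alpha)^{k-j}\bigl(\nabla_i f(X^{j-1})-\nabla_i f(X^j)\bigr)$. Each increment is at most $(L_i^0+L_i^1\|\nabla_i f\|)t_i\eta$, so the geometric sum gives the $2L_i^0t_i^2\eta/\alpha$ term.

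The second is a weighted sum of estimator errors $e_i^j=g_i^j-\nabla_i f(X^j)$, plus the initial error $M_i^0-\nabla_if(X^0)$. The initial error is bounded via $\rho$ by $\rho\sigma/\sqrt{Bn}$ and decays like $(1-\alpha)^k$; summing over $k$ gives the $\frac{2\rho\sigma}{\alpha K\sqrt{Bn}}$ term.

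For the estimator errors, I first convert the dual norm to the Euclidean one with Assumption \ref{as:rho}, then apply Jensen to move to second moments. Next I derive the SARAH-type recursion. Conditioning on the coin $u^k$ and using unbiasedness of $Q_i$, of $\xi$, and independence across workers, Assumptions \ref{as:Lismooth} and \ref{as:HV} with \eqref{eq:Qi} should give
\[
\E\|e_i^k\|_2^2\le (1-q)\Bigl(\E\|e_i^{k-1}\|_2^2+\tfrac{(\omega+1)\delta_i^2+\omega L_i^2}{n}t_i^2\eta^2\Bigr)+q\tfrac{\sigma^2}{Bn}.
\]
Because the $e_i^j$ are martingale-like, the cross terms in the momentum sum need care. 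I would set up a joint Lyapunov recursion for $\E\|\sum_j w_j e_i^j\|^2$, in which the $(1-q)$ contraction combines with the $(1-\alpha)$ decay. That is the source of the factor $\frac{\alpha}{(2-\alpha)(\alpha+\beta q)}$: $\sum(1-\alpha)^{2j}=\frac1{\alpha(2-\alpha)}$, multiplied by $\alpha^2$.

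The stationary solution yields the $\sigma$ term and the $\sqrt{(1-q)/q}$ compression term. When $q=0$ the estimator error simply accumulates linearly in $k$, which produces the $\sqrt K$ factor in \eqref{eq:th-cs-Gluon-2}.

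\textbf{Step 3 (telescoping).} Sum Step 1 over $k=0,\dots,K-1$, take expectations, insert the bounds from Step 2, divide by $\eta K$, and bound the minimum by the average. This gives \eqref{eq:th-cs-Gluon-1} and \eqref{eq:th-cs-Gluon-2}, with $\Delta^0=f(X^0)-\inf f$.

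\textbf{Step 4 ($L_i^1\ne0$).} The extra terms are $\bigl(\frac2\alpha+\frac12\bigr)L_i^1t_i^2\eta\,\|\nabla_if\|_{(i)\star}$ per unit of $\eta$, coming from the drift sum (after exchanging the order of summation) and from the descent remainder. The condition $\frac{\eta}{\alpha}\le\frac1{5L_i^1t_i}$ gives $\bigl(\frac2\alpha+\frac12\bigr)L_i^1t_i\eta\le\frac12$. So these terms are absorbed into half of the left-hand side, and multiplying through by $2$ gives the claim.

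\textbf{Main obstacle.} I expect the main obstacle to be the cross-correlation handling in Step 2. The errors $e^j$ and $e^{j'}$ are not independent, because $e^j$ carries $e^{j-1}$ forward when $u^j=0$. Getting exactly the denominator $(2-\alpha)(\alpha+\beta q)$ requires a carefully tracked joint recursion of $\E\|M-\nabla f\|^2$ and $\E\|e\|^2$, rather than a crude triangle inequality. I would first try a Lyapunov function $\E\|M_i^k-\bar M_i^k\|^2+c\,\E\|e_i^k\|^2$, where $\bar M_i^k$ is the momentum built from true gradients, and tune $c$.
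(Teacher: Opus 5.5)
Your Steps 1, 3 and 4 match the paper's proof, as do the three-part decomposition of $M_i^k-\nabla_i f(X^k)$ (initial error, drift, weighted estimator errors) and your one-step recursion for $\E\|g_i^k-\nabla_i f(X^k)\|_2^2$. The gap is the step you flag as the main obstacle, and it is the heart of the theorem.

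With $\gamma_i^j\eqdef g_i^j-\nabla_i f(X^j)$ and $A_i^k\eqdef\sum_{j=1}^k\alpha\beta^{k-j}\gamma_i^j$, the statement requires
\[
\E\|A_i^k\|_2^2\le\frac{2\alpha}{(2-\alpha)(\alpha+\beta q)}\,\sup_j\E\|\gamma_i^j\|_2^2 .
\]
Your Lyapunov function $\E\|M_i^k-\bar M_i^k\|_2^2+c\,\E\|\gamma_i^k\|_2^2$ has no cross term. Its recursion contains $2\alpha\beta\,\E\langle A_i^{k-1},\gamma_i^k\rangle$, which you would have to bound by Young or Cauchy--Schwarz. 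For any $c$ and any Young parameter, this gives at best a bound of order $\sup_j\E\|\gamma_i^j\|_2^2$. You therefore lose the factor $\frac{\alpha}{\alpha+\beta q}$, which is small exactly when $\alpha\ll q$, the regime the communication analysis uses. As written, the plan does not deliver \eqref{eq:th-cs-Gluon-1}.

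The missing ingredient is the exact correlation structure of the SARAH-type errors. The coin $u^s$ is independent of the history and of $X^s$, and $Q_i$ and $\xi_\tau^s$ are unbiased, so $\E[\gamma_i^s\mid\text{history},X^s]=(1-q)\gamma_i^{s-1}$. Hence $\E\langle\gamma_i^j,\gamma_i^s\rangle=(1-q)^{s-j}\E\|\gamma_i^j\|_2^2$ for $j<s$.

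The paper expands $\E\|A_i^k\|_2^2$ into diagonal and off-diagonal terms and inserts this identity. It then uses the uniform bound $\E\|\gamma_i^j\|_2^2\le\frac{2\sigma^2}{Bn}+\frac{(1-q)t_i^2\eta^2}{q}\cdot\frac{(\omega+1)\delta_i^2+\omega L_i^2}{n}$. Finally it sums the double geometric series:
\[
\sum_j\alpha^2\beta^{2(k-j)}+2\sum_{j<s}\alpha^2\beta^{2k-j-s}(1-q)^{s-j}\le\frac{\alpha}{2-\alpha}\cdot\frac{1+\beta(1-q)}{1-\beta(1-q)}\le\frac{2\alpha}{(2-\alpha)(\alpha+\beta q)} .
\]
Equivalently, you could repair your recursion by carrying the cross moment $C_i^k\eqdef\E\langle A_i^k,\gamma_i^k\rangle$, which satisfies $C_i^k=\beta(1-q)C_i^{k-1}+\alpha\E\|\gamma_i^k\|_2^2$.

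This also corrects your reading of the constants. $\beta$ is simply the momentum factor, with $\alpha=1-\beta$; it plays no role in mixing with the checkpoint. The quantity $\alpha+\beta q=1-\beta(1-q)$ is the combined decay rate. Your count $\alpha^2\sum_j(1-\alpha)^{2j}=\frac{\alpha}{2-\alpha}$ covers only the diagonal terms; the factor $\frac{2}{\alpha+\beta q}$ comes from the off-diagonal terms.

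For $q=0$ no such care is needed. With $\sum_j w_j\le 1$, $\|\sum_j w_j\gamma_i^j\|_2^2\le\sum_j w_j\|\gamma_i^j\|_2^2$. Combined with the linear growth of $\E\|\gamma_i^j\|_2^2$, this already gives \eqref{eq:th-cs-Gluon-2}.
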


\begin{algorithm}[H]
	\caption{Compressed Gluon}
	\label{alg:q-cgluon}
	\begin{algorithmic}[1]
		\STATE \textbf{Input:} Initial model parameters $X^0 = [X_1^0, \dots, X_p^0] \in \mathcal{S}$, momentum $M^0 = [M_1^0, \dots, M_p^0] \in \mathcal{S}$, momentum decay factors $\beta \in [0, 1)$ for all iterations $k \geq 0$, probability $q \in (0, 1]$, batch size $B$, stepsize parameter $\eta>0$, $u^0=1 \in \R$ 
		\FOR{ $k = 0, 1, 2, ..., K-1$}
		\FOR{ $\tau = 1, ..., n$} 
		\STATE $u^{k+1}_\tau = 0$ for $\tau = 2, ..., n$
		\STATE $
		u^{k+1}_1 = \left\{ \begin{array}{rl}
			1 & \mbox{ with probability $q$} \\
			0 &\mbox{ with probability $1-q$}
		\end{array} \right.
		$\\
		\IF{$u^k = 1$}
		\STATE Sample $\xi^k_{\tau, j} \sim \mathcal{D_\tau}$ independently for $j = 1, ..., B$
		\STATE $g_i^{\tau, k} = \frac{1}{B} \sum_{j=1}^B \nabla_i f_{\xi^k_{\tau, j}} (X^k)$ for $i = 1, ..., p$ 
		\STATE Send $g_i^{\tau, k}$ and $u^{k+1}_\tau$ to the other nodes
		\STATE Receive $g_i^{\tau, k}$ and $u^{k+1}_\tau$ from the other nodes
		\ELSE
		\STATE Sample $\xi^k_\tau \sim \mathcal{D_\tau}$ 
		\STATE  $y_i^{\tau, k} = Q_i\left(  \nabla_i f_{\xi^k_\tau} (X^k) - \nabla_i f_{\xi^k_\tau} (X^{k-1})  \right)$ for $i = 1, ..., p$ 
		\STATE Send $y_i^{\tau, k}$ and $u^{k+1}_\tau$ to the other nodes
		\STATE Receive $y_i^{\tau, k}$ and $u^{k+1}_\tau$ from the other nodes
		\ENDIF

		\ENDFOR
		\STATE 
		$
		g_i^k= \left\{ \begin{array}{cl}
			\frac{1}{n} \sum_{\tau=1}^n g_i^{\tau, k} & \mbox{ if $u^k = 1$} \\
			g_i^{k-1} + \frac{1}{n} \sum_{\tau=1}^n  y_i^{\tau, k}&\mbox{ otherwise}
		\end{array} \right.
		$
		\STATE $u^{k+1} = \sum_{\tau=1}^n u^{k+1}_\tau$
		\STATE Update momentum $M_i^k = \beta M_i^{k-1} + (1-\beta) g_i^k$ for layer $i$ 
		\STATE Choose adaptive stepsize/radius $t_i \eta > 0$ for layer $i$
		\STATE Update parameters for layer $i$ via LMO over $\mathcal{B}_i^k := \{X_i \in \mathcal{S}_i : \|X_i - X_i^k\|_{(i)} \leq t_i \eta\}$:
		\begin{equation}\label{eq:updateCon}
			X_i^{k+1} = \text{LMO}_{\mathcal{B}_i^k}(M_i^k) := \arg \min_{X_i \in \mathcal{B}_i^k} \langle M_i^k, X_i \rangle_{(i)}
		\end{equation}
		\STATE Update full parameter vector $X^{k+1} = [X_1^{k+1}, \dots, X_p^{k+1}]$
		\ENDFOR
	\end{algorithmic}
\end{algorithm}

From Theorem \ref{th:cs-Gluon}, we can get the following convergence rate under suitable choices of parameters. We defer the analysis of communication cost to Subsection \ref{subsec:comc-cG}. 

\begin{theorem}\label{th:cs-Gluon-rate}
	Under the premise of Theorem \ref{th:cs-Gluon}, let $\alpha = \Theta(1)$, $q>0$, and $\frac{\Delta^0}{\eta^2 K} = {\cal L}_1 \eqdef  \sum_{i=1}^p L_i^0t_i^2 + \frac{\sqrt{1-q}}{\sqrt{qn}} \sum_{i=1}^p \rho \sqrt{(\omega+1)\delta_i^2 + \omega L_i^2}t_i^2$. Then the upper-bound in (\ref{eq:th-cs-Gluon-1}) becomes 
	$$
	{\cal O} \left(  \frac{\sqrt{\Delta^0 {\cal L}_1} }{\sqrt{K}}  +  \frac{\sum_it_i \rho \sigma}{\sqrt{Bn}}  \right). 
	$$
\end{theorem}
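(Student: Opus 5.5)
The plan is to substitute the prescribed parameters directly into the right-hand side of (\ref{eq:th-cs-Gluon-1}) from Theorem \ref{th:cs-Gluon} and bound each of its six terms. Since $\alpha=\Theta(1)$ with $\alpha\in(0,1]$, the factors $\frac{1}{\alpha}$, $\frac{1}{2-\alpha}$ and $\sqrt{\alpha}$ are all absolute constants. We also use $\alpha+\beta q\geq \alpha$, which gives $\frac{\sqrt{\alpha}}{\sqrt{(2-\alpha)(\alpha+\beta q)}}\leq \frac{1}{\sqrt{2-\alpha}}\leq 1$. With these reductions, (\ref{eq:th-cs-Gluon-1}) becomes, up to absolute constants,
\[
\frac{\Delta^0}{\eta K} + \frac{\sum_i t_i\rho\sigma}{K\sqrt{Bn}} + \frac{\sum_i t_i\rho\sigma}{\sqrt{Bn}} + \eta\,\frac{\sqrt{1-q}}{\sqrt{qn}}\sum_i \rho\sqrt{(\omega+1)\delta_i^2+\omega L_i^2}\,t_i^2 + \eta\sum_i L_i^0 t_i^2 .
\]
The last two terms together are exactly $\eta\,{\cal L}_1$ up to a constant factor. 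The second term is dominated by the third because $K\geq 1$.

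The key step is balancing $\frac{\Delta^0}{\eta K}$ against $\eta{\cal L}_1$. The stated choice $\frac{\Delta^0}{\eta^2K}={\cal L}_1$ means $\eta=\sqrt{\Delta^0/(K{\cal L}_1)}$. Then
\[
\frac{\Delta^0}{\eta K}=\sqrt{\frac{\Delta^0{\cal L}_1}{K}}, \qquad \eta{\cal L}_1=\sqrt{\frac{\Delta^0{\cal L}_1}{K}}.
\]
Adding the $\sigma$-terms then yields ${\cal O}\big(\sqrt{\Delta^0{\cal L}_1}/\sqrt{K}+\sum_i t_i\rho\sigma/\sqrt{Bn}\big)$, which is the claimed bound.

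The only point needing care is bookkeeping of the constant prefactors. In particular, the factor $2\sqrt{2}$ together with $\frac{\sqrt{\alpha}}{\sqrt{(2-\alpha)(\alpha+\beta q)}}\leq 1$ must be absorbed into ${\cal O}(\cdot)$. We should also confirm that ${\cal L}_1$ matches the $\eta$-coefficients in (\ref{eq:th-cs-Gluon-1}) up to constants, which holds since $\frac{2}{\alpha}+\frac{1}{2}=\Theta(1)$. If $L_i^1\neq 0$, part 2 of Theorem \ref{th:cs-Gluon} only doubles the right-hand side, which does not change the ${\cal O}$ bound, provided the constraint $\eta/\alpha\leq\min_i 1/(5L_i^1t_i)$ holds. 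That constraint is satisfied for $K$ large, since $\eta\to0$. We do not expect any step to be a real obstacle.
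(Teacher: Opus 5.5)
Your proposal is correct and is the substitution-and-balancing argument the paper leaves implicit; it states this result as an immediate consequence of Theorem~\ref{th:cs-Gluon} without a separate proof. With $\alpha=\Theta(1)$ and $\alpha+\beta q\geq\alpha$, every prefactor in (\ref{eq:th-cs-Gluon-1}) is an absolute constant, the $\eta$-coefficients add up to $\Theta({\cal L}_1)$, and $\eta=\sqrt{\Delta^0/(K{\cal L}_1)}$ makes $\frac{\Delta^0}{\eta K}$ and $\eta{\cal L}_1$ both equal to $\sqrt{\Delta^0{\cal L}_1/K}$.
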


\paragraph{Comparison with VR-MARINA} 

As mentioned earlier, Compressed Gluon reduces to VR-MARINA in certain specific settings, except for a different choice of step size. Next we show that in the Euclidean case, we also recover the convergence rate of VR-MARINA in \cite{gorbunov2022marinafasternonconvexdistributed}. In fact, when $p=1$, $\|\cdot\|_{(i)} = \|\cdot\|_2$, $t_i=1$, $\rho=1$, and under the parameter setting in Theorem \ref{th:cs-Gluon-rate}, to reach the precision $\min_{k=0, ..., K-1} \E [\|\nabla f(X^k)\|_2] \leq \epsilon$, it is sufficient to choose $B = \Theta(\frac{\sigma^2}{n\epsilon^2})$ and $K = {\cal O} \left(  \frac{\Delta^0}{\epsilon^2} \left(  L_i^0 + \frac{\sqrt{1-q}}{\sqrt{qn}} \sqrt{\omega L_i^2 + (\omega+1) \delta_i^2}  \right)  \right)$, which is the same as that of VR-MARINA (Theorem 3.2 there) in \cite{gorbunov2022marinafasternonconvexdistributed}. Our results can be extended to the minibatch case easily. In fact, if we choose random samples uniformly from ${\cal D}_\tau$ for $u^k \neq 1$ with minibatch size $b$, then the results in Theorems \ref{th:cs-Gluon} and \ref{th:cs-Gluon-rate} also hold by replacing $\delta_i^2$ with $\frac{\delta_i^2}{b}$.

\subsection{Gluon in the Minibatch Case}
Let $q=1$ and $B=1$ in Algorithm \ref{alg:q-cgluon}. Then we recover Gluon with batch size $n$. If $L_i^1 = 0$, from Theorem \ref{th:cs-Gluon}, we can get 
\begin{eqnarray}
	&& \min_{k=0, ..., K-1} \sum_{i=1}^p t_i \E \left[  \|\nabla_i f(X^k) \|_{(i)\star} \right] \nonumber \\ 
	&\leq&  \frac{\Delta^0}{\eta K} + \frac{2\sum_{i=1}^p t_i \rho \sigma}{\alpha K \sqrt{n}} +   \frac{4\sqrt{\alpha}\sum_{i=1}^pt_i \rho \sigma}{\sqrt{(2-\alpha)} \sqrt{n}} \nonumber \\ 
	&&   + \frac{2\sum_{i=1}^p L_i^0t_i^2 \eta}{\alpha} + \frac{\sum_{i=1}^p L_i^0 t_i^2 \eta}{2}. \label{eq:Gluon-minibatch}
\end{eqnarray}
It is easy to verify that the upper-bound in (\ref{eq:Gluon-minibatch}) is at least of order $\frac{1}{(Kn)^{1/4}}$. By choosing $\eta = n^{\frac{1}{4}} K^{-\frac{3}{4}}$ and $\alpha = n^{\frac{1}{2}} K^{-\frac{1}{2}}$, we can obtain 
\begin{align}
	& \min_{k=0, ..., K-1} \sum_{i=1}^p t_i \E \left[  \|\nabla_i f(X^k) \|_{(i)\star} \right] \nonumber \\ 
	&\leq  \frac{\Delta^0}{n^{1/4} K^{1/4}} + \frac{2\sum_{i=1}^p t_i \rho \sigma}{n \sqrt{K}} +   \frac{4 \sum_{i=1}^pt_i \rho \sigma}{\sqrt{(2-\alpha)} n^{1/4}K^{1/4}} \nonumber \\
	&  + \frac{2\sum_{i=1}^p L_i^0t_i^2 }{n^{1/4}K^{1/4}} + \frac{\sum_{i=1}^p L_i^0 t_i^2 n^{1/4}}{2K^{3/4}}. \label{eq:Gluon-minibatch-2}
\end{align}

	\paragraph{Linear speed up with batch size $n$}

For $n\leq K$, from (\ref{eq:Gluon-minibatch-2}), we know that to reach the precision $ \min_{k=0, ..., K-1} \sum_{i=1}^p t_i \E \left[  \|\nabla_i f(X^k) \|_{(i)\star} \right] \leq \epsilon$, it is sufficient to choose $K = \frac{1}{\epsilon^4 n}$ \footnote{We omit the constant factors for simplicity, a convention that applies throughout the rest of the paper.}. From $n\leq K$, we have $n \leq \frac{1}{\epsilon^2}$. Thus, we have linear speed up for $1\leq n \leq \frac{1}{\epsilon^2}$, and in this case, $\eta = n^{\frac{1}{4}} K^{-\frac{3}{4}} = n \epsilon^3$ and $\alpha = n^{\frac{1}{2}} K^{-\frac{1}{2}} = n\epsilon^2$. For $n\geq K$, by choosing $\eta = K^{-\frac{1}{2}}$ and $\alpha = \Theta(1)$, from (\ref{eq:Gluon-minibatch}), we have $\min_{k=0, ..., K-1} \sum_{i=1}^p t_i \E \left[  \|\nabla_i f(X^k) \|_{(i)\star} \right]  \leq  {\cal O} \left(  \frac{1}{ K^{1/2}}  \right)$, and to reach the $\epsilon$ precision, we can choose $K = \frac{1}{\epsilon^2}$. 

For $n \leq K$, the stochastic gradient oracle is $Kn = \frac{1}{\epsilon^4}$. Assume the data sampled from distribution $\mathcal{D_\tau}$ each time is different, which is reasonable for a sufficient large data distribution. Then the number of data $N$ should be $Kn$, which is $ \frac{1}{\epsilon^4}$.

If $L_i^1 \neq 0$, from Theorem \ref{th:cs-Gluon}, we can obtain the above results similarly as long as $K n\geq \left( \max_i 5L_i^1t_i \right)^4$ for $n\leq K$ and $K \geq \left( \max_i 5L_i^1t_i \right)^2$ for $n\geq K$.

\subsection{A New Variance Reduced Algorithm with ${\cal O}\left(  \frac{1}{n^{1/3}K^{1/3}}  \right)$ Convergence Rate}\label{subsec:new-vr-alg}

We consider the uncompressed case, i.e., $\omega=0$. First we let $qB=1$ such that the expected stochastic gradient oracle is $qB + 2(1-q) \leq 3$ at each step on each working node. If $L_i^1=0$, then by choosing $\eta = \frac{n^{1/3}}{K^{2/3}}$, $q = \frac{n^{1/3}}{K^{2/3}}$, $B = \frac{1}{q} = \frac{K^{2/3}}{n^{1/3}}$, and $\alpha \geq \min\{  \frac{n^{2/3}}{K^{1/3}}, 1 \}$, from Theorem \ref{th:cs-Gluon}, we can get 
\begin{align*}
	& \min_{k=0, ..., K-1} \sum_{i=1}^p t_i \E \left[  \|\nabla_i f(X^k) \|_{(i)\star} \right] \\ 
	&\leq \frac{\Delta^0}{n^{1/3}K^{1/3}} + \max\left\{  \frac{2\sum_{i=1}^p t_i \rho \sigma}{n^{1/3}K^{4/3}},  \frac{2\sum_{i=1}^p t_i \rho \sigma}{nK}  \right\} \\ 
	& + \frac{4\sum_{i=1}^p t_i \rho \sigma}{n^{1/3}K^{1/3}} + \frac{2\sqrt{2} \sum_{i=1}^p \delta_i t_i^2\rho}{n^{1/3}K^{1/3}} + \frac{n^{1/3}\sum_{i=1}^p L_i^0 t_i^2}{2K^{2/3}}   \\ 
	& + 2\sum_{i=1}^p L_i^0 t_i^2 \max\left\{ \frac{n^{1/3}}{K^{2/3}},  \frac{1}{n^{1/3}K^{1/3}}  \right\}. 
\end{align*}

\paragraph{Linear speed up with $n$} From the above inequality, for $n\leq \sqrt{K}$, to reach $\epsilon$ precision, it is sufficient to choose $K = \frac{1}{\epsilon^3 n}$, and $n\leq \sqrt{K}$ is equivalent to $n \leq \frac{1}{\epsilon}$. In this case, $\eta = q = n \epsilon^2$, $B = \frac{1}{n\epsilon^2}$, and $\alpha \geq n\epsilon$. For $n\geq \sqrt{K}$, from Theorem \ref{th:cs-Gluon}, by choosing $\eta = q = K^{-\frac{1}{2}}$, $B = \sqrt{K}$, and $\alpha = \Theta(1)$, we have $\min_{k=0, ..., K-1} \sum_{i=1}^p t_i \E \left[  \|\nabla_i f(X^k) \|_{(i)\star} \right]  \leq  {\cal O} \left(  \frac{1}{ K^{1/2}}  \right)$, and to reach the $\epsilon$ precision, we can choose $K = \frac{1}{\epsilon^2}$. 

For $n \leq \sqrt{K}$, the stochastic gradient oracle is $Kn = \frac{1}{\epsilon^3}$. Assume the data sampled from distribution $\mathcal{D_\tau}$ each time is different. Then the number of data $N$ should be $Kn$, which is $ \frac{1}{\epsilon^3}$. 

If $L_i^1 \neq 0$, we can get the above results as long as $\frac{\eta}{\alpha} \leq \min_i \frac{1}{5L_i^1 t_i}$, i.e., $Kn  \geq  (\max_i 5L_i^1 t_i)^3$ for $n\leq \sqrt{K}$, and $K \geq \left( \max_i 5L_i^1t_i \right)^2$ for $n\geq \sqrt{K}$.

\subsection{Communication Cost Analysis}\label{subsec:comc-cG}

In this subsection, we consider the compressed case. For the rand-r compressor \cite{stich2018sparsified}, i.e., the random sparsification with sparsity parameter $r$, we have $\omega = \frac{d}{r} - 1$. The compression ratio for rand-r compressor is $\frac{d}{r} = \omega+1$. Hence in the following analysis, we assume the compression ratio of the unbiased compressor $Q_i$ in Definition \ref{df:Qi} is $\Theta(\omega+1)$. Then the expected communication cost of each node at each step is 
\begin{equation}\label{eq:comcost-k-cg}
	\Theta\left(n \left(1 + qd + \frac{(1-q)d}{\omega+1} \right) \right), 
\end{equation}
where $1$ comes from the communication of $u_\tau^{k+1}$, and $d$ is the dimension of $ \mathcal{S}$. If $q=\Theta(1)$, then (\ref{eq:comcost-k-cg}) is $\Theta(nd)$, which is the same as that of uncompressed algorithms. Hence we assume $(1-q) = \Theta(1)$ in the compressed case. 

It can be  shown that from the upper-bounds in Theorem \ref{th:cs-Gluon}, the expected totally communication cost is at least ${\cal O}\left(  \frac{\sqrt{n}d}{\epsilon^2}  \right)$ when  $n\leq \frac{1}{\epsilon^4}$, which generally holds in practice. We give the parameter settings that can achieve this communication cost here. The detailed anallysis can be found in the Appendix.

For the case where $L_i^1 = 0$, by choosing $q = \frac{cn^{1/4}}{K^{1/2}}$, $\omega = \frac{1}{q} - 1$, $\eta = \frac{\sqrt{c}n^{3/8}}{K^{3/4}}$, $B = \frac{cK^{1/2}}{n^{1/4}}$, and $\alpha \geq \max\{  \frac{cn^{3/4}}{K^{1/2}},  K^{-1}  \}$ for $0<c \leq  \frac{K^{1/2}}{n^{3/4}}$, we have $ \min_{k=0, ..., K-1} \sum_{i=1}^p t_i \E \left[  \|\nabla_i f(X^k) \|_{(i)\star} \right] \leq  {\cal O} \left(  \frac{1}{\sqrt{c} n^{3/8}K^{1/4}} \right)$. To achieve the $\epsilon$ precision by Algorithm \ref{alg:q-cgluon}, it is sufficient to choose $K = \frac{1}{c^2 \epsilon^4 n^{3/2}}$, and the expected totally communication cost is 
\begin{equation}\label{eq:comcost-cG}
\Theta\left(  nd + \frac{1}{c^2 \epsilon^4 \sqrt{n}} + \frac{\sqrt{n}d}{\epsilon^2}  \right). 
\end{equation}
In this case, $q=c^2n\epsilon^2$, $B = \frac{1}{\epsilon^2 n}$, $\eta = c^2 n^{\frac{3}{2}} \epsilon^3$, $\alpha \geq \max\{  cn^{\frac{3}{2}}\epsilon^2,  c^2n^{\frac{3}{2}}\epsilon^4 \}$, and the expected stochastic gradient oracle is $\Theta\left(  \frac{1}{\epsilon^2} +  \left( 1 + \frac{1}{c^2}\right) \frac{1}{\epsilon^4 \sqrt{n}} \right)$. 
The dominate term in (\ref{eq:comcost-cG}) is $\frac{\sqrt{n}d}{\epsilon^2}$ for large $d$ in LLMs. For the uncompressed algorithm in subsection \ref{subsec:new-vr-alg}, the totally communication cost is $\Theta\left(  \frac{d}{\epsilon^3}  \right)$, which is larger than that in (\ref{eq:comcost-cG}) when $n \leq \frac{1}{\epsilon^2}$.

For the case where $L_i^1 \neq 0$, the above results also hold as long as $\frac{\eta}{\alpha} \leq \min_{i}  \frac{1}{5L_i^1 t_i}$, which generally holds for large $K$ in practice. 

\subsection{The Special Case Where $L_i^0=0$}

It is shown in \cite{riabinin2025gluon} that, $L_i^0$ is close to zero in practice. Hence we are interested to see how the communication cost changes in the special case where $L_i^0=0$. When $\frac{\sqrt{n}d}{\epsilon^2}  \geq \frac{n^{2/3}d}{\epsilon^{5/3} c_0^{1/3}}$, i.e., $n \leq \frac{c_0^2}{\epsilon^2}$, where $c_0 \eqdef \min_i \frac{1}{5L_i^1 t_i}$, the expected totally communication cost can be reduced to 
$
\Theta\left( nd + \frac{1}{\epsilon^3 c_0} + \frac{n^{2/3}d}{\epsilon^{5/3} c_0^{1/3}}  \right)
$. The detailed analysis is available in the Appendix.

\section{Compressed Gluon with MVR}

Momentum variance reduction is proposed in STORM \cite{cutkosky2019momentum} to improve the convergence rate for non-convex problems, where the term $\beta (\nabla_i f_{\xi^k}(X^k) - \nabla_i f_{\xi^k}(X^{k-1}))$ is added to the search direction in SGD with momentum. Since we calculate $(\nabla_i f_{\xi_{\tau}^k}(X^k) - \nabla_i f_{\xi_{\tau}^k}(X^{k-1}))$ on each node in Compressed Gluon, it is convenient to incorporate MVR into Compressed Gluon. There are only two revisions. One is that the compressed vector $y_i^{\tau, k} = Q_i\left(  \nabla_i f_{\xi^k_\tau} (X^k) - \nabla_i f_{\xi^k_\tau} (X^{k-1})  \right)$ is always calculated on each node and aggregated. The other is that $\beta y_i^k$ is added to the momentum $M_i^k$, where $y_i^k$ is the average of $y_i^{\tau, k}$. Then we get Algorithm \ref{alg:q-cgluon-mvr}, i.e., Compressed Gluon with MVR.

For limited space, we summarize the main results here. The algorithm and detailed analysis can be found in the Appendix. We only list the results for the $L_i^1=0$ case. For the case where $L_i^1 \neq 0$, the results also hold as long as $\eta \leq \min_{i}  \frac{1}{L_i^1 t_i}$, which generally holds for large $K$ in practice, and this condition is weaker than the one in Compressed Gluon, i.e., $\frac{\eta}{\alpha} \leq \min_{i}  \frac{1}{5L_i^1 t_i}$. 

\paragraph{Gluon-MVR-1 in the minibatch case} By setting $q=1$, $B=1$,  $\omega=0$, and $\xi_{\tau, 1}^k \equiv \xi_\tau^k$, we recover the Gluon-MVR-1 \cite{qian2025muon} with batch size $n$. When $n \leq \sqrt{K}$, the iteration complexity is ${\cal O} \left(  \frac{1}{n^{1/3} K^{1/3}}  \right)$, which implies linear speed up with respect to the batch size $n$.

\paragraph{Communication cost} For the case where $L_i^0=0$, by choosing $q = \frac{cn^{1/4}}{K^{1/2}}$, $\omega = \frac{1}{q}-1$, $\eta = \frac{\sqrt{c}n^{3/8}}{K^{3/4}}$, $B = \frac{cK^{1/2}}{n^{1/4}}$, and $\alpha \geq \max\{ q, K^{-1}\}$ for $0<c \leq \frac{K^{1/2}}{n^{3/4}}$, the expected totally communication cost is 
\begin{equation}\label{eq:comcost-cG-mvr}
 \Theta\left( nd +  \frac{1}{c^2 \epsilon^4 \sqrt{n}} + \frac{\sqrt{n}d}{\epsilon^2}  \right), 
\end{equation}
which is the same as (\ref{eq:comcost-cG}).  In this case, $K = \frac{1}{c^2 \epsilon^4 n^{3/2}}$, $q=c^2n\epsilon^2$, $B = \frac{1}{\epsilon^2 n}$, $\eta = c^2 n^{\frac{3}{2}} \epsilon^3$, $\alpha \geq \max\{  c^2n \epsilon^2,  c^2n^{\frac{3}{2}}\epsilon^4 \}$, and the expected stochastic gradient oracle is $\Theta\left(  \frac{1}{\epsilon^2}  +  \left( 1 + \frac{1}{c^2}\right) \frac{1}{\epsilon^4 \sqrt{n}} \right)$.

\section{Compressed Gluon with Error Feedback}

In addition to the unbiased compressor used in Compressed Gluon, there is another type of compressor, i.e., the contraction compressor in Definition \ref{df:Ci}. Contraction compressor includes some well known compressors, such as TopK compressor \cite{stich2018sparsified}, which could be more efficient than the unbiased compressor in practice. However, contraction compressor could be biased and lead to diverge when directly used for gradient decent \cite{beznosikov2023biased}. Fortunately, the error compensation or error feedback mechanism \cite{seide20141} can resolve this issue. The key point in this mechanism is to use a vector ($e_i^{\tau, k}$ in our case) to store the compression error in each step, and add this vector to the transferred vector before compression in next step. Replacing the unbiased compressor $Q_i$ in Compressed Gluon with the contraction compressor $\C_i$ and applying the error feedback mechanism give rise to Algorithm \ref{alg:q-deltagluon} which we call Compressed Gluon with Error Feedback. It should be noticed that when $u^k=1$, the uncompressed vectors are communicated and there are no compression error in this case.  Thus, $e_i^{\tau, k+1}$ will be reset to zero when $u^k=1$.

For Compressed Gluon with Error Feedback, we can obtain the following convergence results. 

\begin{theorem}\label{th:deltaGluon}
	Let Assumptions \ref{as:L0L1smooth}, \ref{as:Lismooth}, \ref{as:boundedvariance}, \ref{as:rho}, and \ref{as:HV} hold. Assume each $\C_i$ in Algorithm \ref{alg:q-deltagluon} is the contraction compressor satisfying (\ref{eq:Ci}) and $q + \delta - q\delta >0$.   Let $X^0, ..., X^{K-1}$ be the iterates of Algorithm \ref{alg:q-deltagluon}, and $M_i^0 = \frac{1}{Bn} \sum_{\tau=1}^n \sum_{j=1}^B \nabla_i f_{\xi_{\tau, j}^k} (X^0)$. 
	
	1. If $L_i^1 =0$, then for $0<q\leq 1$, 
	\begin{align}
		& \min_{k=0, ..., K-1} \sum_{i=1}^p t_i \E \left[  \|\nabla_i f(X^k) \|_{(i)\star} \right]  \label{eq:th-deltaGluon-1} \\ 
		&\leq  \frac{\Delta^0}{\eta K} + \frac{2\sum_{i=1}^p t_i \rho \sigma}{\alpha K \sqrt{Bn}} +   \frac{4\sqrt{\alpha}\sum_{i=1}^pt_i \rho \sigma}{\sqrt{(2-\alpha)(\alpha+\beta q)} \sqrt{Bn}}  \nonumber  \\ 
		& +    \frac{2\sqrt{2(1-q)\alpha} \sum_{i=1}^p\delta_i t_i^2 \rho \eta}{\sqrt{(2-\alpha)(\alpha+\beta q)} \sqrt{qn}}  \nonumber \\ 
		&  + \frac{2\sum_{i=1}^p L_i^0t_i^2 \eta}{\alpha}   + \frac{\sum_{i=1}^p L_i^0 t_i^2 \eta}{2} \nonumber \\ 
		& + \frac{2\sqrt{2(1-q)(1-\delta)} \sum_i \sqrt{L_i^2 + (q+\delta-q\delta)\delta_i^2 }  t_i^2 \rho \eta}{q + \delta -q\delta}; \nonumber
	\end{align}
	for $q=0$, 
	\begin{align}
		& \min_{k=0, ..., K-1} \sum_{i=1}^p t_i \E \left[  \|\nabla_i f(X^k) \|_{(i)\star} \right]   \label{eq:th-deltaGluon-2}  \\ 
		&\leq   \frac{\Delta^0}{\eta K} + \frac{2\sum_{i=1}^p t_i \rho \sigma}{\alpha K \sqrt{Bn}} +  \frac{2\sqrt{2}\sum_{i=1}^p t_i \rho \sigma}{\sqrt{Bn}}  \nonumber   \\ 
		& +  \frac{2\sqrt{2}\sqrt{K} \sum_{i=1}^p \delta_i t_i^2 \rho \eta}{\sqrt{n}}    + \frac{2\sum_{i=1}^p L_i^0t_i^2 \eta}{\alpha} \nonumber  \\ 
		&  + \frac{\sum_{i=1}^p L_i^0 t_i^2 \eta}{2}  + \frac{2\sqrt{2(1-\delta)} \sum_{i=1}^p \sqrt{L_i^2 + \delta\delta_i^2 }  t_i^2 \rho \eta}{\delta}.  \nonumber 
	\end{align}
	
	2. If $L_i^1 \neq 0$, we let $\frac{\eta}{\alpha} \leq \min_{i}  \frac{1}{5L_i^1 t_i}$. Then $\left(  \frac{2}{\alpha} + \frac{1}{2}  \right) L_i^1 t_i \eta \leq \frac{1}{2}$ for all $i$, and the inequalities (\ref{eq:th-deltaGluon-1}) and (\ref{eq:th-deltaGluon-2}) remain valid when their right-hand sides are multiplied by a factor $2$. 
	
\end{theorem}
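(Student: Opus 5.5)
The plan is to reuse the skeleton of the proof of Theorem \ref{th:cs-Gluon} and isolate the new compression error from error feedback as an additive perturbation of the uncompressed SARAH-type estimator. Throughout, I write $\alpha = 1-\beta$, as the form of the bounds suggests.

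\textbf{Descent step.} First I would establish a one-step descent inequality for the LMO update. By Assumption \ref{as:L0L1smooth}, the identity $\langle M_i^k, X_i^{k+1}-X_i^k\rangle = -t_i\eta\|M_i^k\|_{(i)\star}$, and the triangle inequality,
\begin{align*}
f(X^{k+1}) \le{}& f(X^k) - \eta\sum_i t_i\|\nabla_i f(X^k)\|_{(i)\star} + 2\eta\sum_i t_i\|M_i^k-\nabla_i f(X^k)\|_{(i)\star} \\
&+ \sum_i \frac{(L_i^0+L_i^1\|\nabla_i f(X^k)\|_{(i)\star})t_i^2\eta^2}{2}.
\end{align*}

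\textbf{Momentum error.} Next I unroll the momentum error. Write $M_i^k-\nabla_i f(X^k) = \beta(M_i^{k-1}-\nabla_i f(X^{k-1})) + \beta(\nabla_i f(X^{k-1})-\nabla_i f(X^k)) + \alpha(g_i^k-\nabla_i f(X^k))$. Iterating this identity and bounding the drift terms by $(L_i^0+L_i^1\|\nabla_i f\|_{(i)\star})t_i\eta$ produces the $\frac{2L_i^0t_i^2\eta}{\alpha}$ term.

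\textbf{Splitting the estimator.} For the estimator I would introduce the virtual uncompressed sequence $\tilde g_i^k$: the exact Compressed Gluon recursion with $\omega=0$, i.e. full-batch resets with probability $q$ and SARAH increments $\frac1n\sum_\tau(\nabla_i f_{\xi_\tau^k}(X^k)-\nabla_i f_{\xi_\tau^k}(X^{k-1}))$ otherwise. The error-feedback bookkeeping sends $c = \C_i(\Delta+e)$ and updates $e \leftarrow \Delta+e-c$, with $e$ reset to zero whenever $u^k=1$. Telescoping this bookkeeping should give
\[
g_i^k = \tilde g_i^k - \bar e_i^{k+1}, \qquad \bar e_i^{k+1} = \frac1n\sum_\tau e_i^{\tau,k+1}.
\]
The part $\tilde g_i^k - \nabla_i f(X^k)$ is handled exactly as in Theorem \ref{th:cs-Gluon} with $\omega=0$. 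Its martingale structure and Assumptions \ref{as:boundedvariance} and \ref{as:HV} give the $\sigma/\sqrt{Bn}$ terms and the $\frac{\sqrt{(1-q)\alpha}\,\delta_i t_i^2\rho\eta}{\sqrt{(2-\alpha)(\alpha+\beta q)}\sqrt{qn}}$ term; when $q=0$ the martingale sum gives the $\sqrt K$ term. The part $\alpha\sum_j\beta^{k-j}\bar e_i^{j+1}$ contributes at most $\sup_j \E\|\bar e_i^{j+1}\|_2$ after summing the geometric weights, and is then passed to the dual norm via Assumption \ref{as:rho} and Jensen's inequality.

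\textbf{Error recursion (main obstacle).} The key new step is a per-node recursion for $E_k=\E\|e_i^{\tau,k+1}\|_2^2$. Using the reset with probability $q$, contraction \eqref{eq:Ci}, and Young's inequality with parameter $s$,
\[
E_k \le (1-q)(1-\delta)\left[(1+s)E_{k-1} + (1+s^{-1})\,\E\|\Delta_i^{\tau,k}\|_2^2\right].
\]
To recover the exact constant $L_i^2+\theta\delta_i^2$ with $\theta:=q+\delta-q\delta$, I would split $\Delta_i^{\tau,k}$ into its mean part, bounded by $L_i t_i\eta$ (Assumption \ref{as:Lismooth}), and its zero-mean noise, bounded by $\delta_i t_i\eta$ (Assumption \ref{as:HV}). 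Only the mean part should be hit by the $1+s^{-1}$ factor. That requires exploiting conditional independence of the fresh sample $\xi_\tau^k$ from $e^{\tau,k}$ before compression, or equivalently carrying the noise separately. Choosing $s\approx\theta/2$ makes the contraction factor at most $1-\theta/2$, so the steady state is
\[
E_k \lesssim \frac{(1-q)(1-\delta)\left(L_i^2+\theta\delta_i^2\right)t_i^2\eta^2}{\theta^2}.
\]
Taking square roots yields the last term of \eqref{eq:th-deltaGluon-1}, and the case $q=0$ gives $\theta=\delta$. This is the step where I expect the most difficulty: the cross term between compressed noise and the accumulated error is the delicate point, and if the clean split fails I would settle for $(L_i^2+\delta_i^2)$ at worse constants.

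\textbf{Conclusion and the $L_i^1\neq0$ case.} Finally I sum the descent inequality over $k=0,\dots,K-1$, divide by $\eta K$, and use $\min\le$ average; the initialization error of $M^0$ gives the $\frac{1}{\alpha K\sqrt{Bn}}$ term. For $L_i^1\neq0$, the $L_i^1$ contributions appear as $(\frac{2}{\alpha}+\frac12)L_i^1t_i\eta\,\|\nabla_i f\|_{(i)\star}$. Under $\eta/\alpha\le 1/(5L_i^1t_i)$ this coefficient is at most $\frac12$, so these terms can be absorbed into the left-hand side, which doubles the right-hand side.
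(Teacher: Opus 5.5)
Your proposal is correct and follows the paper's proof essentially step for step. Substituting $g_i^k=\tilde g_i^k-\bar e_i^{k+1}$ directly into the momentum recursion gives $-\alpha\sum_{j=1}^{k}\beta^{k-j}\bar e_i^{j+1}$, which (since $e_i^1=0$) is exactly the paper's term $-\alpha\sum_{j=1}^{k+1}\beta^{k+1-j}e_i^j$, obtained there via $\tilde M_i^k=M_i^k+e_i^{k+1}$ and a summation by parts; after that, Lemma~\ref{lm:sumgammaij} with $\omega=0$ handles $\tilde g_i^k$ and the $L_i^1$ terms are absorbed as you describe.

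The ``clean split'' you flag as the obstacle is precisely what Lemma~\ref{lm:eik} does: apply the contraction to $e_i^{\tau,k}+\Delta_i^{\tau,k}$ first, then use that $\xi_\tau^k$ is fresh to write $\E\|e+\Delta\|_2^2=\E\|e+\bar\Delta\|_2^2+\E\|\Delta-\bar\Delta\|_2^2$, so no cross term arises. One remark on constants: solving the resulting recursion with contraction factor $1-\theta/2$ gives $4(1-\theta)L_i^2t_i^2\eta^2/\theta^2$, not the $2(1-\theta)L_i^2t_i^2\eta^2/\theta^2$ written in Lemma~\ref{lm:eik}, so your $\lesssim$ is the right hedge.
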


From Theorem \ref{th:deltaGluon}, we can get the following convergence rate under suitable choices of parameters. We defer the analysis of communication cost to Subsection \ref{subsec:comcost-dG}. 

\begin{theorem}\label{th:deltaGluon-rate}
Under the premise of Theorem \ref{th:deltaGluon}, let $\alpha = \Theta(1)$, $q>0$, and $\frac{\Delta^0}{\eta^2 K} = {\cal L}_3$, where ${\cal L}_3 \eqdef  \sum_{i=1}^p L_i^0t_i^2 + \frac{\sqrt{1-q}}{\sqrt{qn}} \sum_{i=1}^p \rho \delta_i t_i^2 + \frac{\sqrt{(1-q)(1-\delta)}}{q+\delta-q\delta} \cdot \sum_{i=1}^p \rho \sqrt{L_i^2 + (q+\delta-q\delta)\delta_i^2}t_i^2$. Then the upper-bound in (\ref{eq:th-deltaGluon-1}) becomes 
$$
{\cal O} \left(  \frac{\sqrt{\Delta^0 {\cal L}_3} }{\sqrt{K}}  +  \frac{\sum_it_i \rho \sigma}{\sqrt{Bn}}  \right). 
$$
\end{theorem}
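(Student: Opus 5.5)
The plan is to plug the stated parameter choices into the bound (\ref{eq:th-deltaGluon-1}) of Theorem \ref{th:deltaGluon} and collect terms. With $\alpha=\Theta(1)$ (so $2-\alpha=\Theta(1)$) and $q>0$, the factor $\sqrt{\alpha}/\sqrt{(2-\alpha)(\alpha+\beta q)}$ is $\mathcal{O}(1)$, because $\alpha+\beta q\ge\alpha$. The two $\sigma$-terms are then $\mathcal{O}\big(\frac{\sum_i t_i\rho\sigma}{\alpha K\sqrt{Bn}}\big)$ and $\mathcal{O}\big(\frac{\sum_i t_i\rho\sigma}{\sqrt{Bn}}\big)$. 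The first is dominated by the second since $\alpha K\ge 1$ up to constants; if necessary I would simply take $K\gtrsim 1/\alpha=\Theta(1)$. This yields the $\frac{\sum_i t_i\rho\sigma}{\sqrt{Bn}}$ term of the claim.

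Every remaining term is linear in $\eta$, and after absorbing $\Theta(1)$ factors of $\alpha$ they match the summands of ${\cal L}_3$.
\begin{itemize}
\item The two $L_i^0$ terms give $\mathcal{O}\big(\eta\sum_i L_i^0 t_i^2\big)$.
\item The Hessian-variance term gives $\mathcal{O}\big(\eta\frac{\sqrt{1-q}}{\sqrt{qn}}\sum_i\rho\delta_i t_i^2\big)$, again using $\sqrt{\alpha/((2-\alpha)(\alpha+\beta q))}=\mathcal{O}(1)$.
\item The error-feedback term is exactly $2\sqrt{2}\,\eta$ times the third summand of ${\cal L}_3$.
\end{itemize}
Hence the bound is at most $\frac{\Delta^0}{\eta K}+\mathcal{O}(\eta{\cal L}_3)+\mathcal{O}\big(\frac{\sum_i t_i\rho\sigma}{\sqrt{Bn}}\big)$.

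Finally I would use the choice $\frac{\Delta^0}{\eta^2K}={\cal L}_3$, i.e. $\eta=\sqrt{\Delta^0/({\cal L}_3K)}$. Under this choice both $\frac{\Delta^0}{\eta K}$ and $\eta{\cal L}_3$ equal $\sqrt{\Delta^0{\cal L}_3}/\sqrt{K}$, which gives the stated rate.

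The step I expect to need the most care is the $\sigma/(\alpha K\sqrt{Bn})$ term, where I must make sure $K\ge 1$ and $\alpha=\Theta(1)$ suffice. The other point to check is the edge case $q=1$, where several terms vanish; this is harmless because they only become smaller.
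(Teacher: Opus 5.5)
Your proposal is correct and follows the route the paper intends; the paper gives no written proof and presents the result as a direct consequence of Theorem \ref{th:deltaGluon}. As you do, one substitutes into (\ref{eq:th-deltaGluon-1}), bounds the $\alpha$-dependent factors by constants, absorbs the $\frac{1}{\alpha K\sqrt{Bn}}$ term into $\frac{1}{\sqrt{Bn}}$, recognizes the rest as $\mathcal{O}(\eta{\cal L}_3)$, and balances this against $\frac{\Delta^0}{\eta K}$ via $\eta=\sqrt{\Delta^0/({\cal L}_3K)}$. The two points you flagged need no further care: $K\ge 1$ already gives $\alpha K\ge\alpha=\Theta(1)$, and $\sqrt{\alpha}/\sqrt{(2-\alpha)(\alpha+\beta q)}\le 1$ holds for every $\alpha\in(0,1]$ because $2-\alpha\ge 1$ and $\alpha+\beta q\ge\alpha$.
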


\subsection{Local Gluon}

In this subsection, we consider the case where $\C_i (X_i) \equiv 0$ for all $X_i \in {\cal S}_i$, which indicates that $\delta \equiv 0$. From the update rule of $g_i^k$ in Algorithm \ref{alg:q-deltagluon}, it is easy to see that in this case we actually do not need to calculate $y_i^{\tau, k}$ and $e_i^{\tau, k}$. First we consider the case where $L_i^1=0$. By choosing $\eta = \frac{n^{1/4}}{K^{3/4}}$, $q = \frac{n^{1/2}}{K^{1/2}}$, $B = \frac{K^{1/2}}{n^{1/2}}$, and $\alpha \geq \max\{  \frac{n^{1/2}}{K^{1/2}},  K^{-1}  \}$ (We assume $K\geq n$, which generally holds in practice), from Theorem \ref{th:deltaGluon}, we have 
$$
\min_{k=0, ..., K-1} \sum_{i=1}^p t_i \E \left[  \|\nabla_i f(X^k) \|_{(i)\star} \right]  \leq {\cal O} \left(  \frac{1}{n^{1/4} K^{1/4}}  \right). 
$$
To achieve the $\epsilon$ precision, it is sufficient to choose $K = \frac{1}{\epsilon^4 n}$, and the expected totally communication cost is 
$$
\Theta\left( nd + Kn \left(1 + qd  \right) \right)  = \Theta\left(  \frac{1}{ \epsilon^4 } + \frac{n d}{\epsilon^2}  \right). 
$$
In this case, $n\leq K$ is equivalent to $n\leq \frac{1}{\epsilon^2}$, $\eta = n\epsilon^3$, $q=n \epsilon^2$, $B = \frac{1}{\epsilon^2 n}$, $\alpha \geq n\epsilon^2$, and the expected stochastic gradient oracle is $\Theta(\frac{1}{\epsilon^4})$. Assume the data sampled from distribution $\mathcal{D_\tau}$ each time is different. Then the number of data $N$ should be $ \frac{1}{\epsilon^4}$.
In this parameter setting, if we omit the communication cost of $u_\tau^k$, then the worker nodes only need to communicate with each other when $u^k = 1$. Hence we call Algorithm \ref{alg:q-deltagluon} in this parameter setting Local Gluon.

For the case where $L_i^1 \neq 0$, the above results also hold as long as $\frac{\eta}{\alpha} \leq \min_{i}  \frac{1}{5L_i^1 t_i}$, which generally holds for large $K$ in practice.

\subsection{Communication Cost Analysis}\label{subsec:comcost-dG}

In this subsection, we consider the compressed case. For the RandK contraction compressor \cite{qian2021error}, where $\delta = K/d$, the compression ratio is $\Theta(1/\delta)$. Thus we assume the compression ratio of $\C_i$ is $\Theta(1/\delta)$. Then the expected communication cost of each node at each step is 

\begin{algorithm}[H]
	\caption{Compressed Gluon with Error Feedback}
	\label{alg:q-deltagluon}
	\begin{algorithmic}[1]
		\STATE \textbf{Input:} Initial model parameters $X^0 = [X_1^0, \dots, X_p^0] \in \mathcal{S}$, momentum $M^0 = [M_1^0, \dots, M_p^0] \in \mathcal{S}$, momentum decay factors $\beta \in [0, 1)$ for all iterations $k \geq 0$, probability $q \in (0, 1]$, batch size $B$, stepsize parameter $\eta>0$, $u^0=1 \in \R$, $e_i^{\tau, 1} = 0 \in {\cal S}_i$
		\FOR{ $k = 0, 1, 2, ..., K-1$}
		\FOR{ $\tau = 1, ..., n$} 
		\STATE  $u^{k+1}_\tau = 0$ for $\tau = 2, ..., n$
		\STATE $
		u^{k+1}_1 = \left\{ \begin{array}{rl}
			1 & \mbox{ with probability $q$} \\
			0 &\mbox{ with probability $1-q$}
		\end{array} \right.
		$\\
		\IF{$u^k = 1$}
		\STATE $e_i^{\tau, k+1} = 0$ for $i = 1, ..., p$ 
		\STATE Sample $\xi^k_{\tau, j} \sim \mathcal{D_\tau}$ independently for $j = 1, ..., B$
		\STATE $g_i^{\tau, k} = \frac{1}{B} \sum_{j=1}^B \nabla_i f_{\xi^k_{\tau, j}} (X^k)$ for $i = 1, ..., p$ 
		\STATE Send $g_i^{\tau, k}$ and $u^{k+1}_\tau$ to the other nodes
		\STATE Receive $g_i^{\tau, k}$ and $u^{k+1}_\tau$ from the other nodes
		\ELSE
		\STATE Sample $\xi^k_\tau \sim \mathcal{D_\tau}$ 
		\STATE  $y_i^{\tau, k} = \C_i\left(  \nabla_i f_{\xi^k_\tau} (X^k) - \nabla_i f_{\xi^k_\tau} (X^{k-1} )+ e_i^{\tau, k}  \right)$ for $i = 1, ..., p$ 
		\STATE $e_i^{\tau, k+1} = e_i^{\tau, k} + \nabla_i f_{\xi^k_\tau} (X^k) - \nabla_i f_{\xi^k_\tau} (X^{k-1}) - y_i^{\tau, k}$ for $i = 1, ..., p$ 
		\STATE Send $y_i^{\tau, k}$ and $u^{k+1}_\tau$ to the other nodes
		\STATE Receive $y_i^{\tau, k}$ and $u^{k+1}_\tau$ from the other nodes
		\ENDIF
		
		\ENDFOR
		\STATE 
		$
		g_i^k= \left\{ \begin{array}{cl}
			\frac{1}{n} \sum_{\tau=1}^n g_i^{\tau, k} & \mbox{ if $u^k = 1$} \\
			g_i^{k-1} + \frac{1}{n} \sum_{\tau=1}^n  y_i^{\tau, k}&\mbox{ otherwise}
		\end{array} \right.
		$
		\STATE $u^{k+1} = \sum_{\tau=1}^n u^{k+1}_\tau$
		\STATE Update momentum $M_i^k = \beta M_i^{k-1} + (1-\beta) g_i^k$ for layer $i$ 
		\STATE Choose adaptive stepsize/radius $t_i \eta > 0$ for layer $i$
		\STATE Update parameters for layer $i$ via LMO over $\mathcal{B}_i^k := \{X_i \in \mathcal{S}_i : \|X_i - X_i^k\|_{(i)} \leq t_i \eta\}$:
		\begin{equation}\label{eq:updateCon-dG}
			X_i^{k+1} = \text{LMO}_{\mathcal{B}_i^k}(M_i^k) := \arg \min_{X_i \in \mathcal{B}_i^k} \langle M_i^k, X_i \rangle_{(i)}
		\end{equation}
		\STATE Update full parameter vector $X^{k+1} = [X_1^{k+1}, \dots, X_p^{k+1}]$
		\ENDFOR
	\end{algorithmic}
\end{algorithm}

\begin{equation}\label{eq:comc-dG-each}
	\Theta\left(n \left(1 + qd + (1-q)\delta d \right) \right), 
\end{equation}
where $1$ comes from the communication of $u_\tau^{k+1}$, and $d$ is the dimension of $ \mathcal{S}$. If $q=\Theta(1)$ or $\delta = \Theta(1)$, then (\ref{eq:comc-dG-each}) is $\Theta(nd)$, which is the same as that of uncompressed algorithms. Hence we assume $(1-q)(1-\delta) = \Theta(1)$ in the compressed case. From Theorem \ref{th:deltaGluon}, the upper-bound of $\min_{k=0, ..., K-1} \sum_{i=1}^p t_i \E \left[  \|\nabla_i f(X^k) \|_{(i)\star} \right] $ is at least 
$$
{\cal O} \left(  \frac{1}{\eta K}  +  \frac{\eta}{q+\delta-q\delta}  \right) \geq {\cal O} \left(  \frac{1}{\sqrt{(q+\delta-q\delta) K}}  \right). 
$$
Hence, to achieve $\epsilon$ precision, $K$ is at least $\frac{1}{\epsilon^2 (q+\delta-q\delta)}$, and  the expected totally communication cost is at least 
$$
\Theta\left(  \frac{n}{\epsilon^2 (q+\delta-q\delta) } + \frac{n d}{\epsilon^2}  \right). 
$$

Next we consider the datailed parameter settings. First we consider the case where $L_i^1=0$. By choosing $\eta = \frac{\sqrt{c} n^{1/4}}{K^{3/4}}$, $\delta = \frac{cn^{1/2}}{K^{1/2}}$, $B = \frac{cK^{1/2}}{n^{1/2}}$, $\frac{c^2}{K} \leq q \leq \delta$, and $\alpha \geq \{  q, \frac{1}{K}, \frac{cn^{1/2}}{K^{1/2}}  \}$ for $0<c \leq \frac{K^{1/2}}{n^{1/2}}$, from Theorem \ref{th:deltaGluon}, we can obtain 
$
\min_{k=0, ..., K-1} \sum_{i=1}^p t_i \E \left[  \|\nabla_i f(X^k) \|_{(i)\star} \right]  \leq {\cal O} \left(  \frac{1}{\sqrt{c} n^{1/4} K^{1/4}}  \right). 
$
To achieve the $\epsilon$ precision, it is sufficient to choose $K = \frac{1}{c^2 \epsilon^4 n}$, and the expected totally communication cost is $$\Theta\left(  \frac{1}{ c^2 \epsilon^4 } + \frac{n d}{\epsilon^2}  \right). $$
Compared to the totally communication cost of $\Theta\left(  \frac{d}{\epsilon^3}  \right)$ for the uncompressed algorithm in subsection \ref{subsec:new-vr-alg}, the above communication cost is better when $n \leq \frac{1}{\epsilon}$. In this parameter setting, the batch size $B = \frac{1}{\epsilon^2 n}$ and by choosing $q = \frac{c^2}{K}$, the expected stochastic gradient oracle is 
$$
\Theta\left(  Bn + (1-q + qB)Kn \right) = \Theta \left(  \frac{1}{c^2 \epsilon^4}  +  \frac{c^2}{\epsilon^2}  \right), 
$$
with $0<c^2 \leq \frac{1}{\epsilon^2 n}$. If $n \leq \frac{1}{\epsilon}$,  then by choosing $c^2 = \frac{1}{\epsilon} = (Kn)^{1/3}$, we have $q=n\epsilon^2$, $B=\frac{1}{\epsilon^2 n}$, $\eta=n\epsilon^2$, $\delta=n\epsilon$, $\alpha \geq n \epsilon$, $K = \frac{1}{\epsilon^3 n}$, and the above expected stochastic gradient oracle becomes $\Theta \left(  \frac{1}{\epsilon^3}  \right)$. 

When $q=0$, by choosing $\eta = \frac{\sqrt{c} n^{1/4}}{K^{3/4}}$, $\delta = \frac{cn^{1/2}}{K^{1/2}}$, $B = \frac{cK^{1/2}}{n^{1/2}}$, and $\alpha \geq \max\{ \frac{1}{K}, \frac{cn^{1/2}}{K^{1/2}}  \}$ for $0<c \leq \min \{\frac{K^{1/2}}{n^{1/2}}, 1\}$, from Theorem \ref{th:deltaGluon}, we can get 
$
\min_{k=0, ..., K-1} \sum_{i=1}^p t_i \E \left[  \|\nabla_i f(X^k) \|_{(i)\star} \right]  \leq {\cal O} \left(  \frac{1}{\sqrt{c} n^{1/4} K^{1/4}}  \right). 
$
To achieve the $\epsilon$ precision, it is sufficient to choose $K = \frac{1}{c^2 \epsilon^4 n}$, and the expected totally communication cost is 
$$
\Theta\left( nd + Kn \left( \delta d  \right) \right)  = \Theta\left(  \frac{n d}{\epsilon^2}  \right). 
$$
The batch size $B = \frac{1}{\epsilon^2 n}$ and the expected stochastic gradient oracle is $\Theta\left(  \frac{1}{c^2 \epsilon^4}  \right)$. 

For the case where $L_i^1 \neq 0$, the above results also hold as long as $\frac{\eta}{\alpha} \leq \min_{i}  \frac{1}{5L_i^1 t_i}$, which generally holds for large $K$ in practice.

\paragraph{Comparison with EF21-Muon} From Theorem 6 in \cite{gruntkowska2025error}, the convergence rate is at least ${\cal O}(\frac{1}{K^{1/4}})$. To achieve this rate, $\delta$ need to be at least $\Theta(\frac{1}{K^{1/4}})$. Thus, to obtain the $\epsilon$ precision, $K = \frac{1}{\epsilon^4}$ and $\delta$ is at least $\Theta(\epsilon)$, which implies that the expected totally communication cost is at least $\Theta(Knd\delta) = \Theta(\frac{nd}{\epsilon^3})$.

\section{Compressed Gluon with Error Feedback and MVR}

Similar to Compressed Gluon with MVR, we can incorporate MVR into Compressed Gluon with Error Feedback easily. For limited space, we summarize the main results here. The algorithm and detailed analysis can be found in the Appendix. We only list the results for the $L_i^1=0$ case. For the case where $L_i^1 \neq 0$, the results also hold as long as $\eta \leq \min_{i}  \frac{1}{L_i^1 t_i}$, which generally holds for large $K$ in practice. 

\paragraph{Local Gluon with Error Compensated MVR} For the resulting algorithm with $\C_i (X_i) \equiv 0$ for all $X_i \in {\cal S}_i$, we call it the Local Gluon with Error Compensated MVR. By choosing $\eta = \frac{n^{1/3}}{K^{2/3}}$, $q = \frac{n^{2/3}}{K^{1/3}}$, $B = \frac{K^{1/3}}{n^{2/3}}$, and $\alpha = \frac{n^{1/3}}{K^{2/3}}$, when $n \leq \sqrt{K}$,  
$
\min_{k=0, ..., K-1} \sum_{i=1}^p t_i \E \left[  \|\nabla_i f(X^k) \|_{(i)\star} \right]  \leq {\cal O} \left(  \frac{1}{n^{1/3} K^{1/3}}  \right)
$, and the expected totally communication cost is $\Theta\left(  \frac{1}{ \epsilon^3 } + \frac{n d}{\epsilon^2}  \right)$,  the expected stochastic gradient oracle is $\Theta(\frac{1}{\epsilon^3})$. 

\paragraph{Communication cost} By choosing $\eta = \frac{\sqrt{c} n^{1/4}}{K^{3/4}}$, $\delta = \frac{cn^{1/2}}{K^{1/2}}$, $B = \frac{cK^{1/2}}{n^{1/2}}$, $\frac{c^2}{K} \leq q \leq \delta$, and $\alpha \geq \{  q, \frac{1}{K}  \}$ for $0<c \leq \frac{K^{1/2}}{n^{1/2}}$, the expected totally communication cost is $\Theta\left(  \frac{1}{ c^2 \epsilon^4 } + \frac{n d}{\epsilon^2}  \right)$. Compared to the totally communication cost of $\Theta\left(  \frac{d}{\epsilon^3}  \right)$ for the uncompressed algorithm in subsection \ref{subsec:new-vr-alg}, the above communication cost is better when $n \leq \frac{1}{\epsilon}$. By further choosing $q = \frac{c^2}{K}$, the expected stochastic gradient oracle is $\Theta \left(  \frac{1}{c^2 \epsilon^4}  +  \frac{c^2}{\epsilon^2}  \right)$, which becomes $\Theta \left(  \frac{1}{\epsilon^3}  \right)$ if we choose $c^2 = \frac{1}{\epsilon} = (Kn)^{1/3}$ and $n \leq \frac{1}{\epsilon}$. More parameter setting where $\approx \Theta\left(  \frac{n d}{\epsilon^2}   \right)$ expected totally communication cost can be achieved can be found in the Appendix.

\section{Experiments}

\subsection{General Details}

Algorithm \ref{alg:q-cgluon} recovers non-Euclidean SGD with momentum ($q=1$) and VR-MARINA \cite{gorbunov2022marinafasternonconvexdistributed} with momentum (by setting $B$ to compute the full gradient and replacing the LMO step with a standard GD step), which serve as our baselines. We employ unbiased random sparsification Rand$K\%$ (Alg \ref{alg:q-cgluon}, VR-MARINA) and contractive greedy compression Top$K\%$ (Alg \ref{alg:q-deltagluon}) compressors. While we assume a theoretical $K\%$ communication cost for both, practically this is exact only for Rand$K\%$, as it avoids explicit index transmission.

We adopt the standard cost model where uplink communication is the primary bottleneck, neglecting downlink and compute. We report training loss against normilized communication cost, which is proportional to the number of bytes in the uplink. Additionally, we empirically improve Algorithms \ref{alg:q-cgluon} and \ref{alg:q-deltagluon} by scaling $y_i^{\tau,k}$ by $1/B$ when $u^k=0$.

\subsection{Logistic Regression}
\label{section:main_logreg}

We train\footnote{Code is available \href{https://anonymous.4open.science/r/CompressedGluonFL-D197}{here}.} a logistic regression on the a5a dataset \cite{a5a} using cross-entropy loss, weight decay $10^{-4}$, batch size $64$, and $\tau=4$. We evaluate block sizes $B \in \{1, 4, 16\}$ and random compressor densities $\{100\%, 10\%, 1\%\}$. Using the Euclidean norm for the LMO reduces the uncompressed baseline ($q=1$) to Normalized SGD with momentum (using $B=1$). We report $f(x) - f^*$ using optimum loss, found via the gradient descent. Figure \ref{fig:main_logreg} shows that Algorithm \ref{alg:q-cgluon} outperforms baselines (tuned in Figure \ref{fig:logreg_marina_ablation}) with a $\approx60\%$ communication cost reduction.

We present detailed ablation in the Appendix \ref{appendix:logreg}. The key observation is that setting $q<1$ leads to a significant slowdown in the method's convergence in terms of per-iteration loss (in comparison to the $q=1$ baseline) however, this is compensated for by the tolerance to the compression technique. Setting a larger value $B$ is crucial for accelerating the convergence of Algorithms \ref{alg:q-cgluon}, \ref{alg:q-deltagluon} and VR-MARINA. Note that the computationally intensive large batch is processed only with probability $q$.

\begin{figure}[t!]
    \centering
    
    \includegraphics[width=0.82\linewidth]{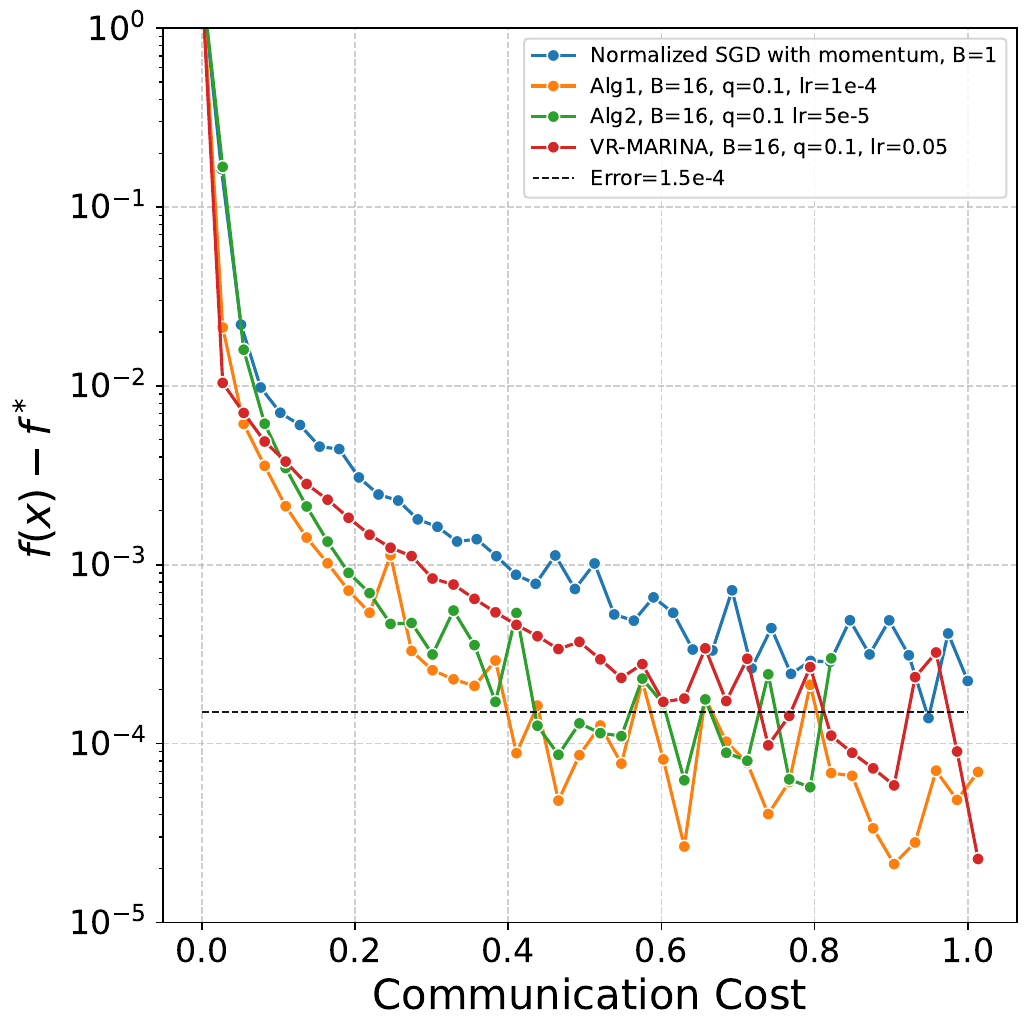}
    \caption{Logreg, a5a. $f(x)-f^*$ vs. communication cost. Algorithms \ref{alg:q-cgluon}, \ref{alg:q-deltagluon} and VR-MARINA utilize compression parameter $K=1\%$}
    \label{fig:main_logreg}
    
    \vspace{1.5em}
    
    \includegraphics[width=0.82\linewidth]{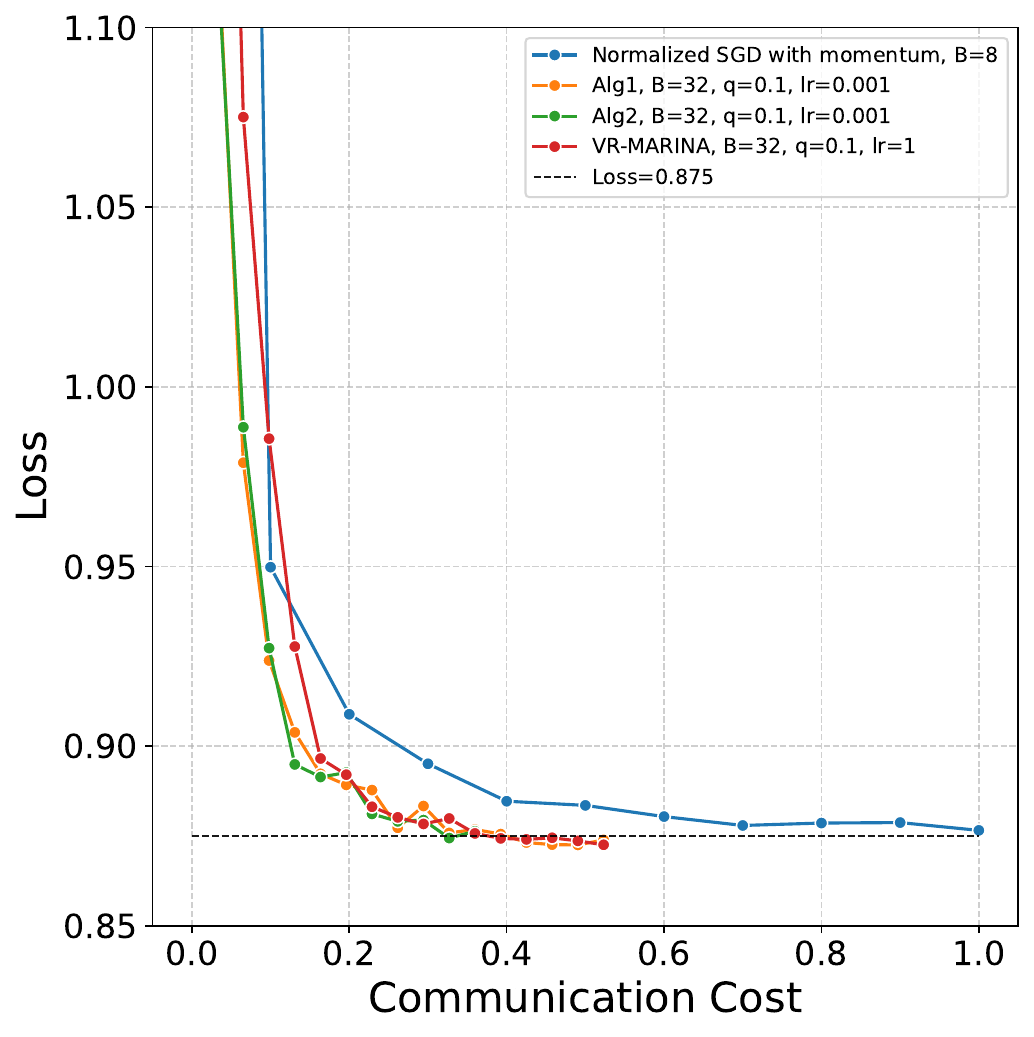}
    \caption{CNN, CIFAR10. Loss vs. normalized communication cost. Algorithms \ref{alg:q-cgluon}, \ref{alg:q-deltagluon} and VR-MARINA utilize compression parameter $K=1\%$}
    \label{fig:main_cnn}
\end{figure}

\subsection{Convolutional Neural Network}

We train a CNN model based on \cite{jordan2024cifar10}, on the CIFAR-10 dataset \cite{cifar10}. We utilize a batch size of $16$ and $\tau=4$. We employ spectral norm LMOs for the hidden layers and the infinity norm for the head layer, following \cite{riabinin2025gluon}. For the spectral norm LMOs, inexact updates are computed using 5 Newton–Schulz iterations, consistent with \cite{jordan6muon}. The baseline ($q=1$), corresponds to non-Euclidean SGD with momentum. Hyperparameter tuning is presented in the Appendix \ref{appendix:cifar10}. Figure \ref{fig:main_cnn} displays tuned trajectories, demonstrating $\approx65\%$ communication cost reduction.

\label{section:main_cnn}

%
%


\section*{Impact Statement}

This paper presents work whose goal is to advance the field of Machine
Learning. There are many potential societal consequences of our work, none
which we feel must be specifically highlighted here.


\bibliography{com_gluon_ref}
\bibliographystyle{icml2026}

\newpage
\appendix
\onecolumn

\tableofcontents

\newpage

\section{Proofs for Compressed Gluon}

\subsection{Two Lemmas}

\begin{lemma}\label{lm:yikbound}
	For the sum of $y_i^{\tau, j}$, we have the following estimation. 
	$$
	\E \left[  \| \frac{1}{n}\sum_{\tau=1}^n y_i^{\tau, j} - (\nabla_i f(X^j) - \nabla_i f(X^{j-1}) )\|_2^2   \right] \leq  \left(  \frac{(\omega+1) \delta_i^2}{n} + \frac{\omega L_i^2}{n}  \right) \E \left[  \|X_i^j - X_i^{j-1}\|_{(i)}^2  \right]. 
	$$
\end{lemma}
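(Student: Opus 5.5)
We prove the bound by conditioning on everything up to the computation of $X^j$ and $X^{j-1}$, so that $X^j$, $X^{j-1}$ are fixed. The randomness left is the samples $\xi_\tau^j$, independent across $\tau$, and the compressors $Q_i$, whose calls are independent across nodes. Write $\Delta_\tau \eqdef \nabla_i f_{\xi^j_\tau}(X^j) - \nabla_i f_{\xi^j_\tau}(X^{j-1})$ and $\bar\Delta_\tau \eqdef \nabla_i f^\tau(X^j) - \nabla_i f^\tau(X^{j-1})$. Then $\nabla_i f(X^j)-\nabla_i f(X^{j-1}) = \frac1n\sum_\tau \bar\Delta_\tau$, and $y_i^{\tau,j} = Q_i(\Delta_\tau)$. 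By Assumption~\ref{as:boundedvariance} and the unbiasedness in (\ref{eq:Qi}), $\E[y_i^{\tau,j}] = \E[\Delta_\tau] = \bar\Delta_\tau$. The terms $y_i^{\tau,j}-\bar\Delta_\tau$ are therefore independent and mean zero across $\tau$, so the cross terms vanish:
$$
\E\Big\|\frac1n\sum_\tau (y_i^{\tau,j}-\bar\Delta_\tau)\Big\|_2^2 = \frac1{n^2}\sum_\tau \E\|y_i^{\tau,j}-\bar\Delta_\tau\|_2^2 .
$$

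For each $\tau$ we split the error into a compression part and a sampling part:
$$
\E\|Q_i(\Delta_\tau)-\bar\Delta_\tau\|_2^2 = \E\|Q_i(\Delta_\tau)-\Delta_\tau\|_2^2 + \E\|\Delta_\tau-\bar\Delta_\tau\|_2^2 .
$$
The cross term vanishes after first taking $\E_{Q_i}$ conditional on $\xi_\tau^j$. The compression part is bounded using (\ref{eq:Qi}): $\E_{Q_i}\|Q_i(\Delta)-\Delta\|_2^2 \le \omega\|\Delta\|_2^2$. We then expand $\E\|\Delta_\tau\|_2^2 = \E\|\Delta_\tau-\bar\Delta_\tau\|_2^2 + \|\bar\Delta_\tau\|_2^2$. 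Collecting terms gives
$$
\E\|Q_i(\Delta_\tau)-\bar\Delta_\tau\|_2^2 \le (\omega+1)\,\E\|\Delta_\tau-\bar\Delta_\tau\|_2^2 + \omega\|\bar\Delta_\tau\|_2^2 .
$$

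Assumption~\ref{as:HV} bounds the first term by $(\omega+1)\delta_i^2\|X_i^j-X_i^{j-1}\|_{(i)}^2$. Assumption~\ref{as:Lismooth} bounds the second by $\omega L_i^2\|X_i^j-X_i^{j-1}\|_{(i)}^2$. Averaging over $\tau$ with the $1/n^2$ factor yields $\frac{(\omega+1)\delta_i^2+\omega L_i^2}{n}\|X_i^j-X_i^{j-1}\|_{(i)}^2$, and taking total expectation gives the stated bound.

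The only delicate point is the conditioning. The quantities $y_i^{\tau,j}$ are only defined when $u^j=0$, and $X^j$ depends on past randomness. We therefore have to check that $\xi_\tau^j$ and the compressor randomness at step $j$ are fresh, i.e.\ independent of $X^j$, $X^{j-1}$ and $u^j$, so that the conditional unbiasedness and the independence across nodes used above are legitimate. This holds by construction of Algorithm~\ref{alg:q-cgluon}; we also assume the $Q_i$ calls on different nodes are independent.
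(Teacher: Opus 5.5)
Your proof is correct and follows essentially the paper's argument: you split orthogonally into a compression part bounded via (\ref{eq:Qi}) and a sampling part, expand $\E\|\Delta_\tau\|_2^2 = \E\|\Delta_\tau-\bar\Delta_\tau\|_2^2+\|\bar\Delta_\tau\|_2^2$, and apply Assumptions~\ref{as:HV} and~\ref{as:Lismooth}. The only difference is order (the paper splits the averaged quantity first and uses independence across nodes inside each piece, while you separate the nodes first), and your explicit conditioning remarks (fresh $\xi^j_\tau$, the event $u^j=0$, independent compressor calls across nodes) make precise what the paper uses only implicitly.
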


\begin{proof}
	From the definition of $y_i^{\tau,j}$ in Algorithm \ref{alg:q-cgluon}, we can get 
	\begin{eqnarray*}
		&& \E \left[  \| \frac{1}{n}\sum_{\tau=1}^n y_i^{\tau, j} - (\nabla_i f(X^j) - \nabla_i f(X^{j-1}) )\|_2^2   \right] \\ 
		&\overset{(\ref{eq:Qi})}{=}& \E \left[  \| \frac{1}{n}\sum_{\tau=1}^n y_i^{\tau, j} - \frac{1}{n} \sum_{\tau=1}^n ( \nabla_i f_{\xi^k_\tau}(X^j) - \nabla_i f_{\xi^k_\tau}(X^{j-1}) ) \|_2^2 \right] \\ 
		&& + \E \left[ \| \frac{1}{n} \sum_{\tau=1}^n ( \nabla_i f_{\xi^k_\tau}(X^j) - \nabla_i f_{\xi^k_\tau}(X^{j-1}) )  - (\nabla_i f(X^j) - \nabla_i f(X^{j-1}) )\|_2^2   \right] \\ 
		&\overset{(a)}{\leq}&  \frac{\omega}{n^2} \sum_{\tau=1}^n \E \left[  \|\nabla_i f_{\xi^k_\tau} (X^j) - \nabla_i f_{\xi^k_\tau} (X^{j-1})\|_2^2  \right]  + \frac{\delta_i^2}{n} \E \left[  \|X_i^j - X_i^{j-1}\|_{(i)}^2  \right] \\ 
		&\overset{(b)}{\leq}&  \frac{\omega}{n^2} \sum_{\tau=1}^n \E \left[  \| \nabla_i f_{\xi^k_\tau} (X^j) - \nabla_i f_{\xi^k_\tau} (X^{j-1}) - ( \nabla_i f^\tau(X^j) - \nabla_i f^\tau(X^{j-1}) ) \|_2^2 \right] \\ 
		&& +  \frac{\omega}{n^2} \sum_{\tau=1}^n \E \left[  \|  \nabla_i f^\tau(X^j) - \nabla_i f^\tau(X^{j-1})  \|_2^2 \right] +  \frac{\delta_i^2}{n} \E \left[  \|X_i^j - X_i^{j-1}\|_{(i)}^2  \right] \\ 
		&\overset{(c)}{\leq}& \left(  \frac{(\omega+1) \delta_i^2}{n} + \frac{\omega L_i^2}{n}  \right) \E \left[  \|X_i^j - X_i^{j-1}\|_{(i)}^2  \right], 
	\end{eqnarray*}
	where (a) uses (\ref{eq:Qi}) and Assumption \ref{as:HV}, (b) uses Assumption \ref{as:boundedvariance} and the fact that samples $\xi^k_\tau \sim {\cal D}_\tau$ are i.i.d, (c) uses Assumptions \ref{as:Lismooth} and \ref{as:HV}.
	
\end{proof}

\begin{lemma}\label{lm:sumgammaij}
	Let $\gamma_i^k \eqdef g_i^k - \nabla_i f(X^k)$ for $k \geq 0$ and $\alpha \eqdef 1-\beta$. Then for $0<q\leq 1$, we have 
	\begin{equation}\label{eq:sumgammaijboundq>0}
		\E \left[  \| \sum_{j=1}^k \alpha \beta^{k-j} \gamma_i^j\|_2^2  \right]  \leq \frac{2\alpha}{(2-\alpha)(\alpha+\beta q)} \left(   \frac{2\sigma^2}{Bn}  + \frac{(1-q)t_i^2 \eta^2}{q} \left(  \frac{(\omega+1) \delta_i^2}{n} + \frac{\omega L_i^2}{n}  \right)  \right); 
	\end{equation}
	for $q=0$, we have 
	\begin{equation}\label{eq:sumgammaijboundq=0}
		\E \left[  \| \sum_{j=1}^k \alpha \beta^{k-j} \gamma_i^j\|_2^2  \right]  \leq \frac{2\sigma^2}{Bn}  + 2k \left(  \frac{(\omega+1) \delta_i^2}{n} + \frac{\omega L_i^2}{n}  \right) t_i^2 \eta^2. 
	\end{equation}
\end{lemma}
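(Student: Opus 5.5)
The plan is to exploit the recursive structure of $\gamma_i^k$ produced by Algorithm \ref{alg:q-cgluon}. Write $\zeta_i^k$ for the new noise injected at step $k$. When $u^k=1$, $\gamma_i^k$ is freshly reset: it is the minibatch noise $\frac{1}{Bn}\sum_{\tau,j}(\nabla_i f_{\xi_{\tau,j}^k}(X^k)-\nabla_i f^\tau(X^k))$. When $u^k=0$, we have $\gamma_i^k=\gamma_i^{k-1}+\zeta_i^k$ with $\zeta_i^k=\frac1n\sum_\tau y_i^{\tau,k}-(\nabla_i f(X^k)-\nabla_i f(X^{k-1}))$. In both cases $\zeta_i^k$ has zero conditional mean given the past. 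In the reset case the conditional second moment is at most $\sigma^2/(Bn)$ by Assumption \ref{as:boundedvariance} and independence across $\tau,j$. In the compressed case Lemma \ref{lm:yikbound} together with $\|X_i^k-X_i^{k-1}\|_{(i)}\le t_i\eta$ (the LMO radius) gives at most $C_i t_i^2\eta^2$, where $C_i\eqdef \frac{(\omega+1)\delta_i^2+\omega L_i^2}{n}$. Unrolling, each $\gamma_i^j$ is a sum of martingale increments $\zeta_i^l$ for $l$ running from the last reset time up to $j$. The initial step $k=0$ counts as a reset since $u^0=1$.

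Next, rewrite $S_k\eqdef\sum_{j=1}^k\alpha\beta^{k-j}\gamma_i^j$ as $\sum_{l\le k} c_{k,l}\zeta_i^l$. Here $c_{k,l}=\sum_{j=l}^{k}\alpha\beta^{k-j}\mathbf 1\{\text{no reset in }(l,j]\}$ for $l\ge1$, and the initial noise $\zeta_i^0$ enters with $j$ ranging over $1,\dots,k$. Orthogonality of martingale increments gives $\E\|S_k\|^2=\sum_l \E[c_{k,l}^2\|\zeta_i^l\|^2]$. One subtlety is that $c_{k,l}$ depends on future coin flips $u^{l+1},\dots$. These are independent of $\zeta_i^l$ and of the past, so the expectation factorizes as $\E[c_{k,l}^2]\,\E\|\zeta_i^l\|^2$. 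Orthogonality also survives, because the cross terms vanish after conditioning on all coin flips, which are independent of the sampling and compression randomness.

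The main obstacle is bounding $\E[c_{k,l}^2]$ uniformly. Let $N$ be the number of steps after $l$ before the next reset, which is geometric with $\Pr(N\ge m)=(1-q)^m$. Then $c_{k,l}\le \alpha\sum_{m=0}^{N}\beta^{m}$, up to the $\beta^{k-j}$ offset, which only helps when $l$ is far from $k$. Rather than bounding $c_{k,l}^2$ directly, I would sum over $l$ first, using the fact that the increment probabilities are $q$ for a reset and $1-q$ for a compressed step. The goal is to show
\[
\sum_l \E[c_{k,l}^2]\,\mathbf 1\{\text{compressed}\}\le \frac{2\alpha(1-q)}{(2-\alpha)(\alpha+\beta q)q}, \qquad \sum_l \E[c_{k,l}^2]\,\mathbf 1\{\text{reset}\}\le \frac{4\alpha}{(2-\alpha)(\alpha+\beta q)}.
\]
To get these, expand $c_{k,l}^2$ as a double sum $\sum_{j,j'}\alpha^2\beta^{2k-j-j'}\Pr(\text{no reset up to }\max(j,j'))$. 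This gives geometric series in $\beta^2$ and $\beta(1-q)$. The identities $\sum\beta^{2m}=1/(\alpha(2-\alpha))$ and $\sum(\beta(1-q))^m=1/(1-\beta+\beta q)=1/(\alpha+\beta q)$ produce exactly the factor $\frac{2\alpha}{(2-\alpha)(\alpha+\beta q)}$. The reset contributions are weighted by $q$ times $1/q$-type sums, and bounding them crudely by $2$ yields the $\frac{2\sigma^2}{Bn}$ term. This is equation (\ref{eq:sumgammaijboundq>0}).

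For $q=0$ there is no reset after step $0$. Then $\gamma_i^j=\zeta_i^0+\sum_{l=1}^j\zeta_i^l$ and $c_{k,l}=\sum_{j=l}^k\alpha\beta^{k-j}\le1$. Hence $\E\|S_k\|^2\le \E\|\zeta_i^0\|^2+\sum_{l=1}^k C_it_i^2\eta^2$, which gives (\ref{eq:sumgammaijboundq=0}); the factor $2$ is slack that would come from an inequality $\|a+b\|^2\le2\|a\|^2+2\|b\|^2$ if I separate the initial noise instead.
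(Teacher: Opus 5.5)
Your proof is correct, but it is organized differently from the paper's.

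\textbf{The paper's route.} It never expands into innovations and works with the accumulated errors themselves. First it proves the covariance identity $\E\langle\gamma_i^j,\gamma_i^s\rangle=(1-q)^{s-j}\E\|\gamma_i^j\|_2^2$ for $j<s$, by one-step conditioning on the coin $u^s$. Second, it bounds $\E\|\gamma_i^j\|_2^2\le \frac{2\sigma^2}{Bn}+\frac{1-q}{q}C_it_i^2\eta^2$ uniformly in $j$, from the scalar recursion. Third, it plugs both into the weight bound $\sum_j\alpha^2\beta^{2k-2j}+2\sum_{j<s}\alpha^2\beta^{2k-j-s}(1-q)^{s-j}\le\frac{2\alpha}{(2-\alpha)(\alpha+\beta q)}$, imported from Lemma 5 of \cite{qian2025muon}.

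\textbf{Your route.} You write $S_k$ as a combination of martingale increments whose coefficients $c_{k,l}$ depend only on $u^{l+1},\dots,u^k$. You condition on the whole coin sequence to kill the cross terms, and then compute $\sum_l\E[c_{k,l}^2]$ by summing over $l$ first. Two points in your sketch should be written out explicitly.
\begin{itemize}
\item Because $c_{k,l}$ is independent of $u^l$, the two weighted sums you display are really $\sum_{l\ge1}(1-q)\E[c_{k,l}^2]$ and $\E[c_{k,0}^2]+\sum_{l\ge1}q\E[c_{k,l}^2]$.
\item The step that produces the $1/q$ is $\sum_{l=1}^{\min(j,j')}(1-q)^{\max(j,j')-l}\le (1-q)^{|j-j'|}/q$. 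After this the same double geometric series appears. It evaluates to $\frac{\alpha}{2-\alpha}\cdot\frac{1+\beta(1-q)}{\alpha+\beta q}$, so you need $1+\beta(1-q)\le 2$; the factor $\frac{2\alpha}{(2-\alpha)(\alpha+\beta q)}$ is an upper bound, not attained exactly.
\end{itemize}

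\textbf{Comparison.} The paper's argument is more modular: it reuses a ready-made weight lemma and avoids random coefficients altogether. Yours is self-contained and shows exactly where constants are lost. Combining the $l=0$ term with the $l\ge1$ reset terms gives $(1-q)^{\max(j,j')}+q\sum_{l=1}^{\min(j,j')}(1-q)^{\max(j,j')-l}=(1-q)^{|j-j'|}$. So the $\frac{2\sigma^2}{Bn}$ in the first bound can in fact be $\frac{\sigma^2}{Bn}$. For $q=0$, your deterministic coefficients $c_{k,l}\le 1$ give $\frac{\sigma^2}{Bn}+kC_it_i^2\eta^2$, half of the paper's bound.
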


\begin{proof}
	
	For $j<s$, we have 
	
	\begin{eqnarray}
		\E \langle \gamma_i^j, \gamma_i^s \rangle &=& \E \langle g_i^j - \nabla_i f(X^j), g_i^s - \nabla_i f(X^s) \rangle \nonumber \\ 
		&=& q \E \left\langle g_i^j - \nabla_i f(X^j), \frac{1}{n} \sum_{\tau=1}^n g_i^{\tau,s} - \nabla_i f(X^s) \right\rangle \nonumber \\ 
		&& + (1-q) \E \left\langle g_i^j - \nabla_i f(X^j), g_i^{s-1} + \frac{1}{n} \sum_{\tau=1}^n y_i^{\tau,s} - \nabla_i f(X^s) \right\rangle \nonumber \\ 
		&=& (1-q) \E \langle g_i^j - \nabla_i f(X^j), g_i^{s-1} - \nabla_i f(X^{s-1}) \rangle \nonumber \\ 
		&=& (1-q)^{s-j} \E [ \|g_i^j - \nabla_i f(X^j) \|_2^2 ], \label{eq:gammajgammas}
	\end{eqnarray}
	where in the second equality we use the definition of $g_i^s$, in the third equality we use Assumption \ref{as:boundedvariance} and (\ref{eq:Qi}). 
	
	For $\E[\|\gamma_i^j\|_2^2]$, we have the following estimation. 
	\begin{eqnarray*}
		\E [\|\gamma_i^j\|_2^2] &=& \E [ \|g_i^j - \nabla_i f(X^j) \|_2^2 ] \\ 
		&=& q \E \left[  \| \frac{1}{n} \sum_{\tau=1}^n g_i^{\tau, j} - \nabla_i f(X^j) \|_2^2  \right] + (1-q) \E \left[  \| g_i^{j-1} + \frac{1}{n} \sum_{\tau=1}^n y_i^{\tau, j} - \nabla_i f(X^j) \|_2^2  \right] \\ 
		&\leq&  \frac{q \sigma^2}{Bn} + (1-q) \E \left[  \|g_i^{j-1} - \nabla_i f(X^{j-1})\|_2^2  \right] \\ 
		&& + (1-q) \E \left[  \| \frac{1}{n}\sum_{\tau=1}^n y_i^{\tau, j} - (\nabla_i f(X^j) - \nabla_i f(X^{j-1}) )\|_2^2   \right], 
	\end{eqnarray*}
	where we use Assumption \ref{as:boundedvariance}, the fact that samples $\xi^k_\tau \sim {\cal D}_\tau$ are i.i.d, and the unbiased property of compressor $Q_i$.

	Furthermore, from Lemma \ref{lm:yikbound}, we have 
	\begin{eqnarray*}
		\E \left[  \| \frac{1}{n}\sum_{\tau=1}^n y_i^{\tau, j} - (\nabla_i f(X^j) - \nabla_i f(X^{j-1}) )\|_2^2   \right] &\leq& \left(  \frac{(\omega+1) \delta_i^2}{n} + \frac{\omega L_i^2}{n}  \right) \E \left[  \|X_i^j - X_i^{j-1}\|_{(i)}^2  \right] \\ 
		&\leq&  \left(  \frac{(\omega+1) \delta_i^2}{n} + \frac{\omega L_i^2}{n}  \right) t_i^2 \eta^2, 
	\end{eqnarray*}
	where in the last inequality, we use the fact that $\|X_i^{k+1} - X_i^k\|_{(i)} \leq t_i \eta$ from (\ref{eq:updateCon}). Therefore, for $0<q\leq 1$, we arrive at 
	
	\begin{eqnarray}
		\E [\|\gamma_i^j\|_2^2] &\leq& (1-q) \E \left[  \|\gamma_i^{j-1}\|_2^2  \right] + (1-q)  \left(  \frac{(\omega+1) \delta_i^2}{n} + \frac{\omega L_i^2}{n}  \right) t_i^2 \eta^2 + \frac{q\sigma^2}{Bn} \nonumber \\ 
		&\leq& (1-q)^j \E \left[  \|\gamma_i^0\|_2^2  \right]	+ \sum_{s=0}^{j-1} (1-q)^s \left(  (1-q)  \left(  \frac{(\omega+1) \delta_i^2}{n} + \frac{\omega L_i^2}{n}  \right) t_i^2 \eta^2 + \frac{q\sigma^2}{Bn}  \right) \nonumber \\ 
		&\leq& (1-q)^j \frac{\sigma^2}{Bn} + \frac{\sigma^2}{Bn}  + \frac{1-q}{q} \left(  \frac{(\omega+1) \delta_i^2}{n} + \frac{\omega L_i^2}{n}  \right) t_i^2 \eta^2 \nonumber \\ 
		&\leq&  \frac{2\sigma^2}{Bn}  + \frac{(1-q)t_i^2 \eta^2}{q} \left(  \frac{(\omega+1) \delta_i^2}{n} + \frac{\omega L_i^2}{n}  \right), \label{eq:gammaj2}
	\end{eqnarray}
	where in the third inequality we use Assumption \ref{as:boundedvariance} and the fact that samples $\xi^k_\tau \sim {\cal D}_\tau$ are i.i.d. For $q=0$, we have 
	\begin{eqnarray}
		\E [\|\gamma_i^j\|_2^2] &\leq&  \E \left[  \|\gamma_i^{j-1}\|_2^2  \right] +   \left(  \frac{(\omega+1) \delta_i^2}{n} + \frac{\omega L_i^2}{n}  \right) t_i^2 \eta^2  \nonumber \\ 
		&\leq& \E \left[  \|\gamma_i^0\|_2^2  \right]	+ \sum_{s=0}^{j-1}   \left(  \frac{(\omega+1) \delta_i^2}{n} + \frac{\omega L_i^2}{n}  \right) t_i^2 \eta^2  \nonumber \\ 
		&\leq& \frac{\sigma^2}{Bn}   + j \left(  \frac{(\omega+1) \delta_i^2}{n} + \frac{\omega L_i^2}{n}  \right) t_i^2 \eta^2. \label{eq:gammaj2q=0}
	\end{eqnarray}
	
	Firstly, we consider the case where $0<q\leq 1$. Combining (\ref{eq:gammajgammas}) and (\ref{eq:gammaj2}) yields that 
	\begin{eqnarray}
		&& \E \left[  \| \sum_{j=1}^k \alpha \beta^{k-j} \gamma_i^j\|_2^2  \right] \nonumber \\ 
		&=& \E \left[  \sum_{j=1}^k \sum_{s=1}^k \langle \alpha \beta^{k-j} \gamma_i^j, \alpha \beta^{k-s}\gamma_i^s \rangle  \right] \nonumber  \\ 
		&=& \E \left[  \sum_{j=1}^k \alpha^2 \beta^{2k-2j} \|\gamma_i^j\|_2^2  \right] + 2 \E \left[  \sum_{j<s} \langle \alpha \beta^{k-j} \gamma_i^j, \alpha \beta^{k-s}\gamma_i^s \rangle \right] \nonumber  \\ 
		&\overset{(\ref{eq:gammajgammas})}{\leq}&  \E \left[  \sum_{j=1}^k \alpha^2 \beta^{2k-2j} \|\gamma_i^j\|_2^2  \right] + 2 \E \left[  \sum_{j<s} \alpha^2 \beta^{2k-2j} v^{s-j} \| \gamma_i^j\|_2^2  \right] \nonumber  \\ 
		&\overset{(\ref{eq:gammaj2})}{\leq}&  \left(   \frac{2\sigma^2}{Bn}  + \frac{(1-q)t_i^2 \eta^2}{q} \left(  \frac{(\omega+1) \delta_i^2}{n} + \frac{\omega L_i^2}{n}  \right)  \right) \left( \sum_{j=1}^k \alpha^2 \beta^{2k-2j} + 2\sum_{j<s} \alpha^2 \beta^{2k-2j} v^{s-j}  \right), \label{eq:sumgammaij}
	\end{eqnarray}
	where we denote $v\eqdef \frac{1-q}{\beta} = \frac{1-q}{1-\alpha}$. From the Lemma 5 in \cite{qian2025muon}, we have 
	$$
	\sum_{j=1}^k \alpha^2 \beta^{2k-2j} + 2\sum_{j<s} \alpha^2 \beta^{2k-2j} v^{s-j}  \leq \frac{2\alpha}{(2-\alpha)(\alpha+\beta q)}. 
	$$
	
	Combining tha above inequality with (\ref{eq:sumgammaij}), we can obtain 
	\begin{equation}\label{eq:sumgammaijbound}
		\E \left[  \| \sum_{j=1}^k \alpha \beta^{k-j} \gamma_i^j\|_2^2  \right]  \leq \frac{2\alpha}{(2-\alpha)(\alpha+\beta q)} \left(   \frac{2\sigma^2}{Bn}  + \frac{(1-q)t_i^2 \eta^2}{q} \left(  \frac{(\omega+1) \delta_i^2}{n} + \frac{\omega L_i^2}{n}  \right)  \right). 
	\end{equation}
	
	Now we consider the case where $q=0$. Combining (\ref{eq:gammajgammas}) and (\ref{eq:gammaj2q=0}), we can get 
	\begin{eqnarray*}
		&& \E \left[  \| \sum_{j=1}^k \alpha \beta^{k-j} \gamma_i^j\|_2^2  \right] \nonumber \\ 
		&=& \E \left[  \sum_{j=1}^k \sum_{s=1}^k \langle \alpha \beta^{k-j} \gamma_i^j, \alpha \beta^{k-s}\gamma_i^s \rangle  \right] \nonumber  \\ 
		&=& \E \left[  \sum_{j=1}^k \alpha^2 \beta^{2k-2j} \|\gamma_i^j\|_2^2  \right] + 2 \E \left[  \sum_{j<s} \langle \alpha \beta^{k-j} \gamma_i^j, \alpha \beta^{k-s}\gamma_i^s \rangle \right] \nonumber  \\ 
		&\overset{(\ref{eq:gammajgammas})}{=}& \E \left[  \sum_{j=1}^k \alpha^2 \beta^{2k-2j} \|\gamma_i^j\|_2^2  \right] + 2\sum_{j=1}^k \sum_{s=j+1}^k \alpha^2 \beta^{2k-j-s} \E \left[  \|\gamma_i^j\|_2^2  \right] \nonumber  \\ 
		&\leq&  2\sum_{j=1}^k \sum_{s=j}^k \alpha^2 \beta^{2k-j-s} \E \left[  \|\gamma_i^j\|_2^2  \right] \nonumber  \\ 
		&\leq& 2\sum_{j=1}^k \alpha \beta^{k-j} \E \left[  \|\gamma_i^j\|_2^2  \right] \nonumber  \\ 
		&\overset{(\ref{eq:gammaj2q=0})}{\leq}& 2\alpha \sum_{j=1}^k \beta^{k-j} \frac{\sigma^2}{Bn} + 2\alpha \sum_{j=1}^k \beta^{k-j} j \left(  \frac{(\omega+1) \delta_i^2}{n} + \frac{\omega L_i^2}{n}  \right) t_i^2 \eta^2 \nonumber \\ 
		&\leq& \frac{2\sigma^2}{Bn} + 2\alpha \left(  \frac{(\omega+1) \delta_i^2}{n} + \frac{\omega L_i^2}{n}  \right) t_i^2 \eta^2 \frac{k-(k+1)\beta + \beta^{k+1}}{(1-\beta)^2} \nonumber \\ 
		&\leq&  \frac{2\sigma^2}{Bn}  + 2k \left(  \frac{(\omega+1) \delta_i^2}{n} + \frac{\omega L_i^2}{n}  \right) t_i^2 \eta^2. 
	\end{eqnarray*}
	
\end{proof}

\subsection{Proof of Theorem \ref{th:cs-Gluon}}

Firstly, similar to the proof of (28) in \cite{riabinin2025gluon}, we can obtain 
\begin{align}
	\sum_{i=1}^p \sum_{k=0}^{K-1} t_i \eta \E \left[  \| \nabla_i f(X^k)\|_{(i)\star}  \right] \leq \Delta^0 + &\sum_{i=1}^p \left[  2 \sum_{k=0}^{K-1} t_i \eta \E \left[  \|M_i^k - \nabla_i f(X^k)\|_{(i)\star}   \right] \right. \label{eq:sumfgrad-cs} \\
	& \quad \quad \left.  +  \sum_{k=0}^{K-1} \frac{L_i^0}{2} t_i^2 \eta^2 + \sum_{k=0}^{K-1} \frac{L_i^1 t_i^2 \eta^2}{2} \E \left[  \| \nabla_i f(X^k)\|_{(i)\star}  \right] \right]. \nonumber 
\end{align}

We introduce the following notation: $\mu_i^k \eqdef M_i^k - \nabla_i f(X^k)$, $\gamma_i^k \eqdef g_i^k - \nabla_i f(X^k)$, $\alpha = 1- \beta$, and $S_i^k \eqdef \nabla_i f(X^{k-1}) - \nabla_i f(X^k)$. Then we have 
\begin{eqnarray*}
	\mu_i^k &=& M_i^k - \nabla_i f(X^k) \\ 
	&=& \beta M_i^{k-1} + \alpha g_i^k - \nabla_i f(X^k) \\
	&=& \beta \mu_i^{k-1} + \alpha \gamma_i^k + \beta S_i^k  \\ 
	&=& \beta^k \mu_i^0 + \sum_{j =1}^k \beta^{k-j} \alpha \gamma_i^j + \sum_{j=1}^k \beta^{k+1-j} S_i^j. 
\end{eqnarray*}

Hence we can obtain 
\begin{eqnarray}
	&& \E \left[  \|M_i^k - \nabla_i f(X^k)\|_{(i)\star}  \right] \nonumber \\ 
	&=& \E \left[  \|\mu_i^k\|_{(i)\star}  \right] \nonumber \\ 
	&\overset{(a)}{\leq}& \beta^k \E \left[  \|\mu_i^0\|_{(i)\star}  \right] + \E \left[  \| \sum_{j =1}^k \beta^{k-j} \alpha \gamma_i^j\|_{(i)\star} \right] + \sum_{j=1}^k \beta^{k+1-j} \E \left[  \|S_i^j \|_{(i)\star} \right] \nonumber \\ 
	&\overset{(b)}{\leq}& \beta^k \rho \E \left[  \|\mu_i^0\|_2  \right] + \rho \E \left[  \|\sum_{j=1}^k \beta^{k-j} \alpha \gamma_i^j \|_2 \right] + \sum_{j=1}^k \beta^{k+1-j} \left(  L_i^0 + L_i^1 \E \left[  \nabla_i f(X^j)\|_{(i)\star}  \right]  \right) t_i \eta \nonumber \\ 
	&\overset{(c)}{\leq}& \beta^k \rho \sqrt{\E \left[  \|\mu_i^0\|_2^2  \right]} + \rho \sqrt{\E \left[  \| \sum_{j=1}^k \beta^{k-j} \alpha \gamma_i^j \|_2^2 \right]} + \frac{L_i^0t_i\eta}{\alpha} + L_i^1 t_i \eta \sum_{j=1}^k \beta^{k+1-j} \E \left[  \|\nabla_i f(X^j)\|_{(i)\star}  \right] \nonumber \\ 
	&\overset{(d)}{\leq}& \frac{(1-\alpha)^k \rho \sigma}{\sqrt{Bn}} + \rho \sqrt{\E \left[  \| \sum_{j=1}^k \beta^{k-j} \alpha \gamma_i^j \|_2^2 \right]}  +  \frac{L_i^0t_i\eta}{\alpha} + L_i^1 t_i \eta \sum_{j=1}^k \beta^{k+1-j} \E \left[  \|\nabla_i f(X^j)\|_{(i)\star}  \right], \label{eq:Mik-}
\end{eqnarray}
where (a) uses the triangle inequality, (b) uses Assumptions \ref{as:L0L1smooth} and \ref{as:rho}, (c) uses Jensen’s inequality, (d) uses Assumption \ref{as:boundedvariance} and the fact that samples $\xi^k_\tau \sim {\cal D}_\tau$ are i.i.d.

Next we consider the case where $0<q\leq 1$. (\ref{eq:sumgammaijboundq>0}) in Lemma \ref{lm:sumgammaij} along with (\ref{eq:Mik-}) indicates that 
\begin{eqnarray*}
	&& \E \left[  \|M_i^k - \nabla_i f(X^k)\|_{(i)\star}  \right]  \\ 
	&\leq& \frac{(1-\alpha)^k \rho \sigma}{\sqrt{Bn}} + \frac{2\sqrt{\alpha}\rho \sigma}{\sqrt{(2-\alpha)(\alpha+\beta q)} \sqrt{Bn}}  + \frac{\sqrt{2(1-q)\alpha} \sqrt{(\omega+1)\delta_i^2 + \omega L_i^2}\rho t_i \eta}{\sqrt{(2-\alpha)(\alpha+\beta q)} \sqrt{qn}}  \\ 
	&& +  \frac{L_i^0t_i\eta}{\alpha} + L_i^1 t_i \eta \sum_{j=1}^k \beta^{k+1-j} \E \left[  \|\nabla_i f(X^j)\|_{(i)\star}  \right]. 
\end{eqnarray*}
Then from (\ref{eq:sumfgrad-cs}), we can get 
\begin{align*}
	\sum_{i=1}^p \sum_{k=0}^{K-1} t_i \eta \E \left[  \| \nabla_i f(X^k)\|_{(i)\star}  \right] \leq \Delta^0 + \sum_{i=1}^p  & \left[   \sum_{k=0}^{K-1} \frac{2(1-\alpha)^kt_i\eta\rho \sigma}{\sqrt{Bn}} + \sum_{k=0}^{K-1} \frac{4\sqrt{\alpha}t_i \eta \rho \sigma}{\sqrt{(2-\alpha)(\alpha+\beta q)} \sqrt{Bn}} \right. \\ 
	& \quad \left. + \sum_{k=0}^{K-1} \frac{2\sqrt{2(1-q)\alpha} \sqrt{(\omega+1)\delta_i^2 + \omega L_i^2}\rho t_i^2 \eta^2}{\sqrt{(2-\alpha)(\alpha+\beta q)} \sqrt{qn}}   + \sum_{k=0}^{K-1}  \frac{2L_i^0 t_i^2 \eta^2}{\alpha}   \right. \\ 
	& \quad \left. + \sum_{k=0}^{K-1} 2L_i^1 t_i^2 \eta^2 \sum_{j=1}^k \beta^{k+1-j} \E \left[  \|\nabla_i f(X^j) \|_{(i)\star}  \right] \right.  \\ 
	& \quad  \left. + \sum_{k=0}^{K-1} \frac{L_i^0 t_i^2 \eta^2}{2} + \sum_{k=0}^{K-1}  \frac{L_i^1 t_i^2 \eta^2}{2} \E \left[  \nabla_i f(X^k)\|_{(i)\star}  \right] \right]. 
\end{align*}
Since 
\begin{eqnarray}
	\sum_{k=0}^{K-1} \sum_{j=1}^k \beta^{k+1-j} \E \left[  \|\nabla_i f(X^j) \|_{(i)\star}  \right]  &=& \sum_{j=1}^{K-1} \sum_{k=j}^{K-1} \beta^{k+1-j}  \E \left[  \|\nabla_i f(X^j) \|_{(i)\star}  \right]  \nonumber \\ 
	&\leq& \frac{1}{\alpha} \sum_{k=0}^{K-1}  \E \left[  \|\nabla_i f(X^k) \|_{(i)\star}  \right], \label{eq:sumsum-betagrad-cG}
\end{eqnarray}
which implies that 
\begin{align*}
	\sum_{i=1}^p \sum_{k=0}^{K-1} t_i \eta \E \left[ \|  \nabla_i f(X^k)\|_{(i)\star}  \right] \leq \Delta^0 + \sum_{i=1}^p & \left[   \frac{2t_i \eta \rho \sigma}{\alpha\sqrt{Bn}} +  \frac{4K\sqrt{\alpha}t_i \eta \rho \sigma}{\sqrt{(2-\alpha)(\alpha+\beta q)} \sqrt{Bn}}  + \frac{2KL_i^0 t_i^2 \eta^2}{\alpha}  + \frac{K L_i^0 t_i^2 \eta^2}{2}   \right. \\ 
	& \quad  \left. +   \frac{2K\sqrt{2(1-q)\alpha} \sqrt{(\omega+1)\delta_i^2 + \omega L_i^2}\rho t_i^2 \eta^2}{\sqrt{(2-\alpha)(\alpha+\beta q)} \sqrt{qn}}     \right. \\ 
	& \quad \left.  +   \sum_{k=0}^{K-1} \left(  \frac{2}{\alpha} + \frac{1}{2}  \right) L_i^1 t_i^2 \eta^2 \E \left[  \|\nabla_i f(X^\tau) \|_{(i)\star} \right]   \right]. 
\end{align*}

Now we consider two options: (1) $L_i^1 = 0$ for all $i \in \{  1, ..., p  \}$ and (2) $L_i^1 \neq 0$, for all $i \in \{  1, ..., p  \}$.  

{\bf Case 1:} $L_i^1 = 0$ for all $i \in \{  1, ..., p  \}$. In this case, 
\begin{eqnarray*}
	&& \min_{k=0, ..., K-1} \sum_{i=1}^p t_i \E \left[  \|\nabla_i f(X^k) \|_{(i)\star} \right] \\ 
	&\leq&  \frac{1}{K} \sum_{k=0}^{K-1} \sum_{i=1}^p t_i \E \left[  \|\nabla_i f(X^k) \|_{(i)\star} \right] \\ 
	&\leq&  \frac{\Delta^0}{\eta K} + \frac{2\sum_{i=1}^p t_i \rho \sigma}{\alpha K \sqrt{Bn}} +   \frac{4\sqrt{\alpha}\sum_{i=1}^pt_i \rho \sigma}{\sqrt{(2-\alpha)(\alpha+\beta q)} \sqrt{Bn}} +    \frac{2\sqrt{2(1-q)\alpha} \sum_{i=1}^p\sqrt{(\omega+1)\delta_i^2 + \omega L_i^2} t_i^2 \rho \eta}{\sqrt{(2-\alpha)(\alpha+\beta q)} \sqrt{qn}}  \\ 
	&&  + \frac{2\sum_{i=1}^p L_i^0t_i^2 \eta}{\alpha} + \frac{\sum_{i=1}^p L_i^0 t_i^2 \eta}{2}. 
\end{eqnarray*}

{\bf Case 2:} $L_i^1 \neq 0$, for all $i \in \{  1, ..., p  \}$. First we let $\frac{\eta}{\alpha} \leq \min_{i}  \frac{1}{5L_i^1 t_i}$. Then $\left(  \frac{2}{\alpha} + \frac{1}{2}  \right) L_i^1 t_i \eta \leq \frac{1}{2}$ for all $i$, and 
\begin{eqnarray*}
	&& \min_{k=0, ..., K-1} \sum_{i=1}^p t_i \E \left[  \|\nabla_i f(X^k) \|_{(i)\star} \right] \\ 
	&\leq&  \frac{1}{K} \sum_{k=0}^{K-1} \sum_{i=1}^p t_i \E \left[  \|\nabla_i f(X^k) \|_{(i)\star} \right] \\ 
	&\leq&  \frac{2 \Delta^0}{\eta K} + \frac{4\sum_{i=1}^p t_i \rho \sigma}{\alpha K \sqrt{Bn}} +   \frac{8\sqrt{\alpha}\sum_{i=1}^pt_i \rho \sigma}{\sqrt{(2-\alpha)(\alpha+\beta q)} \sqrt{Bn}} +    \frac{4\sqrt{2(1-q)\alpha} \sum_{i=1}^p\sqrt{(\omega+1)\delta_i^2 + \omega L_i^2} t_i^2 \rho \eta}{\sqrt{(2-\alpha)(\alpha+\beta q)} \sqrt{qn}}  \\ 
	&&  + \frac{4\sum_{i=1}^p L_i^0t_i^2 \eta}{\alpha} + \sum_{i=1}^p L_i^0 t_i^2 \eta. 
\end{eqnarray*}

Now we consider the case where $q=0$. (\ref{eq:sumgammaijboundq=0}) in Lemma \ref{lm:sumgammaij} along with (\ref{eq:Mik-}) implies that 
\begin{eqnarray*}
	&& \E \left[  \|M_i^k - \nabla_i f(X^k)\|_{(i)\star}  \right]  \\ 
	&\leq& \frac{(1-\alpha)^k \rho \sigma}{\sqrt{Bn}} + \frac{\sqrt{2}\rho \sigma}{ \sqrt{Bn}}  + \frac{\sqrt{2k} \sqrt{(\omega+1)\delta_i^2 + \omega L_i^2}\rho t_i \eta}{\sqrt{n}}  \\ 
	&& +  \frac{L_i^0t_i\eta}{\alpha} + L_i^1 t_i \eta \sum_{j=1}^k \beta^{k+1-j} \E \left[  \|\nabla_i f(X^j)\|_{(i)\star}  \right]. 
\end{eqnarray*}
Then similar to the $0<q\leq 1$ case, we can obtain 
\begin{align*}
	\sum_{i=1}^p \sum_{k=0}^{K-1} t_i \eta \E \left[ \|  \nabla_i f(X^k)\|_{(i)\star}  \right] \leq \Delta^0 + \sum_{i=1}^p & \left[   \frac{2t_i \eta \rho \sigma}{\alpha\sqrt{Bn}} +  \frac{2\sqrt{2}Kt_i \eta \rho \sigma}{ \sqrt{Bn}}  + \frac{2KL_i^0 t_i^2 \eta^2}{\alpha}  + \frac{K L_i^0 t_i^2 \eta^2}{2}   \right. \\ 
	& \quad  \left. +   \frac{2\sqrt{2} K^{\frac{3}{2}} \sqrt{(\omega+1)\delta_i^2 + \omega L_i^2}\rho t_i^2 \eta^2}{ \sqrt{n}}     \right. \\ 
	& \quad \left.  +   \sum_{k=0}^{K-1} \left(  \frac{2}{\alpha} + \frac{1}{2}  \right) L_i^1 t_i^2 \eta^2 \E \left[  \|\nabla_i f(X^\tau) \|_{(i)\star} \right]   \right]. 
\end{align*}

Now we consider two options: (1) $L_i^1 = 0$ for all $i \in \{  1, ..., p  \}$ and (2) $L_i^1 \neq 0$, for all $i \in \{  1, ..., p  \}$.  

{\bf Case 1:} $L_i^1 = 0$ for all $i \in \{  1, ..., p  \}$. In this case, 
\begin{eqnarray*}
	&& \min_{k=0, ..., K-1} \sum_{i=1}^p t_i \E \left[  \|\nabla_i f(X^k) \|_{(i)\star} \right] \\ 
	&\leq&  \frac{1}{K} \sum_{k=0}^{K-1} \sum_{i=1}^p t_i \E \left[  \|\nabla_i f(X^k) \|_{(i)\star} \right] \\ 
	&\leq&  \frac{\Delta^0}{\eta K} + \frac{2\sum_{i=1}^p t_i \rho \sigma}{\alpha K \sqrt{Bn}} +   \frac{2\sqrt{2}\sum_{i=1}^pt_i \rho \sigma}{ \sqrt{Bn}} +   \frac{2\sqrt{2} \sqrt{K}\sum_{i=1}^p\sqrt{(\omega+1)\delta_i^2 + \omega L_i^2} t_i^2 \rho \eta}{ \sqrt{n}}  \\ 
	&&  + \frac{2\sum_{i=1}^p L_i^0t_i^2 \eta}{\alpha} + \frac{\sum_{i=1}^p L_i^0 t_i^2 \eta}{2}. 
\end{eqnarray*}

{\bf Case 2:} $L_i^1 \neq 0$, for all $i \in \{  1, ..., p  \}$. First we let $\frac{\eta}{\alpha} \leq \min_{i}  \frac{1}{5L_i^1 t_i}$. Then $\left(  \frac{2}{\alpha} + \frac{1}{2}  \right) L_i^1 t_i \eta \leq \frac{1}{2}$ for all $i$, and 
\begin{eqnarray*}
	&& \min_{k=0, ..., K-1} \sum_{i=1}^p t_i \E \left[  \|\nabla_i f(X^k) \|_{(i)\star} \right] \\ 
	&\leq&  \frac{1}{K} \sum_{k=0}^{K-1} \sum_{i=1}^p t_i \E \left[  \|\nabla_i f(X^k) \|_{(i)\star} \right] \\ 
	&\leq&  \frac{2 \Delta^0}{\eta K} + \frac{4\sum_{i=1}^p t_i \rho \sigma}{\alpha K \sqrt{Bn}} +   \frac{4\sqrt{2}\sum_{i=1}^pt_i \rho \sigma}{ \sqrt{Bn}} +   \frac{4\sqrt{2} \sqrt{K}\sum_{i=1}^p\sqrt{(\omega+1)\delta_i^2 + \omega L_i^2} t_i^2 \rho \eta}{ \sqrt{n}} \\ 
	&&  + \frac{4\sum_{i=1}^p L_i^0t_i^2 \eta}{\alpha} + \sum_{i=1}^p L_i^0 t_i^2 \eta. 
\end{eqnarray*}

\newpage

\section{Communication Cost Analysis for Compressed Gluon}

As explained in Subsection \ref{subsec:comc-cG}, we assume $(1-q) = \Theta(1)$ in the compressed case. 

When $q\geq \alpha$, from Theorem \ref{th:cs-Gluon}, we have 
$$
\min_{k=0, ..., K-1} \sum_{i=1}^p t_i \E \left[  \|\nabla_i f(X^k) \|_{(i)\star} \right]  \leq {\cal O} \left(  \frac{1}{\eta K} + \frac{1}{\alpha K\sqrt{Bn}}  +  \frac{\sqrt{\alpha}}{\sqrt{qBn}}   +  \frac{\sqrt{\alpha(\omega+1)} \eta}{q\sqrt{n}}  +  \frac{\eta}{\alpha}   \right), 
$$
where 
\begin{eqnarray*}
	\frac{1}{\eta K}  +   \frac{\sqrt{\alpha(\omega+1)} \eta}{q\sqrt{n}}  +  \frac{\eta}{\alpha} &\geq& 	\frac{1}{\eta K}  +   \frac{\sqrt{\alpha(\omega+1)} \eta}{q\sqrt{n}}  +  \frac{\eta}{\sqrt{\alpha} \sqrt{q}} \geq \frac{1}{\eta K}  +  \frac{2(\omega+1)^{1/4} \eta}{q^{3/4}n^{1/4}}  \\ 
	&\geq& \frac{2\sqrt{2} (\omega+1)^{1/8}}{q^{3/8} n^{1/8} K^{1/2}}. 
\end{eqnarray*}
Hence, to achieve the precision $\min_{k=0, ..., K-1} \sum_{i=1}^p t_i \E \left[  \|\nabla_i f(X^k) \|_{(i)\star} \right]  \leq \epsilon$ by Algorithm \ref{alg:q-cgluon}, $K$ is at least $\frac{(\omega+1)^{1/4}}{\epsilon^2 q^{3/4} n^{1/4}}$, and the expected totally communication cost is at least 
$$
\Theta\left( nd + Kn \left(1 + qd +  \frac{(1-q)d}{\omega+1} \right)   \right)  \geq  \Theta\left( nd +  \frac{(\omega+1)^{1/4} n^{3/4}}{\epsilon^2 q^{3/4}} + \frac{n^{3/4}d}{\epsilon^2}  \right). 
$$
When $q\leq \alpha$, from Theorem \ref{th:cs-Gluon}, we have 
$$
\min_{k=0, ..., K-1} \sum_{i=1}^p t_i \E \left[  \|\nabla_i f(X^k) \|_{(i)\star} \right]  \leq {\cal O} \left(  \frac{1}{\eta K} + \frac{1}{\alpha K\sqrt{Bn}}  +  \frac{1}{\sqrt{Bn}}   +  \frac{\sqrt{\omega+1} \eta}{\sqrt{qn}}  +  \frac{\eta}{\alpha}   \right), 
$$
where 
$$
\frac{1}{\eta K} +  \frac{\sqrt{\omega+1} \eta}{\sqrt{qn}}  \geq \frac{2(\omega+1)^{1/4}}{q^{1/4}n^{1/4}K^{1/2}}. 
$$
Then, to achieve $\epsilon$ precision, $K$ is at least $\frac{(\omega+1)^{1/2}}{\epsilon^2q^{1/2}n^{1/2}}$, and the expected totally communication cost is at least 
$$
\Theta\left( nd + Kn \left(1 + qd +  \frac{(1-q)d}{\omega+1} \right)   \right)  \geq  \Theta\left( nd +  \frac{(\omega+1)^{1/2} n^{1/2}}{\epsilon^2 q^{1/2}} + \frac{\sqrt{n}d}{\epsilon^2}  \right). 
$$
When $q=0$, from Theorem \ref{th:cs-Gluon}, we have 
\begin{eqnarray*}
	&& \min_{k=0, ..., K-1} \sum_{i=1}^p t_i \E \left[  \|\nabla_i f(X^k) \|_{(i)\star} \right] \\ 
	&\leq& {\cal O} \left(  \frac{1}{\eta K} + \frac{1}{\alpha K \sqrt{Bn}}  +   \frac{1}{\sqrt{Bn}}  +  \frac{\sqrt{K(\omega+1)}\eta}{\sqrt{n}} + \frac{\eta}{\alpha} + \frac{\eta}{2}    \right), 
\end{eqnarray*}
where 
$$
\frac{1}{\eta K} +   \frac{\sqrt{K(\omega+1)}\eta}{\sqrt{n}} \geq 2 \left(  \frac{\omega+1}{nK}  \right)^{1/4}. 
$$
Thus, to achieve the $\epsilon$ precision by Algorithm \ref{alg:q-cgluon}, $K$ is at least $\frac{\omega+1}{\epsilon^4 n}$, and the expected totally communication cost is at least 
$$
\Theta\left( nd + Kn \left(1 + \frac{d}{\omega+1} \right)   \right)  = \Theta\left( nd +  \frac{\omega+1}{\epsilon^4} + \frac{d}{\epsilon^4}  \right). 
$$
To summarize, since $n\leq \frac{1}{\epsilon^4}$ generally in practice, the expected totally communication cost is at least ${\cal O}\left(  \frac{\sqrt{n}d}{\epsilon^2}  \right)$ from the upper-bounds in Theorem \ref{th:cs-Gluon}. \\

Next we consider the detailed parameter settings. To balance the last two terms in (\ref{eq:comcost-k-cg}), we assume $q(\omega+1) = \Theta(1)$. We first consider the case where $L_i^1 = 0$. We assume $qB = \Theta(1)$ such that the expected stochastic gradient oracle at each step on each working node is $qB + 2(1-q) = \Theta(1)$. When $q\geq \alpha$, from Theorem \ref{th:cs-Gluon}, we have 
\begin{eqnarray*}
	&& \min_{k=0, ..., K-1} \sum_{i=1}^p t_i \E \left[  \|\nabla_i f(X^k) \|_{(i)\star} \right] \\ 
	&\leq& {\cal O} \left(  \frac{1}{\eta K} + \frac{\sqrt{q}}{\alpha K \sqrt{n}}  +   \frac{\sqrt{\alpha}}{\sqrt{n}}  +  \frac{\sqrt{\alpha }\eta}{q\sqrt{qn}} + \frac{\eta}{\alpha} + \frac{\eta}{2}   \right). 
\end{eqnarray*}
Noticed that 
$$
\frac{1}{\eta K}  + \frac{\sqrt{\alpha}}{\sqrt{n}}  + \frac{\eta}{\alpha} \geq \frac{2}{\sqrt{\alpha K}} + \frac{\sqrt{\alpha}}{\sqrt{n}}   \geq \frac{2\sqrt{2}}{n^{1/4}K^{1/4}}, 
$$
and the minimum is obtained by choosing $\alpha = \frac{n^{1/2}}{K^{1/2}}$, $\eta = \frac{n^{1/4}}{K^{3/4}}$. Since $q\geq \alpha$, we have $\frac{\sqrt{\alpha }\eta}{q\sqrt{qn}} \leq \frac{1}{n^{3/4}K^{1/4}}$, which implies that 
$$
\min_{k=0, ..., K-1} \sum_{i=1}^p t_i \E \left[  \|\nabla_i f(X^k) \|_{(i)\star} \right]  \leq {\cal O} \left(  \frac{1}{n^{1/4}K^{1/4}}  +  \frac{n^{1/4}}{K^{3/4}}  \right). 
$$
Assume $n\leq K$, which generally holds in practice. Then the above upper-bound becomes $ {\cal O} \left(  \frac{1}{n^{1/4}K^{1/4}}  \right)$. Since the communication cost in (\ref{eq:comcost-k-cg}) is proportional to $q$, we choose $q=\alpha$. To achieve the precision $\min_{k=0, ..., K-1} \sum_{i=1}^p t_i \E \left[  \|\nabla_i f(X^k) \|_{(i)\star} \right]  \leq \epsilon$ by Algorithm \ref{alg:q-cgluon}, it is sufficient to choose $K = \frac{1}{\epsilon^4 n}$, and the expected totally communication cost is 
\begin{equation}\label{eq:cmuon-comcost-q>a}
	\Theta\left( nd + Kn \left(1 + qd + \frac{(1-q)d}{\omega+1} \right) \right)  = \Theta\left(  \frac{1}{\epsilon^4} + \frac{nd}{\epsilon^2}  \right). 
\end{equation}
In this case, $B = \Theta(\frac{1}{q}) = \Theta(\frac{1}{\epsilon^2 n})$ and the expected stochastic gradient oracle is $\Theta(Bn + Kn(qB+ (1-q))) = \Theta(\frac{1}{\epsilon^4})$. 
For the above parameter setting, if we increase $\alpha$, then $q$ will increase since $q\geq \alpha$, and meanwhile the upper-bound of $\min_{k=0, ..., K-1} \sum_{i=1}^p t_i \E \left[  \|\nabla_i f(X^k) \|_{(i)\star} \right]$ is at least ${\cal O} \left(  \frac{1}{n^{1/4}K^{1/4}}  \right)$. Hence, the communication cost will increase as well. Next we decrease $\alpha$, i.e., let $\alpha = \frac{cn^{1/2}}{K^{1/2}}$ with $0<c<1$. Then it is easy to verify that the upper-bound is at least ${\cal O} \left(  \frac{1}{\sqrt{c}n^{1/4}K^{1/4}}  \right)$ with the minimum obtained by $\eta = \frac{\sqrt{c}n^{1/4}}{K^{3/4}}$. Similar to the above analysis, to achieve the precision $\min_{k=0, ..., K-1} \sum_{i=1}^p t_i \E \left[  \|\nabla_i f(X^k) \|_{(i)\star} \right]  \leq \epsilon$ by Algorithm \ref{alg:q-cgluon}, by choosing $K = \frac{1}{c^2\epsilon^4 n}$ and $q=\alpha = c^2 n \epsilon^2$, the expected totally communication cost is 
$$
\Theta\left( nd + Kn \left(1 + qd + \frac{(1-q)d}{\omega+1} \right) \right)  = \Theta\left(  \frac{1}{c^2\epsilon^4} + \frac{nd}{\epsilon^2}  \right), 
$$
which is larger than (\ref{eq:cmuon-comcost-q>a}).

When $q\leq \alpha$, from Theorem \ref{th:cs-Gluon}, we have 
\begin{eqnarray*}
	&& \min_{k=0, ..., K-1} \sum_{i=1}^p t_i \E \left[  \|\nabla_i f(X^k) \|_{(i)\star} \right] \\ 
	&\leq& {\cal O} \left(  \frac{1}{\eta K} + \frac{\sqrt{q}}{\alpha K \sqrt{n}}  +   \frac{\sqrt{q}}{\sqrt{n}}  +  \frac{\eta}{q\sqrt{n}} + \frac{\eta}{\alpha} + \frac{\eta}{2}    \right), 
\end{eqnarray*}
where 
$$
\frac{1}{\eta K} +   \frac{\sqrt{q}}{\sqrt{n}}  +  \frac{\eta}{q\sqrt{n}} \geq \frac{2}{n^{1/4} \sqrt{qK}} +  \frac{\sqrt{q}}{\sqrt{n}}  \geq \frac{2\sqrt{2}}{n^{3/8} K^{1/4}}, 
$$
and the minimum is attained by choosing $q = \frac{n^{1/4}}{K^{1/2}}$, $\eta = \frac{n^{3/8}}{K^{3/4}}$. Assume $n\leq K^{2/3}$. Then as long as $\alpha \geq \frac{n^{3/4}}{K^{1/2}}$, we can get $\frac{\eta}{\alpha} \leq \frac{1}{n^{3/8} K^{1/4}}$. Thus we have $\min_{k=0, ..., K-1} \sum_{i=1}^p t_i \E \left[  \|\nabla_i f(X^k) \|_{(i)\star} \right]  \leq {\cal O} \left(  \frac{1}{n^{3/8}K^{1/4}}  \right)$.  To achieve the $\epsilon$ precision by Algorithm \ref{alg:q-cgluon}, it is sufficient to choose $K = \frac{1}{\epsilon^4 n^{3/2}}$, and the expected totally communication cost is 
\begin{equation}\label{eq:cmuon-comcost-q<a}
	\Theta\left( nd + Kn \left(1 + qd + \frac{(1-q)d}{\omega+1} \right) \right)  = \Theta\left( nd +  \frac{1}{\epsilon^4 \sqrt{n}} + \frac{\sqrt{n}d}{\epsilon^2}  \right), 
\end{equation}
which is better than (\ref{eq:cmuon-comcost-q>a}). 
In the above equality, $qd = \frac{n^{1/4} d}{K^{1/2}}$ is generally greater than $1$ for LLMs since $d$ is usually very large. Thus the dominate term is $\frac{\sqrt{n}d}{\epsilon^2}$. 

Next we relax the condition that $qB=\Theta(1)$. By choosing $q = \frac{cn^{1/4}}{K^{1/2}}$, $\eta = \frac{\sqrt{c}n^{3/8}}{K^{3/4}}$, $B = \frac{cK^{1/2}}{n^{1/4}}$, and $\alpha \geq \max\{  \frac{cn^{3/4}}{K^{1/2}},  K^{-1}  \}$ for $0<c \leq  \frac{K^{1/2}}{n^{3/4}}$, we have 
\begin{eqnarray*}
	&& \min_{k=0, ..., K-1} \sum_{i=1}^p t_i \E \left[  \|\nabla_i f(X^k) \|_{(i)\star} \right] \\ 
	&\leq& {\cal O} \left(  \frac{1}{\sqrt{c} n^{3/8}K^{1/4}} + \frac{1}{\sqrt{c} n^{3/8}K^{5/4}} +   \frac{\sqrt{c}n^{3/8}}{K^{3/4}}  \right) \leq  {\cal O} \left(  \frac{1}{\sqrt{c} n^{3/8}K^{1/4}} \right). 
\end{eqnarray*}
To achieve the $\epsilon$ precision by Algorithm \ref{alg:q-cgluon}, it is sufficient to choose $K = \frac{1}{c^2 \epsilon^4 n^{3/2}}$, and the expected totally communication cost is 
\begin{equation}\label{eq:comcost-cG-ap}
	\Theta\left( nd + Kn \left(1 + qd + \frac{(1-q)d}{\omega+1} \right) \right)  = \Theta\left(  nd + \frac{1}{c^2 \epsilon^4 \sqrt{n}} + \frac{\sqrt{n}d}{\epsilon^2}  \right), 
\end{equation}
which is (\ref{eq:cmuon-comcost-q<a}) when $c=1$. In this case, $q=c^2n\epsilon^2$, $B = \frac{1}{\epsilon^2 n}$, $\eta = c^2 n^{\frac{3}{2}} \epsilon^3$, $\alpha \geq \max\{  cn^{\frac{3}{2}}\epsilon^2,  c^2n^{\frac{3}{2}}\epsilon^4 \}$, and the expected stochastic gradient oracle is $\Theta\left(  \frac{1}{\epsilon^2} +  \left( 1 + \frac{1}{c^2}\right) \frac{1}{\epsilon^4 \sqrt{n}} \right)$. 
The dominate term in (\ref{eq:comcost-cG-ap}) is $\frac{\sqrt{n}d}{\epsilon^2}$ for large $d$ in LLMs. For the uncompressed algorithm in subsection \ref{subsec:new-vr-alg}, the totally communication cost is $\Theta\left(  \frac{d}{\epsilon^3}  \right)$, which is larger than that in (\ref{eq:comcost-cG-ap}) when $n \leq \frac{1}{\epsilon^2}$.

Finally, for the case where $L_i^1 \neq 0$, the above results also hold as long as $\frac{\eta}{\alpha} \leq \min_{i}  \frac{1}{5L_i^1 t_i}$, which generally holds for large $K$ in practice.

\newpage 

\section{Communication Cost Analysis for Compressed Gluon in the $L_i^0=0$ Case}

\paragraph{Uncompressed case} First, we consider the uncompressed case, i.e., $\omega=0$. Then the communication cost at each step for each working node is $\Theta(d)$. To reduce the expected totally communication cost, the convergence rate should be as fast as possible. Hence from the upper-bounds in Theorem \ref{th:cs-Gluon}, we should choose $q=\alpha=1$, and have 
$$
\min_{k=0, ..., K-1} \sum_{i=1}^p t_i \E \left[  \|\nabla_i f(X^k) \|_{(i)\star} \right]  \leq \frac{2\Delta^0}{\eta K} + \frac{4\sum_{i=1}^p t_i\rho \sigma}{K\sqrt{Bn}}  +   \frac{8\sum_{i=1}^p t_i\rho \sigma}{\sqrt{Bn}}. 
$$
When $\alpha=1$, the upper-bound of $\sum_{k=0}^{K-1} \sum_{j=1}^k \beta^{k+1-j} \E \left[  \|\nabla_i f(X^j) \|_{(i)\star}  \right]$ in (\ref{eq:sumsum-betagrad-cG}) could be zero. Thus the condition $\left(  \frac{2}{\alpha} + \frac{1}{2}  \right) L_i^1 t_i \eta \leq \frac{1}{2}$ can be replaced by $\frac{1}{2} L_i^1 t_i \eta \leq \frac{1}{2}$, which implies $\eta \leq \min_i \frac{1}{L_i^1t_i}$. To reach $\epsilon$ precision, it is sufficient to choose $\eta = \min_i \frac{1}{L_i^1t_i}$, (assume $K\geq 8$), 
$$
K = \frac{4\Delta^0}{\epsilon \min_i \frac{1}{L_i^1 t_i}}, \quad B = \frac{289(\sum_{i=1}^p t_i) \rho^2 \sigma^2}{\epsilon^2 n}. 
$$
If we set $t_i = \frac{1/L_i^1}{\frac{1}{p} \sum_{j=1}^p 1/L_j^1}$, then $K = \frac{4\Delta^0}{\epsilon (\frac{1}{p} \sum_{j=1}^p 1/L_j^1)}$, which is twice as much as the iteration complexity of deterministic Gluon \cite{riabinin2025gluon} when $L_i^0=0$. The above batch size is large, and we can actually choose small $\alpha$ to reduce it. From Theorem \ref{th:cs-Gluon}, by choosing $\eta = \alpha c_0$, where we denote $c_0 \eqdef \min_i \frac{1}{5L_i^1 t_i}$, we have 
$$
\min_{k=0, ..., K-1} \sum_{i=1}^p t_i \E \left[  \|\nabla_i f(X^k) \|_{(i)\star} \right]  \leq \frac{2\Delta^0}{\alpha K  c_0}  +  \frac{12\sqrt{\alpha} \sum_{i=1}^p t_i \rho \sigma}{\sqrt{Bn}}, 
$$
for $\alpha \geq K^{-\frac{2}{3}}$; and 
$$
\min_{k=0, ..., K-1} \sum_{i=1}^p t_i \E \left[  \|\nabla_i f(X^k) \|_{(i)\star} \right]  \leq \frac{2\Delta^0}{\alpha K  c_0}  +  \frac{12 \sum_{i=1}^p t_i \rho \sigma}{\alpha K\sqrt{Bn}}, 
$$
for $\alpha \leq K^{-\frac{2}{3}}$. Therefore, to reach $\epsilon$ precision, it is sufficient to choose 
$$
K = \frac{4\Delta^0}{\alpha \epsilon c_0}, \quad B = \frac{576\alpha (\sum_{i=1}^p t_i) \rho^2 \sigma^2}{\epsilon^2 n}, 
$$
for $\left(  \frac{\epsilon c_0}{4\Delta^0}  \right)^2 \leq \alpha \leq 1$; and choose 
$$
K = \frac{4\Delta^0}{\alpha \epsilon c_0}, \quad B \geq \frac{36c_0^2 (\sum_{i=1}^p t_i)^2 \rho^2 \sigma^2}{(\Delta^0)^2 n}, 
$$
for $\alpha \leq \left(  \frac{\epsilon c_0}{4\Delta^0}  \right)^2$.

\paragraph{Compressed case}  If $q=\Theta(1)$, then the expected communication cost at each step in (\ref{eq:comcost-k-cg}) is $\Theta(nd)$, which is the same as that of uncompressed algorithms. Hence we assume $(1-q) = \Theta(1)$ in the compressed case. 

When $q\geq \alpha$, from Theorem \ref{th:cs-Gluon}, we can get 
$$
\min_{k=0, ..., K-1} \sum_{i=1}^p t_i \E \left[  \|\nabla_i f(X^k) \|_{(i)\star} \right]  \leq {\cal O} \left(  \frac{1}{\eta K}  +  \frac{1}{\alpha K \sqrt{Bn}}  +  \frac{\sqrt{\alpha}}{\sqrt{qBn}}  +   \frac{\sqrt{\alpha (\omega+1)}\eta}{q\sqrt{n}}   \right), 
$$
where 
\begin{eqnarray*}
	\frac{1}{\eta K}  +    \frac{\sqrt{\alpha (\omega+1)}\eta}{q\sqrt{n}}  &\geq&  \frac{1}{\eta K}  +   \frac{\sqrt{ (\omega+1)}\eta^{\frac{3}{2}}}{q\sqrt{nc_0}} \\ 
	&\geq&  \frac{(\omega+1)^{1/5}}{K^{3/5}(q\sqrt{nc_0})^{2/5}}. 
\end{eqnarray*}
Thus, to reach $\epsilon$ precision, $K$ is at least $\frac{(\omega+1)^{1/3}}{\epsilon^{5/3} q^{2/3}(nc_0)^{1/3}}$, and the expected totally communication cost is at least 
$$
\Theta\left( nd + Kn \left(1 + qd +  \frac{(1-q)d}{\omega+1} \right)   \right)  \geq  \Theta\left( nd +  \frac{(\omega+1)^{1/3} n^{2/3}}{\epsilon^{5/3} q^{2/3} c_0^{1/3}} + \frac{n^{2/3}d}{\epsilon^{5/3} c_0^{1/3}}  \right). 
$$
When $q\leq \alpha$, from Theorem \ref{th:cs-Gluon}, we can obtain 
$$
\min_{k=0, ..., K-1} \sum_{i=1}^p t_i \E \left[  \|\nabla_i f(X^k) \|_{(i)\star} \right]  \leq {\cal O} \left(  \frac{1}{\eta K}  +  \frac{1}{\alpha K \sqrt{Bn}}  +  \frac{1}{\sqrt{Bn}}  +   \frac{\sqrt{\omega+1}\eta}{\sqrt{qn}}   \right). 
$$
Then similar to the analysis in Subsection \ref{subsec:comc-cG}, $K$ is at least $\frac{\sqrt{\omega+1}}{\epsilon^2 \sqrt{qn}}$, and the expected totally communication cost is at least $\Theta\left( nd +  \frac{\sqrt{n} \sqrt{\omega+1}}{\epsilon^2 \sqrt{q}} +  \frac{\sqrt{n}d}{\epsilon^2}  \right)$. 

When $q=0$, similar to the analysis in Subsection \ref{subsec:comc-cG}, $K$ is at least $\frac{\omega+1}{\epsilon^4 n}$, and the expected totally communication cost is at least $\Theta\left( nd +  \frac{\omega+1}{\epsilon^4} + \frac{d}{\epsilon^4}  \right)$. 

If $\frac{\sqrt{n}d}{\epsilon^2}  \leq \frac{n^{2/3}d}{\epsilon^{5/3} c_0^{1/3}}$, then we can use the parameter settings in Subsection \ref{subsec:comc-cG} to achieve $\Theta\left(  \frac{\sqrt{n}d}{\epsilon^2}   \right)$ expected totally communication cost. If $\frac{\sqrt{n}d}{\epsilon^2}  \geq \frac{n^{2/3}d}{\epsilon^{5/3} c_0^{1/3}}$, i.e., $n \leq \frac{c_0^2}{\epsilon^2}$, we will show how to choose the parameters in the following. To balance the last two terms in (\ref{eq:comcost-k-cg}), we also assume $q(\omega+1) = \Theta(1)$. Without loss of generality, we assume $c_0 \leq 1$. By choosing $\eta = c_0^{\frac{1}{3}} n^{\frac{1}{3}} K^{-\frac{2}{3}}$, $\alpha = c_0^{-\frac{2}{3}} n^{\frac{1}{3}} K^{-\frac{2}{3}}$, $q = c_0^{\frac{2}{9}} n^{\frac{2}{9}} K^{-\frac{4}{9}}$, and $B = 1/q$, from Theorem \ref{th:cs-Gluon}, we can obtain 
\begin{eqnarray*}
	\min_{k=0, ..., K-1} \sum_{i=1}^p t_i \E \left[  \|\nabla_i f(X^k) \|_{(i)\star} \right]  &\leq& {\cal O} \left(  \frac{1}{\eta K}  +  \frac{1}{\alpha K \sqrt{B}}  +  \frac{\sqrt{\alpha}}{\sqrt{qBn}}  +  \frac{\sqrt{\alpha} \eta}{q\sqrt{qn}}  \right) \\ 
	&\leq& {\cal O} \left(  \frac{1}{c_0^{1/3} n^{1/3}K^{1/3}}  \right). 
\end{eqnarray*}
Thus, to reach $\epsilon$ precision, it is sufficient to choose $K = \frac{1}{\epsilon^3 c_0 n}$, and the expected totally communication cost is 
$$
\Theta\left( nd + Kn \left(1 + qd +  \frac{(1-q)d}{\omega+1} \right)   \right)  =  \Theta\left( nd + \frac{1}{\epsilon^3 c_0} + \frac{n^{2/3}d}{\epsilon^{5/3} c_0^{1/3}}  \right). 
$$
In this case, $\eta = c_0 n \epsilon^2$, $\alpha = n\epsilon^2$, $q = c_0^{\frac{2}{3}} n^{\frac{2}{3}} \epsilon^{\frac{4}{3}}$, and the expected stochastic gradient oracle is $\Theta \left(  \frac{1}{\epsilon^3 c_0}  \right)$.

\newpage 

\section{Compressed Gluon with MVR}

\begin{algorithm}
	\caption{Compressed Gluon with MVR}
	\label{alg:q-cgluon-mvr}
	\begin{algorithmic}[1]
		\STATE \textbf{Input:} Initial model parameters $X^0 = [X_1^0, \dots, X_p^0] \in \mathcal{S}$, momentum $M^0 = [M_1^0, \dots, M_p^0] \in \mathcal{S}$, momentum decay factors $\beta \in [0, 1)$ for all iterations $k \geq 0$, probability $q \in (0, 1]$, batch size $B$, stepsize parameter $\eta>0$, $u^0=1 \in \R$ 
		\FOR{ $k = 0, 1, 2, ..., K-1$}
		\FOR{ $\tau = 1, ..., n$} 
		\STATE $u^{k+1}_\tau = 0$ for $\tau = 2, ..., n$, \quad 
		$
		u^{k+1}_1 = \left\{ \begin{array}{rl}
			1 & \mbox{ with probability $q$} \\
			0 &\mbox{ with probability $1-q$}
		\end{array} \right.
		$\\
		\IF{$u^k = 1$}
		\STATE Sample $\xi^k_{\tau, j} \sim \mathcal{D_\tau}$ independently for $j = 1, ..., B$
		\STATE $g_i^{\tau, k} = \frac{1}{B} \sum_{j=1}^B \nabla f_{\xi^k_{\tau, j}} (X^k)$ for $i = 1, ..., p$ 
		\STATE Send $g_i^{\tau, k}$ to the other nodes
		\STATE Receive $g_i^{\tau, k}$ from the other nodes
		\ENDIF
		\STATE Sample $\xi^k_\tau \sim \mathcal{D_\tau}$ 
		\STATE  $y_i^{\tau, k} = Q_i\left(  \nabla_i f_{\xi^k_\tau} (X^k) - \nabla_i f_{\xi^k_\tau} (X^{k-1})  \right)$ for $i = 1, ..., p$ 
		\STATE Send $y_i^{\tau, k}$ and $u^{k+1}_\tau$ to the other nodes
		\STATE Receive $y_i^{\tau, k}$ and $u^{k+1}_\tau$ from the other nodes
		
		\ENDFOR
		\STATE $y_i^k = \frac{1}{n}\sum_{\tau=1}^n y_i^{\tau,k}$
		\STATE 
		$
		g_i^k= \left\{ \begin{array}{cl}
			\frac{1}{n} \sum_{\tau=1}^n g_i^{\tau, k} & \mbox{ if $u^k = 1$} \\
			g_i^{k-1} + y_i^k &\mbox{ otherwise}
		\end{array} \right.
		$
		\STATE $u^{k+1} = \sum_{\tau=1}^n u^{k+1}_\tau$
		\STATE Update momentum $M_i^k = \beta M_i^{k-1} + (1-\beta) g_i^k + \beta y_i^k$ for layer $i$ 
		\STATE Choose adaptive stepsize/radius $t_i \eta > 0$ for layer $i$
		\STATE Update parameters for layer $i$ via LMO over $\mathcal{B}_i^k := \{X_i \in \mathcal{S}_i : \|X_i - X_i^k\|_{(i)} \leq t_i \eta\}$:
		\begin{equation}\label{eq:updateCon-mvr}
			X_i^{k+1} = \text{LMO}_{\mathcal{B}_i^k}(M_i^k) := \arg \min_{X_i \in \mathcal{B}_i^k} \langle M_i^k, X_i \rangle_{(i)}
		\end{equation}
		\STATE Update full parameter vector $X^{k+1} = [X_1^{k+1}, \dots, X_p^{k+1}]$
		\ENDFOR
	\end{algorithmic}
\end{algorithm}

\begin{theorem}\label{th:cs-Gluon-mvr}
	Let Assumptions \ref{as:L0L1smooth}, \ref{as:Lismooth}, \ref{as:boundedvariance}, \ref{as:rho}, and Assumption \ref{as:HV} hold. Assume each $Q_i$ in Algorithm \ref{alg:q-cgluon-mvr} is unbiased compressor satisfying (\ref{eq:Qi}).   Let $X^0, ..., X^{K-1}$ be the iterates of Algorithm \ref{alg:q-cgluon-mvr}, and $M_i^0 = \frac{1}{Bn} \sum_{\tau=1}^n \sum_{j=1}^B \nabla_i f_{\xi_{\tau, j}^k} (X^0)$. 
	
	1. If $L_i^1 =0$, then for $0<q\leq 1$, 
	\begin{eqnarray}
		&& \min_{k=0, ..., K-1} \sum_{i=1}^p t_i \E \left[  \|\nabla_i f(X^k) \|_{(i)\star} \right] \nonumber  \\ 
		&\leq& \frac{\Delta^0}{\eta K} + \frac{2\sum_{i=1}^p t_i \rho \sigma}{\alpha K\sqrt{Bn}}  +   \frac{4\sqrt{\alpha}\sum_{i=1}^p t_i  \rho \sigma}{\sqrt{(2-\alpha)(\alpha+\beta q)} \sqrt{Bn}}  +    \frac{2\rho \sum_{i=1}^p t_i^2 \sqrt{(\omega+1)\delta_i^2 + \omega L_i^2}\eta}{\sqrt{\alpha n}}  \nonumber  \\ 
		&&  +    \frac{2\sqrt{2(1-q)\alpha} \sum_{i=1}^p \sqrt{(\omega+1)\delta_i^2 + \omega L_i^2} t_i^2 \rho \eta}{\sqrt{(2-\alpha)(\alpha+\beta q)} \sqrt{qn}}   + \frac{\sum_{i=1}^p L_i^0 t_i^2 \eta}{2};  \label{eq:th-cs-Gluon-mvr-1}
	\end{eqnarray}
	for $q=0$, 
	\begin{eqnarray}
		&& \min_{k=0, ..., K-1} \sum_{i=1}^p t_i \E \left[  \|\nabla_i f(X^k) \|_{(i)\star} \right]  \nonumber  \\ 
		&\leq& \frac{\Delta^0}{\eta K} + \frac{2\sum_{i=1}^p t_i \rho \sigma}{\alpha K\sqrt{Bn}}  +   \frac{2\sqrt{2}\sum_{i=1}^p t_i  \rho \sigma}{\sqrt{Bn}}  +    \frac{2\rho \sum_{i=1}^p t_i^2 \sqrt{(\omega+1)\delta_i^2 + \omega L_i^2}\eta}{\sqrt{\alpha n}}   \nonumber  \\ 
		&&  +    \frac{2\sqrt{2} \sqrt{K} \sum_{i=1}^p \sqrt{(\omega+1)\delta_i^2 + \omega L_i^2} t_i^2 \rho \eta}{\sqrt{n}}   + \frac{\sum_{i=1}^p L_i^0 t_i^2 \eta}{2}. \label{eq:th-cs-Gluon-mvr-2}
	\end{eqnarray}

	2. If $L_i^1 \neq 0$, we let $\eta \leq \min_i \frac{1}{L_i^1 t_i}$. Then the inequalities (\ref{eq:th-cs-Gluon-mvr-1}) and (\ref{eq:th-cs-Gluon-mvr-2}) remain valid when their right-hand sides are multiplied by a factor $2$. 
	
\end{theorem}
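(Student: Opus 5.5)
The plan follows the proof of Theorem \ref{th:cs-Gluon}, with the momentum-error recursion changed to reflect the MVR correction $\beta y_i^k$. As there, I would start from the one-step descent inequality (\ref{eq:sumfgrad-cs}). Its derivation only uses the LMO step and layer-wise $(L^0,L^1)$-smoothness, so it holds verbatim for Algorithm \ref{alg:q-cgluon-mvr}. Everything then reduces to bounding $\E\|\mu_i^k\|_{(i)\star}$ with $\mu_i^k = M_i^k - \nabla_i f(X^k)$.

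With $\alpha=1-\beta$, $\gamma_i^k=g_i^k-\nabla_i f(X^k)$ and $S_i^k=\nabla_i f(X^{k-1})-\nabla_i f(X^k)$, the update $M_i^k=\beta M_i^{k-1}+\alpha g_i^k+\beta y_i^k$ gives
\begin{equation*}
\mu_i^k=\beta\mu_i^{k-1}+\alpha\gamma_i^k+\beta\big(y_i^k-(\nabla_i f(X^k)-\nabla_i f(X^{k-1}))\big).
\end{equation*}
Denote the last bracket by $\zeta_i^k$. Unrolling yields
\begin{equation*}
\mu_i^k=\beta^k\mu_i^0+\sum_{j=1}^k\alpha\beta^{k-j}\gamma_i^j+\sum_{j=1}^k\beta^{k+1-j}\zeta_i^j .
\end{equation*}
The key gain is that the deterministic drift term $\sum_j\beta^{k+1-j}S_i^j$ is gone. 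That term produced $L_i^0 t_i\eta/\alpha$ and the $L_i^1/\alpha$ coupling in Theorem \ref{th:cs-Gluon}. Its replacement, the $\zeta_i^j$, is a sum of conditionally zero-mean terms.

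I would bound each of the three pieces separately.
\begin{itemize}
\item The first piece is $\le \beta^k\rho\sigma/\sqrt{Bn}$, exactly as before.
\item For the second piece I would reuse Lemma \ref{lm:sumgammaij}. This needs a check: $g_i^k$ is still defined by the same recursion, with the same $y_i^{\tau,k}$ in the $u^k=0$ branch, and $\|X_i^j-X_i^{j-1}\|_{(i)}\le t_i\eta$. So (\ref{eq:gammajgammas}), (\ref{eq:gammaj2}) and (\ref{eq:gammaj2q=0}) carry over, and after $\rho\sqrt{\cdot}$ this gives the $\sigma$-term and the $\sqrt{(1-q)\alpha}/\sqrt{qn}$ term for $q>0$, and the $\sqrt{K}$ term for $q=0$.
\item For the third piece I would use Jensen and Assumption \ref{as:rho}, then expand the square. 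The $\zeta_i^j$ are martingale differences: fresh samples and compressor randomness at step $j$ make $\E[\zeta_i^j\mid\mathcal F_{j-1}]=0$. Hence the cross terms vanish, and Lemma \ref{lm:yikbound} gives
\begin{equation*}
\E\Big\|\sum_j\beta^{k+1-j}\zeta_i^j\Big\|_2^2\le\sum_j\beta^{2(k+1-j)}\,\frac{(\omega+1)\delta_i^2+\omega L_i^2}{n}\,t_i^2\eta^2\le\frac{(\omega+1)\delta_i^2+\omega L_i^2}{\alpha n}\,t_i^2\eta^2 .
\end{equation*}
This produces the new term $\rho t_i\eta\sqrt{(\omega+1)\delta_i^2+\omega L_i^2}/\sqrt{\alpha n}$, which appears doubled in (\ref{eq:th-cs-Gluon-mvr-1}).
\end{itemize}

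I expect this martingale step to be the main obstacle, because $\zeta_i^j$ and $\gamma_i^s$ are correlated. When $u^s=0$, $\gamma_i^s$ contains $y_i^s$, so the two sums cannot be merged into a single clean variance computation. I would sidestep this by applying the triangle inequality to the three pieces before taking square roots, which avoids the cross covariance at the cost of constants. I would also have to confirm that the zero-mean property of $\zeta_i^j$ survives conditioning on $u^j$. It does, because $u^j$ is drawn independently of $\xi^j_\tau$ and $Q_i$. A further subtlety: for $j=0$ the quantity $X^{-1}$ must be treated as $X^0$, so the sums start at $j=1$.

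Plugging into (\ref{eq:sumfgrad-cs}), summing over $k$, using $\sum_k\beta^k\le1/\alpha$ and dividing by $\eta K$ gives (\ref{eq:th-cs-Gluon-mvr-1}) and (\ref{eq:th-cs-Gluon-mvr-2}) when $L_i^1=0$. The $L_i^0 t_i^2\eta/\alpha$ term is absent because no $S_i^j$ remains. When $L_i^1\ne0$, the only $\|\nabla_i f\|$ term on the right is $\frac{L_i^1t_i^2\eta^2}{2}\E\|\nabla_i f(X^k)\|_{(i)\star}$ from (\ref{eq:sumfgrad-cs}). Under $\eta\le 1/(L_i^1t_i)$ it is at most half of the left-hand side's $t_i\eta\E\|\nabla_i f(X^k)\|$. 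Absorbing it leaves a factor $1/2$ on the left, hence the factor $2$ on the right.
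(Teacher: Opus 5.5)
Your proposal is correct and follows the paper's proof essentially step for step. It uses the same unrolling $\mu_i^k=\beta^k\mu_i^0+\sum_{j=1}^k\alpha\beta^{k-j}\gamma_i^j+\sum_{j=1}^k\beta^{k+1-j}Z_i^j$ (your $\zeta_i^j$ is the paper's $Z_i^j$), the triangle inequality and Jensen on each piece, Lemma~\ref{lm:sumgammaij} reused for the $\gamma$-sum, Lemma~\ref{lm:yikbound} with vanishing cross terms for the new $1/\sqrt{\alpha n}$ term, and absorption of the single $L_i^1$ term of (\ref{eq:sumfgrad-cs}) under $\eta\le 1/(L_i^1 t_i)$.
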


From Theorem \ref{th:cs-Gluon-mvr}, we can get the following convergence rate under suitable choices of parameters. 

\begin{theorem}\label{th:cs-Gluon-mvr-rate}
Under the premise of Theorem \ref{th:cs-Gluon-mvr}, let $\alpha = \Theta(1)$, $q>0$, and $\frac{\Delta^0}{\eta^2 K} = {\cal L}_2 \eqdef  \sum_{i=1}^p L_i^0t_i^2 + \frac{\sqrt{1-q} + \sqrt{q}}{\sqrt{qn}} \sum_{i=1}^p \rho \sqrt{(\omega+1)\delta_i^2 + \omega L_i^2}t_i^2$. Then the upper-bound in (\ref{eq:th-cs-Gluon-mvr-1}) becomes 
$$
{\cal O} \left(  \frac{\sqrt{\Delta^0 {\cal L}_2} }{\sqrt{K}}  +  \frac{\sum_it_i \rho \sigma}{\sqrt{Bn}}  \right). 
$$
\end{theorem}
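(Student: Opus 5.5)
The plan is to plug the parameter choices directly into the bound (\ref{eq:th-cs-Gluon-mvr-1}) of Theorem \ref{th:cs-Gluon-mvr} and group the terms into three families: those proportional to $\eta$, the term $\Delta^0/(\eta K)$, and the $\sigma$-terms. Write $A_i \eqdef \rho\sqrt{(\omega+1)\delta_i^2+\omega L_i^2}\,t_i^2$. With $\alpha=\Theta(1)$, $\beta=1-\alpha$ and $q>0$, the factor $(2-\alpha)(\alpha+\beta q)$ is $\Theta(1)$, since $\alpha+\beta q\ge \alpha$ and $2-\alpha\ge 1$.

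First, the two compression terms. The term $\frac{2\sum_i A_i\eta}{\sqrt{\alpha n}}$ is ${\cal O}(\eta\sum_i A_i/\sqrt{n})$. We rewrite it as ${\cal O}\big(\eta\frac{\sqrt q}{\sqrt{qn}}\sum_i A_i\big)$. The term with $\sqrt{2(1-q)\alpha}$ is ${\cal O}\big(\eta\frac{\sqrt{1-q}}{\sqrt{qn}}\sum_iA_i\big)$. Adding these to $\frac{1}{2}\sum_iL_i^0t_i^2\eta$, all the $\eta$-proportional terms together are ${\cal O}(\eta{\cal L}_2)$, where ${\cal L}_2$ is exactly the quantity in the statement. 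This explains why ${\cal L}_2$ contains $\sqrt{1-q}+\sqrt q$ instead of $\sqrt{1-q}$ alone, in contrast with ${\cal L}_1$.

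Next, the balancing step. The choice $\frac{\Delta^0}{\eta^2K}={\cal L}_2$ gives $\eta=\sqrt{\Delta^0/({\cal L}_2K)}$. Then both $\frac{\Delta^0}{\eta K}$ and $\eta{\cal L}_2$ equal $\sqrt{\Delta^0{\cal L}_2}/\sqrt K$.

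Finally, the $\sigma$-terms. The term $\frac{4\sqrt\alpha\sum_it_i\rho\sigma}{\sqrt{(2-\alpha)(\alpha+\beta q)}\sqrt{Bn}}$ is ${\cal O}\big(\sum_it_i\rho\sigma/\sqrt{Bn}\big)$, because $\sqrt\alpha/\sqrt{\alpha+\beta q}\le 1$ and $2-\alpha\ge1$. The term $\frac{2\sum_it_i\rho\sigma}{\alpha K\sqrt{Bn}}$ is dominated by the same quantity, since $\alpha K\ge 1$ when $\alpha=\Theta(1)$ and $K\ge1$. Summing the three families gives the claimed ${\cal O}\big(\sqrt{\Delta^0{\cal L}_2}/\sqrt K+\sum_it_i\rho\sigma/\sqrt{Bn}\big)$.

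The argument is routine. The only point needing care is bounding the MVR-specific term $1/\sqrt{\alpha n}$ through $\sqrt q/\sqrt{qn}$ so that it fits into ${\cal L}_2$. We also need that $\alpha=\Theta(1)$ is bounded away from $0$, which keeps both $1/\sqrt\alpha$ and $1/(\alpha K)$ constant-order.
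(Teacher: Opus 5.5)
Your proposal is correct and matches the route the paper intends; the paper states this theorem as an immediate consequence of Theorem \ref{th:cs-Gluon-mvr} without writing out details. With $\alpha=\Theta(1)$, the MVR term $1/\sqrt{\alpha n}=\Theta(\sqrt{q}/\sqrt{qn})$ and the $\sqrt{1-q}/\sqrt{qn}$ term combine with the $L_i^0$ term into $\eta{\cal L}_2$, the choice of $\eta$ balances this against $\Delta^0/(\eta K)$, and both $\sigma$-terms are ${\cal O}\bigl(\sum_i t_i\rho\sigma/\sqrt{Bn}\bigr)$ (using $1/(\alpha K)={\cal O}(1)$), which is exactly the intended argument.
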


\subsection{Gluon-MVR-1 in the Minibatch Case}

Let $q=1$, $B=1$,  $\omega=0$, and $\xi_{\tau, 1}^k \equiv \xi_\tau^k$ (it is easy to verify the convergence results still hold) in Algorithm \ref{alg:q-cgluon-mvr}. Then we recover Gluon-MVR-1 with batch size $n$. If $L_i^1 = 0$, from Theorem \ref{th:cs-Gluon-mvr}, we have 
\begin{eqnarray*}
	&& \min_{k=0, ..., K-1} \sum_{i=1}^p t_i \E \left[  \|\nabla_i f(X^k) \|_{(i)\star} \right] \\ 
	&\leq& \frac{\Delta^0}{\eta K} + \frac{2\sum_{i=1}^p t_i \rho \sigma}{\alpha K\sqrt{n}}  +   \frac{4\sqrt{\alpha}\sum_{i=1}^p t_i  \rho \sigma}{\sqrt{(2-\alpha)} \sqrt{n}}  +    \frac{2\rho \sum_{i=1}^p t_i^2 \delta_i \eta}{\sqrt{\alpha n}}  + \frac{\sum_{i=1}^p L_i^0 t_i^2 \eta}{2}. 
\end{eqnarray*}
Assume $n \leq \sqrt{K}$. Then by choosing $\eta = \alpha = n^{\frac{1}{3}} K^{-\frac{2}{3}}$, we can get 
\begin{eqnarray*}
	&& \min_{k=0, ..., K-1} \sum_{i=1}^p t_i \E \left[  \|\nabla_i f(X^k) \|_{(i)\star} \right] \\ 
	&\leq& \frac{\Delta^0}{n^{1/3} K^{1/3}} + \frac{2\sum_{i=1}^p t_i \rho \sigma}{n^{5/6} K^{1/3}}  +   \frac{4\sum_{i=1}^p t_i  \rho \sigma}{\sqrt{(2-\alpha)} n^{1/3} K^{1/3}}  +    \frac{2\rho \sum_{i=1}^p t_i^2 \delta_i }{n^{1/3} K^{1/3}}  + \frac{\sum_{i=1}^p L_i^0 t_i^2 n^{1/3}}{2 K^{2/3}} \\ 
	&\leq& {\cal O} \left(  \frac{1}{n^{1/3}K^{1/3}}  \right). 
\end{eqnarray*}

\paragraph{Linear speed up with batch size $n$} From the above inequality, for $n\leq \sqrt{K}$, to reach $\epsilon$ precision, it is sufficient to choose $K = \frac{1}{\epsilon^3 n}$, and $n\leq \sqrt{K}$ is equivalent to $n \leq \frac{1}{\epsilon}$. In this case, $\eta = \alpha = n \epsilon^2$. For $n\geq \sqrt{K}$, from Theorem \ref{th:cs-Gluon-mvr},  by choosing $\eta = \alpha = K^{-\frac{1}{2}}$, we have $\min_{k=0, ..., K-1} \sum_{i=1}^p t_i \E \left[  \|\nabla_i f(X^k) \|_{(i)\star} \right]  \leq  {\cal O} \left(  \frac{1}{ K^{1/2}}  \right)$, and to reach the $\epsilon$ precision, we can choose $K = \frac{1}{\epsilon^2}$. 

For $n \leq \sqrt{K}$, the stochastic gradient oracle is $Kn = \frac{1}{\epsilon^3}$. Assume the data sampled from distribution $\mathcal{D_\tau}$ each time is different. Then the number of data $N$ should be $ \frac{1}{\epsilon^3}$. \\ 

If $L_i^1 \neq 0$, from Theorem \ref{th:cs-Gluon-mvr}, we can get the above convergence results similarly as long as $K \geq \sqrt{n \left( \max_i L_i^1t_i \right)^3}$ for $n\leq \sqrt{K}$ and $K \geq  \left( \max_i L_i^1t_i \right)^2$ for $n\geq \sqrt{K}$.

\subsection{Communication Cost Analysis}

In this subsection, we consider the compressed case.  If $q=\Theta(1)$, then the expected communication cost at each step (\ref{eq:comcost-k-cg}) is $\Theta(nd)$, which is the same as that of uncompressed algorithms. Hence we assume $(1-q) = \Theta(1)$ in the compressed case.  When $q\geq \alpha$, from Theorem \ref{th:cs-Gluon-mvr}, we have 
$$
\min_{k=0, ..., K-1} \sum_{i=1}^p t_i \E \left[  \|\nabla_i f(X^k) \|_{(i)\star} \right]  \leq {\cal O} \left(  \frac{1}{\eta K}  +  \frac{1}{\alpha K \sqrt{Bn}}  +  \frac{\sqrt{\alpha}}{\sqrt{qBn}}   +  \frac{\sqrt{\omega+1} \eta}{\sqrt{\alpha n}}  +  \frac{\sqrt{\alpha (\omega+1)} \eta}{q \sqrt{n}}  +  \eta  \right), 
$$
where 
$$
\frac{1}{\eta K}   +   \frac{\sqrt{\omega+1} \eta}{\sqrt{\alpha n}}  +  \frac{\sqrt{\alpha (\omega+1)} \eta}{q \sqrt{n}}   \geq  \frac{1}{\eta K}   +  \frac{2\sqrt{\omega+1} \eta}{\sqrt{qn}}  \geq  \frac{2\sqrt{2} (\omega+1)^{1/4}}{q^{1/4} n^{1/4} K^{1/2}}. 
$$
Thus, to achieve $\epsilon$ precision, $K$ is at least $\frac{\sqrt{\omega+1}}{\epsilon^2 \sqrt{qn}}$, and the expected totally communication cost is at least 
\begin{equation}\label{eq:cG-lowbcomc}
	\Theta\left( nd + Kn \left(1 + qd +  \frac{(1-q)d}{\omega+1} \right)   \right)  \geq  \Theta\left( nd +  \frac{(\omega+1)^{1/2} n^{1/2}}{\epsilon^2 q^{1/2}} + \frac{\sqrt{n}d}{\epsilon^2}  \right). 
\end{equation}
When $q\leq \alpha$, from Theorem \ref{th:cs-Gluon-mvr}, we have 
$$
\min_{k=0, ..., K-1} \sum_{i=1}^p t_i \E \left[  \|\nabla_i f(X^k) \|_{(i)\star} \right]  \leq {\cal O} \left(  \frac{1}{\eta K}  +  \frac{1}{\alpha K \sqrt{Bn}}  +  \frac{1}{\sqrt{Bn}}   +  \frac{\sqrt{\omega+1} \eta}{\sqrt{\alpha n}}  +  \frac{\sqrt{\omega+1} \eta}{\sqrt{qn}}  +  \eta  \right), 
$$
where 
$$
\frac{1}{\eta K}   + \frac{\sqrt{\omega+1} \eta}{\sqrt{qn}}  \geq  \frac{2(\omega+1)^{1/4}}{q^{1/4} n^{1/4} K^{1/2}}. 
$$
Thus, to achieve $\epsilon$ precision, $K$ is at least $\frac{\sqrt{\omega+1}}{\epsilon^2 \sqrt{qn}}$, and the expected totally communication cost is at least (\ref{eq:cG-lowbcomc}). 

When $q=0$, from Theorem \ref{th:cs-Gluon-mvr}, we have 
$$
\min_{k=0, ..., K-1} \sum_{i=1}^p t_i \E \left[  \|\nabla_i f(X^k) \|_{(i)\star} \right]  \leq {\cal O} \left(  \frac{1}{\eta K}  +  \frac{1}{\alpha K \sqrt{Bn}}  +  \frac{1}{\sqrt{Bn}}   +  \frac{\sqrt{\omega+1} \eta}{\sqrt{\alpha n}}  +  \frac{\sqrt{K} \sqrt{\omega+1} \eta}{\sqrt{n}}  +  \eta  \right), 
$$
where 
$$
\frac{1}{\eta K} +   \frac{\sqrt{K(\omega+1)}\eta}{\sqrt{n}} \geq 2 \left(  \frac{\omega+1}{nK}  \right)^{1/4}. 
$$
Thus, to achieve the $\epsilon$ precision by Algorithm \ref{alg:q-cgluon}, $K$ is at least $\frac{\omega+1}{\epsilon^4 n}$, and the expected totally communication cost is at least 
$$
\Theta\left( nd + Kn \left(1 + \frac{d}{\omega+1} \right)   \right)  = \Theta\left( nd +  \frac{\omega+1}{\epsilon^4} + \frac{d}{\epsilon^4}  \right). 
$$
To summarize, since $n\leq \frac{1}{\epsilon^4}$ generally in practice, the expected totally communication cost is at least ${\cal O}\left(  \frac{\sqrt{n}d}{\epsilon^2}  \right)$ from the upper-bounds in Theorem \ref{th:cs-Gluon-mvr}. \\

Next we consider the detailed parameter settings. First we consider the case where $L_i^1 = 0$. When $q\neq 0$, assume $q(\omega+1) = \Theta(1)$. Then from Theorem \ref{th:cs-Gluon-mvr}, by choosing $q = \frac{cn^{1/4}}{K^{1/2}}$, $\eta = \frac{\sqrt{c}n^{3/8}}{K^{3/4}}$, $B = \frac{cK^{1/2}}{n^{1/4}}$, and $\alpha \geq \max\{ q, K^{-1}\}$ for $0<c \leq \frac{K^{1/2}}{n^{1/4}}$, we have 
\begin{eqnarray*}
	&& \min_{k=0, ..., K-1} \sum_{i=1}^p t_i \E \left[  \|\nabla_i f(X^k) \|_{(i)\star} \right] \\ 
	&\leq& {\cal O} \left(  \frac{1}{\sqrt{c} n^{3/8}K^{1/4}}  +   \frac{\sqrt{c}n^{3/8}}{K^{3/4}}  \right). 
\end{eqnarray*}
Hence for $c\leq \frac{K^{1/2}}{n^{3/4}}$, the above upper-bound becomes $ {\cal O} \left(  \frac{1}{\sqrt{c} n^{3/8}K^{1/4}} \right)$, and to achieve the $\epsilon$ precision by Algorithm \ref{alg:q-cgluon}, it is sufficient to choose $K = \frac{1}{c^2 \epsilon^4 n^{3/2}}$, and the expected totally communication cost is 
\begin{equation}\label{eq:comcost-cG-mvr-ap}
	\Theta\left( nd + Kn \left(1 + qd + \frac{(1-q)d}{\omega+1} \right) \right)  = \Theta\left( nd +  \frac{1}{c^2 \epsilon^4 \sqrt{n}} + \frac{\sqrt{n}d}{\epsilon^2}  \right), 
\end{equation}
which is the same as (\ref{eq:comcost-cG}). In this case, $q=c^2n\epsilon^2$, $B = \frac{1}{\epsilon^2 n}$, $\eta = c^2 n^{\frac{3}{2}} \epsilon^3$, $\alpha \geq \max\{  c^2n \epsilon^2,  c^2n^{\frac{3}{2}}\epsilon^4 \}$, and the expected stochastic gradient oracle is $\Theta\left(  \frac{1}{\epsilon^2}  +  \left( 1 + \frac{1}{c^2}\right) \frac{1}{\epsilon^4 \sqrt{n}} \right)$.

Finally, for the case where $L_i^1 \neq 0$, the above results also hold as long as $\eta \leq \min_{i}  \frac{1}{L_i^1 t_i}$, which generally holds for large $K$ in practice.

\subsection{Proof of Theorem \ref{th:cs-Gluon-mvr}}

For the iterates of Algorithm \ref{alg:q-cgluon-mvr}, we also define $\mu_i^k \eqdef M_i^k - \nabla_i f(X^k)$, $\gamma_i^k \eqdef g_i^k - \nabla_i f(X^k)$, and $\alpha \eqdef 1-\beta$. Moreover, we define $Z_i^k \eqdef y_i^k - (\nabla_i f(X^k) - \nabla_i f(X^{k-1}))$. Then we have 
\begin{eqnarray*}
	\mu_i^k &=& M_i^k - \nabla_i f(X^k) \\ 
	&=& \beta M_i^{k-1} + \alpha g_i^k - \nabla_i f(X^k) \\
	&=& \beta \mu_i^{k-1} + \alpha \gamma_i^k + \beta Z_i^k  \\ 
	&=& \beta^k \mu_i^0 + \sum_{j =1}^k \beta^{k-j} \alpha \gamma_i^j + \sum_{j=1}^k \beta^{k+1-j} Z_i^j. 
\end{eqnarray*}

Next we upper-bound $\E\left[  \|\sum_{j=1}^k \beta^{k+1-j} Z_i^j \|_{(i)\star} \right]$ as follows. 

\begin{eqnarray}
	\E \left[  \|\sum_{j=1}^k \beta^{k+1-j} Z_i^j \|_{(i)\star} \right] &\overset{Assumption \ref{as:rho}}{\leq}& \rho 	\E \left[  \|\sum_{j=1}^k \beta^{k+1-j} Z_i^j \|_2 \right] \nonumber \\ 
	&\leq&  \rho \sqrt{\E \left[  \|\sum_{j=1}^k \beta^{k+1-j} Z_i^j \|_2^2 \right]}  \nonumber  \\ 
	&=& \rho \sqrt{\sum_{j=1}^k \beta^{2k+2-2j}\E \left[  \|Z_i^j\|_2^2  \right] }  \nonumber \\ 
	&\overset{Lemma \ref{lm:yikbound}}{\leq}& \rho \sqrt{\sum_{j=1}^k \beta^{2k+2-2j}  \frac{(\omega+1) \delta_i^2 + \omega L_i^2}{n}  \E \left[  \|X_i^j - X_i^{j-1}\|_{(i)}^2  \right] }  \nonumber  \\ 
	&\overset{(\ref{eq:updateCon-mvr})}{\leq}& \rho \sqrt{\sum_{j=1}^k \beta^{2k+2-2j}  \frac{(\omega+1) \delta_i^2 + \omega L_i^2}{n} t_i^2 \eta^2 }  \nonumber  \\ 
	&\leq& \frac{\rho t_i \eta\sqrt{(\omega+1)\delta_i^2 + \omega L_i^2}}{\sqrt{\alpha n}}, \label{eq:sumZij}
\end{eqnarray}
where in the second inequality we use Jensen's inequality, in the first equality we use (\ref{eq:Qi}), Assumption \ref{as:boundedvariance} and $Q_i$, $\xi_\tau^j$ are both i.i.d. 

Then for $0<q\leq 1$, similar to the proof of Theorem \ref{th:cs-Gluon}, we can get 
\begin{align*}
	\sum_{i=1}^p \sum_{k=0}^{K-1} t_i \eta \E \left[ \|  \nabla_i f(X^k)\|_{(i)\star}  \right] \leq \Delta^0 + \sum_{i=1}^p & \left[   \frac{2t_i \eta \rho \sigma}{\alpha\sqrt{Bn}} +  \frac{4K\sqrt{\alpha}t_i \eta \rho \sigma}{\sqrt{(2-\alpha)(\alpha+\beta q)} \sqrt{Bn}}   + \frac{K L_i^0 t_i^2 \eta^2}{2}   \right. \\ 
	& \quad \left. +  \frac{2K\rho t_i^2 \eta^2\sqrt{(\omega+1)\delta_i^2 + \omega L_i^2}}{\sqrt{\alpha n}}   \right. \\
	& \quad  \left. +   \frac{2K\sqrt{2(1-q)\alpha} \sqrt{(\omega+1)\delta_i^2 + \omega L_i^2}\rho t_i^2 \eta^2}{\sqrt{(2-\alpha)(\alpha+\beta q)} \sqrt{qn}}     \right. \\ 
	& \quad \left.  +   \sum_{k=0}^{K-1} \frac{1}{2} L_i^1 t_i^2 \eta^2 \E \left[  \|\nabla_i f(X^\tau) \|_{(i)\star} \right]   \right]. 
\end{align*}

Now we consider two options: (1) $L_i^1 = 0$ for all $i \in \{  1, ..., p  \}$ and (2) $L_i^1 \neq 0$, for all $i \in \{  1, ..., p  \}$.  

{\bf Case 1:} $L_i^1 = 0$ for all $i \in \{  1, ..., p  \}$. In this case, 
\begin{eqnarray*}
	&& \min_{k=0, ..., K-1} \sum_{i=1}^p t_i \E \left[  \|\nabla_i f(X^k) \|_{(i)\star} \right] \\ 
	&\leq&  \frac{1}{K} \sum_{k=0}^{K-1} \sum_{i=1}^p t_i \E \left[  \|\nabla_i f(X^k) \|_{(i)\star} \right] \\ 
	&\leq&  \frac{\Delta^0}{\eta K} + \frac{2\sum_{i=1}^p t_i \rho \sigma}{\alpha K\sqrt{Bn}}  +   \frac{4\sqrt{\alpha}\sum_{i=1}^p t_i  \rho \sigma}{\sqrt{(2-\alpha)(\alpha+\beta q)} \sqrt{Bn}}  +    \frac{2\rho \sum_{i=1}^p t_i^2 \sqrt{(\omega+1)\delta_i^2 + \omega L_i^2}\eta}{\sqrt{\alpha n}}  \\ 
	&&  +    \frac{2\sqrt{2(1-q)\alpha} \sum_{i=1}^p \sqrt{(\omega+1)\delta_i^2 + \omega L_i^2} t_i^2 \rho \eta}{\sqrt{(2-\alpha)(\alpha+\beta q)} \sqrt{qn}}   + \frac{\sum_{i=1}^p L_i^0 t_i^2 \eta}{2}. 
\end{eqnarray*}

{\bf Case 2:} $L_i^1 \neq 0$, for all $i \in \{  1, ..., p  \}$. First we let $\eta \leq \min_i \frac{1}{L_i^1 t_i}$. Then $\frac{1}{2} L_i^1 t_i \eta \leq \frac{1}{2}$ for all $i$, and 
\begin{eqnarray*}
	&& \min_{k=0, ..., K-1} \sum_{i=1}^p t_i \E \left[  \|\nabla_i f(X^k) \|_{(i)\star} \right] \\ 
	&\leq&  \frac{1}{K} \sum_{k=0}^{K-1} \sum_{i=1}^p t_i \E \left[  \|\nabla_i f(X^k) \|_{(i)\star} \right] \\ 
	&\leq& \frac{2\Delta^0}{\eta K} + \frac{4\sum_{i=1}^p t_i \rho \sigma}{\alpha K\sqrt{Bn}}  +   \frac{8\sqrt{\alpha}\sum_{i=1}^p t_i  \rho \sigma}{\sqrt{(2-\alpha)(\alpha+\beta q)} \sqrt{Bn}}  +    \frac{4\rho \sum_{i=1}^p t_i^2 \sqrt{(\omega+1)\delta_i^2 + \omega L_i^2}\eta}{\sqrt{\alpha n}}  \\ 
	&&  +    \frac{4\sqrt{2(1-q)\alpha} \sum_{i=1}^p \sqrt{(\omega+1)\delta_i^2 + \omega L_i^2} t_i^2 \rho \eta}{\sqrt{(2-\alpha)(\alpha+\beta q)} \sqrt{qn}}   + \sum_{i=1}^p L_i^0 t_i^2 \eta. 
\end{eqnarray*}

For $q=0$, similar to the proof of Theorem \ref{th:cs-Gluon}, we can get 
\begin{align*}
	\sum_{i=1}^p \sum_{k=0}^{K-1} t_i \eta \E \left[ \|  \nabla_i f(X^k)\|_{(i)\star}  \right] \leq \Delta^0 + \sum_{i=1}^p & \left[   \frac{2t_i \eta \rho \sigma}{\alpha\sqrt{Bn}} +  \frac{2\sqrt{2}K t_i \eta \rho \sigma}{ \sqrt{Bn}}   + \frac{K L_i^0 t_i^2 \eta^2}{2}   \right. \\ 
	& \quad \left. +  \frac{2K\rho t_i^2 \eta^2\sqrt{(\omega+1)\delta_i^2 + \omega L_i^2}}{\sqrt{\alpha n}}   \right. \\
	& \quad  \left. +   \frac{2\sqrt{2}K^{\frac{3}{2}}  \sqrt{(\omega+1)\delta_i^2 + \omega L_i^2}\rho t_i^2 \eta^2}{ \sqrt{n}}     \right. \\ 
	& \quad \left.  +   \sum_{k=0}^{K-1} \frac{1}{2} L_i^1 t_i^2 \eta^2 \E \left[  \|\nabla_i f(X^\tau) \|_{(i)\star} \right]   \right]. 
\end{align*}

Now we consider two options: (1) $L_i^1 = 0$ for all $i \in \{  1, ..., p  \}$ and (2) $L_i^1 \neq 0$, for all $i \in \{  1, ..., p  \}$.  

{\bf Case 1:} $L_i^1 = 0$ for all $i \in \{  1, ..., p  \}$. In this case, 
\begin{eqnarray*}
	&& \min_{k=0, ..., K-1} \sum_{i=1}^p t_i \E \left[  \|\nabla_i f(X^k) \|_{(i)\star} \right] \\ 
	&\leq&  \frac{1}{K} \sum_{k=0}^{K-1} \sum_{i=1}^p t_i \E \left[  \|\nabla_i f(X^k) \|_{(i)\star} \right] \\ 
	&\leq&  \frac{\Delta^0}{\eta K} + \frac{2\sum_{i=1}^p t_i \rho \sigma}{\alpha K\sqrt{Bn}}  +   \frac{2\sqrt{2}\sum_{i=1}^p t_i  \rho \sigma}{\sqrt{Bn}}  +    \frac{2\rho \sum_{i=1}^p t_i^2 \sqrt{(\omega+1)\delta_i^2 + \omega L_i^2}\eta}{\sqrt{\alpha n}}  \\ 
	&&  +    \frac{2\sqrt{2} \sqrt{K} \sum_{i=1}^p \sqrt{(\omega+1)\delta_i^2 + \omega L_i^2} t_i^2 \rho \eta}{\sqrt{n}}   + \frac{\sum_{i=1}^p L_i^0 t_i^2 \eta}{2}. 
\end{eqnarray*}

{\bf Case 2:} $L_i^1 \neq 0$, for all $i \in \{  1, ..., p  \}$. First we let $\eta \leq \min_i \frac{1}{L_i^1 t_i}$. Then $\frac{1}{2} L_i^1 t_i \eta \leq \frac{1}{2}$ for all $i$, and 
\begin{eqnarray*}
	&& \min_{k=0, ..., K-1} \sum_{i=1}^p t_i \E \left[  \|\nabla_i f(X^k) \|_{(i)\star} \right] \\ 
	&\leq&  \frac{1}{K} \sum_{k=0}^{K-1} \sum_{i=1}^p t_i \E \left[  \|\nabla_i f(X^k) \|_{(i)\star} \right] \\ 
	&\leq& \frac{2\Delta^0}{\eta K} + \frac{4\sum_{i=1}^p t_i \rho \sigma}{\alpha K\sqrt{Bn}}  +   \frac{4\sqrt{2}\sum_{i=1}^p t_i  \rho \sigma}{\sqrt{Bn}}  +    \frac{4\rho \sum_{i=1}^p t_i^2 \sqrt{(\omega+1)\delta_i^2 + \omega L_i^2}\eta}{\sqrt{\alpha n}}  \\ 
	&&  +    \frac{4\sqrt{2} \sqrt{K} \sum_{i=1}^p \sqrt{(\omega+1)\delta_i^2 + \omega L_i^2} t_i^2 \rho \eta}{\sqrt{n}}   +  + \sum_{i=1}^p L_i^0 t_i^2 \eta. 
\end{eqnarray*}

\newpage

\section{Proofs for Compressed Gluon with Error Feedback}

\subsection{A Lemma}

\begin{lemma}\label{lm:eik}
	Let Assumptions \ref{as:Lismooth} and \ref{as:HV} hold, Assume each $\C_i$ in Algorithm \ref{alg:q-deltagluon} is the contraction compressor satisfying (\ref{eq:Ci}) and $q+\delta-q\delta>0$. Then for $k\geq 0$, the $e_i^{\tau, k+1}$ in Algorithm \ref{alg:q-deltagluon} satisfies 
	\begin{equation}\label{eq:eik}
		\E \left[  \|e_i^{\tau, k+1}\|_2^2 \right] \leq  \frac{2(1-q)(1-\delta)(L_i^2 + (q+\delta-q\delta)\delta_i^2 ) t_i^2 \eta^2}{(q+\delta-q\delta)^2}. 
	\end{equation}
	
\end{lemma}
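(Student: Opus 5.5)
Throughout, write $\theta \eqdef q+\delta-q\delta$, so that $(1-q)(1-\delta)=1-\theta$, and let $D_i^{\tau,k}\eqdef \nabla_i f_{\xi^k_\tau}(X^k)-\nabla_i f_{\xi^k_\tau}(X^{k-1})$ and $\bar D_i^{\tau,k}\eqdef \nabla_i f^\tau(X^k)-\nabla_i f^\tau(X^{k-1})$. The plan is a one-step contraction recursion for $E_k\eqdef\E[\|e_i^{\tau,k}\|_2^2]$, unrolled from $E_1=0$.

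\textbf{Step 1: one-step recursion.} Condition on the history up to step $k$. With probability $q$ we have $u^k=1$ and $e_i^{\tau,k+1}=0$. Otherwise $e_i^{\tau,k+1}=v-\C_i(v)$ with $v=e_i^{\tau,k}+D_i^{\tau,k}$. The coin $u^k$ is drawn at step $k-1$, independently of $e_i^{\tau,k}$ and of $\xi^k_\tau$. Hence (\ref{eq:Ci}) gives
$$E_{k+1}\le (1-\theta)\,\E\big[\|e_i^{\tau,k}+D_i^{\tau,k}\|_2^2\big].$$

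\textbf{Step 2: separating the noise.} The sample $\xi^k_\tau$ is fresh and independent of $e_i^{\tau,k},X^k,X^{k-1}$. Therefore the cross term vanishes and
$$\E\|e+D\|_2^2=\E\|e+\bar D\|_2^2+\E\|D-\bar D\|_2^2 .$$
The last term is at most $\delta_i^2t_i^2\eta^2$ by Assumption \ref{as:HV} and $\|X_i^k-X_i^{k-1}\|_{(i)}\le t_i\eta$. For the first term, Young's inequality with parameter $s>0$ and Assumption \ref{as:Lismooth} give
$$\E\|e+\bar D\|_2^2\le(1+s)E_k+(1+s^{-1})L_i^2t_i^2\eta^2 .$$

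\textbf{Step 3: unrolling.} Provided $(1-\theta)(1+s)<1$, unrolling from $E_1=0$ yields
$$E_{k+1}\le\frac{(1-\theta)\big[(1+s^{-1})L_i^2+\delta_i^2\big]t_i^2\eta^2}{1-(1-\theta)(1+s)} .$$
With $s=\theta/(2(1-\theta))$ the denominator equals $\theta/2$. The $\delta_i^2$ part then becomes exactly $2(1-\theta)\theta\delta_i^2t_i^2\eta^2/\theta^2$, which matches (\ref{eq:eik}).

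\textbf{Main obstacle: the $L_i^2$ coefficient.} For the $L_i^2$ part, the same choice of $s$ gives $2(1-\theta)(2-\theta)/\theta^2\le 4(1-\theta)/\theta^2$. This is twice the target $2(1-\theta)/\theta^2$. Optimizing $s$ does not close the gap. The best constant attainable this way is $(1-\theta)/(1-\sqrt{1-\theta})^2$, which behaves like $4(1-\theta)/\theta^2$ as $\theta\to0$.

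I would first try to avoid Young's inequality on the deterministic drift by exploiting extra structure. One option is to split $e$ into a martingale part and a drift part and use orthogonality across steps. Another is to run an induction directly on $\sqrt{E_k}$. However, a scalar example already blocks both routes. Take $q=0$, $\C_i(x)=(1-\sqrt{1-\delta})x$, and a quadratic $f^\tau$ moved along a fixed direction with steps of length exactly $t_i\eta$. Then $e^{k+1}=\sqrt{1-\delta}\,(e^k+L_it_i\eta)$, whose fixed point has squared norm $(1-\delta)L_i^2t_i^2\eta^2/(1-\sqrt{1-\delta})^2\approx 4(1-\delta)L_i^2t_i^2\eta^2/\delta^2$.

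So the key step to settle is whether the stated constant on $L_i^2$ is attainable at all. My plan is to carry out Steps 1--3 as above. If the example is confirmed, the fallback is to prove (\ref{eq:eik}) with $L_i^2$ replaced by $2L_i^2$. Only the numerical constants in Theorem \ref{th:deltaGluon} would change.
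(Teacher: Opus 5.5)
Your Steps 1--3 are the paper's proof. The paper uses the same independence of the coin $u^k$, the same split $\E\|e+D\|_2^2=\E\|e+\bar D\|_2^2+\E\|D-\bar D\|_2^2$ via Assumption \ref{as:HV}, and the same Young parameter $\tilde\beta=\theta/(2(1-\theta))$. This gives $E_{k+1}\le(1-\theta/2)E_k+2(1-\theta)L_i^2t_i^2\eta^2/\theta+(1-\theta)\delta_i^2t_i^2\eta^2$; the paper has already relaxed your factor $2-\theta$ to $2$. Your reading of the $L_i^2$ coefficient is right, and the error is in the paper, not in your argument. In its final unrolling line the paper multiplies the $\delta_i^2$ term by the geometric factor $2/\theta$, but the $L_i^2$ term only by $1/\theta$. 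Its own recursion therefore yields $4(1-\theta)L_i^2t_i^2\eta^2/\theta^2$, not the stated $2(1-\theta)L_i^2t_i^2\eta^2/\theta^2$.

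Your counterexample is valid and is a genuine run of Algorithm \ref{alg:q-deltagluon}. Take $p=n=1$, $\mathcal{S}_1=\R$ with $|\cdot|$, $f(x)=\frac{L_i}{2}x^2$ with exact gradients ($\sigma=\delta_i=0$), and $X^0>(K+1)t_i\eta$. One checks that $g^k+e^{k+1}=\nabla f(X^k)$ for all $k$ and that $e^{k+1}\le 0$. Hence $g^k>0$ and $M^k>0$, so every LMO step is exactly $-t_i\eta$. Then $|e^{k+1}|=\sqrt{1-\delta}\,(|e^k|+L_it_i\eta)$ increases to $\sqrt{1-\delta}\,L_it_i\eta/(1-\sqrt{1-\delta})$. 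Its square exceeds $2(1-\delta)L_i^2t_i^2\eta^2/\delta^2$ whenever $\delta<2\sqrt2-2$; for $\delta=0.19$ the limit is $81L_i^2t_i^2\eta^2$, against a bound of about $44.9L_i^2t_i^2\eta^2$.

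Algorithm \ref{alg:q-deltagluon} nominally takes $q\in(0,1]$, but for fixed $k$ the violation is continuous in $q$ and so persists for small $q>0$. It is not an artifact of $q=0$. Hence (\ref{eq:eik}) is false as stated. You can drop the martingale and $\sqrt{E_k}$ attempts and state what Steps 1--3 actually prove: $\E\|e_i^{\tau,k+1}\|_2^2\le 2(1-\theta)\big((2-\theta)L_i^2+\theta\delta_i^2\big)t_i^2\eta^2/\theta^2$. In Theorems \ref{th:deltaGluon} and \ref{th:deltaGluon-mvr} this only replaces $\sqrt{L_i^2+\theta\delta_i^2}$ by $\sqrt{(2-\theta)L_i^2+\theta\delta_i^2}\le\sqrt2\sqrt{L_i^2+\theta\delta_i^2}$ in one term. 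No rate or communication-cost conclusion changes.
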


\begin{proof}
	For $k=0$, (\ref{eq:eik}) clearly holds since $e_i^{\tau, 1} = 0$. 
	For $k \geq 1$, 
	\begin{eqnarray*}
		&& \E \left[  \|e_i^{\tau, k+1}\|_2^2 \right] \nonumber \\
		&=& (1-q) \E \left[  \| e_i^{\tau, k} + \nabla_i f_{\xi^k_\tau} (X^k) - \nabla_i f_{\xi^k_\tau} (X^{k-1}) - y_i^{\tau, k}\|_2^2  \right] \nonumber \\ 
		&\overset{(a)}{\leq}&  (1-q)(1-\delta) \E \left[  \| e_i^{\tau, k} + \nabla_i f_{\xi^k_\tau} (X^k) - \nabla_i f_{\xi^k_\tau} (X^{k-1}) \|_2^2  \right] \nonumber \\ 
		&=&  (1-q)(1-\delta) \E \left[  \| e_i^{\tau, k} + \nabla_i f^\tau (X^k) - \nabla_i f^\tau (X^{k-1}) \|_2^2 \right] \nonumber \\ 
		&&  + (1-q)(1-\delta) \E \left[  \|  \nabla_i f_{\xi^k_\tau} (X^k) - \nabla_i f_{\xi^k_\tau} (X^{k-1}) - \left(  \nabla_i f^\tau (X^k) - \nabla_i f^\tau (X^{k-1})  \right) \|_2^2  \right] \nonumber \\ 
		&\overset{(b)}{\leq}&  (1-q)(1-\delta) \E \left[  \| e_i^{\tau, k} + \nabla_i f^\tau (X^k) - \nabla_i f^\tau (X^{k-1}) \|_2^2 \right]  + (1-q)(1-\delta) \delta_i^2 t_i^2 \eta^2 \nonumber \\ 
		&\overset{(c)}{\leq}& (1-q)(1-\delta)(1 + {\tilde \beta}) \E \left[  \|e_i^{\tau, k}\|_2^2  \right] \nonumber \\ 
		&& + (1-q)(1-\delta)\left(  1 + \frac{1}{\tilde \beta}  \right) \E \left[  \| \nabla_i f^\tau (X^k) - \nabla_i f^\tau (X^{k-1}) \|_2^2  \right] + (1-q)(1-\delta) \delta_i^2 t_i^2 \eta^2 \nonumber \\ 
		&\overset{(d)}{\leq}&  \left(  1 - \frac{q+\delta-q\delta}{2}  \right) \E \left[  \|e_i^{\tau, k}\|_2^2  \right] + \frac{2(1-q)(1-\delta)}{q+\delta-q\delta} \E \left[  \| \nabla_i f^\tau (X^k) - \nabla_i f^\tau (X^{k-1}) \|_2^2  \right] \nonumber \\
		&&  +  (1-q)(1-\delta) \delta_i^2 t_i^2 \eta^2 \nonumber \\ 
		&\overset{(e)}{\leq}& \left(  1 - \frac{q+\delta-q\delta}{2}  \right) \E \left[  \|e_i^{\tau, k}\|_2^2  \right] + \frac{2(1-q)(1-\delta)L_i^2 t_i^2 \eta^2}{q+\delta-q\delta} +  (1-q)(1-\delta) \delta_i^2 t_i^2 \eta^2  \nonumber \\ 
		&\leq& \frac{2(1-q)(1-\delta)L_i^2 t_i^2 \eta^2}{(q+\delta-q\delta)^2} +  \frac{2(1-q)(1-\delta) \delta_i^2 t_i^2 \eta^2}{q+\delta-q\delta} \nonumber \\ 
		&=&  \frac{2(1-q)(1-\delta)(L_i^2 + (q+\delta-q\delta)\delta_i^2 ) t_i^2 \eta^2}{(q+\delta-q\delta)^2}, 
	\end{eqnarray*}
	where (a) uses the contraction property of the compressor $\C_i$ in (\ref{eq:Ci}), (b) uses Assumption \ref{as:HV} and the update rule (\ref{eq:updateCon-dG}), (c) and (d) use Young’s inequality and choose ${\tilde \beta} = \frac{q+\delta-q\delta}{2(1-q-\delta+q\delta)}$ when $q+\delta-q\delta<1$, (e) uses Assumption \ref{as:Lismooth} and (\ref{eq:updateCon-dG}). When $q+\delta-q\delta=1$, it is easy to see that the above inequality also holds. 
	
\end{proof}

\subsection{Proof of Theorem \ref{th:deltaGluon}}

First we introduce some auxiliary notations. We define $e_i^k \eqdef \frac{1}{n} \sum_{\tau=1}^n e_i^{\tau, k}$, ${\tilde M}_i^k \eqdef M_i^k + e_i^{k+1}$, ${\tilde g}_i^k \eqdef g_i^k + e_i^{k+1}$, and $\alpha \eqdef 1-\beta$. Then from the update rule of $M_i^k$ in Algorithm \ref{alg:q-deltagluon}, we can get 
\begin{eqnarray*}
	{\tilde M}_i^k &=& M_i^k + e_i^{k+1} \\ 
	&=& \beta(M_i^{k-1} + e_i^k) + \alpha (g_i^k + e_i^{k+1}) + \beta (e_i^{k+1} - e_i^k) \\ 
	&=& \beta {\tilde M}_i^{k-1} + \alpha {\tilde g}_i^k +  \beta (e_i^{k+1} - e_i^k). 
\end{eqnarray*}
We also denote $\mu_i^k \eqdef {\tilde M}_i^k - \nabla_i f(X^k)$, $\gamma_i^k \eqdef {\tilde g}_i^k - \nabla_i f(X^k)$, and $S_i^k \eqdef \nabla_i f(X^{k-1}) - \nabla_i f(X^k)$. Then we can obtain 
\begin{eqnarray*}
	{\tilde M}_i^k - \nabla_i f(X^k) &=&  \beta ({\tilde M}_i^{k-1} - \nabla_i f(X^{k-1})) + \alpha ({\tilde g}_i^k - \nabla_i f(X^k)) \\ 
	&&  + \beta (\nabla_i f(X^{k-1}) - \nabla_i f(X^k)) + \beta (e_i^{k+1} - e_i^k) \\ 
	&=& \beta^k \mu_i^0 + \sum_{j=1}^k \beta^{k-j} \alpha \gamma_i^j + \sum_{j=1}^k \beta^{k+1-j} S_i^j + \sum_{j=1}^k \beta^{k+1-j} (e_i^{j+1} - e_i^j). 
\end{eqnarray*}
For the term $\sum_{j=1}^k \beta^{k+1-j} (e_i^{j+1} - e_i^j)$, we have 
\begin{eqnarray*}
	\sum_{j=1}^k \beta^{k+1-j} (e_i^{j+1} - e_i^j) - e_i^{k+1}&=&  \sum_{j=2}^{k+1} \beta^{k+2-j} e_i^j - \sum_{j=1}^{k+1} \beta^{k+1-j} e_i^j \\ 
	&=& \sum_{j=1}^{k+1} \beta^{k+2-j} e_i^j - \sum_{j=1}^{k+1} \beta^{k+1-j} e_i^j \\ 
	&=& -\alpha \sum_{j=1}^{k+1} \beta^{k+1-j} e_i^j, 
\end{eqnarray*}
where we use $e_i^1=0$ in the second equality. Combining the above two equalities, we arrive at 
\begin{eqnarray*}
	M_i^k - \nabla_i f(X^k) &=& {\tilde M}_i^k - \nabla_i f(X^k) - e_i^{k+1} \\ 
	&=& \beta^k \mu_i^0 + \sum_{j=1}^k \beta^{k-j} \alpha \gamma_i^j + \sum_{j=1}^k \beta^{k+1-j} S_i^j - \alpha \sum_{j=1}^{k+1} \beta^{k+1-j} e_i^j, 
\end{eqnarray*}
which yields that
\begin{align}
	& \E \left[  \|M_i^k - \nabla_i f(X^k)\|_{(i)\star}  \right] \nonumber \\ 
	\overset{(a)}{\leq}& \beta^k \E \left[  \|\mu_i^0\|_{(i)\star}  \right] + \E \left[  \| \sum_{j =1}^k \beta^{k-j} \alpha \gamma_i^j\|_{(i)\star} \right] + \sum_{j=1}^k \beta^{k+1-j} \E \left[  \|S_i^j \|_{(i)\star} \right] + \sum_{j=1}^{k+1} \alpha \beta^{k+1-j} \E \left[  \|e_i^j\|_{(i)\star}  \right] \nonumber \\ 
	\overset{(b)}{\leq}& \beta^k \rho \E \left[  \|\mu_i^0\|_2  \right] + \rho \E \left[  \|\sum_{j=1}^k \beta^{k-j} \alpha \gamma_i^j \|_2 \right] + \sum_{j=1}^k \beta^{k+1-j} \left(  L_i^0 + L_i^1 \E \left[  \nabla_i f(X^j)\|_{(i)\star}  \right]  \right) t_i \eta \nonumber \\ 
	&  + \sum_{j=1}^{k+1} \alpha \beta^{k+1-j} \rho \E \left[  \|e_i^j\|_2  \right] \nonumber \\ 
	\overset{(c)}{\leq}& \beta^k \rho \sqrt{\E \left[  \|\mu_i^0\|_2^2  \right]} + \rho \sqrt{\E \left[  \| \sum_{j=1}^k \beta^{k-j} \alpha \gamma_i^j \|_2^2 \right]} + \frac{L_i^0t_i\eta}{\alpha} + L_i^1 t_i \eta \sum_{j=1}^k \beta^{k+1-j} \E \left[  \|\nabla_i f(X^j)\|_{(i)\star}  \right] \nonumber \\ 
	& + \sum_{j=1}^{k+1} \alpha \beta^{k+1-j} \rho \sqrt{\E \left[  \|e_i^j\|_2^2  \right]}  \nonumber \\ 
	\overset{(d)}{\leq}& \frac{(1-\alpha)^k \rho \sigma}{\sqrt{Bn}} + \rho \sqrt{\E \left[  \| \sum_{j=1}^k \beta^{k-j} \alpha \gamma_i^j \|_2^2 \right]}  +  \frac{L_i^0t_i\eta}{\alpha} + L_i^1 t_i \eta \sum_{j=1}^k \beta^{k+1-j} \E \left[  \|\nabla_i f(X^j)\|_{(i)\star}  \right] \nonumber \\ 
	& + \sum_{j=1}^{k+1} \alpha \beta^{k+1-j} \rho \sqrt{\E \left[  \|e_i^j\|_2^2  \right]}, \label{eq:Mik-dG}
\end{align}
where (a) uses the triangle inequality, (b) uses Assumptions \ref{as:L0L1smooth} and \ref{as:rho}, (c) uses Jensen’s inequality, (d) uses Assumption \ref{as:boundedvariance} and the fact that samples $\xi^k_\tau \sim {\cal D}_\tau$ are i.i.d. 

Next we estimate $\E \left[  \| \sum_{j=1}^k \beta^{k-j} \alpha \gamma_i^j \|_2^2 \right]$ and $\E \left[  \|e_i^j\|_2^2 \right]$ respectively. Recall that $\gamma_i^k = {\tilde g}_i^k - \nabla_i f(X^k)$ and ${\tilde g}_i^k = g_i^k + e_i^{k+1}$. If $u^k=1$, then $e_i^{\tau, k+1}=0$, which implies that ${\tilde g}_i^k = g_i^k = \frac{1}{n}\sum_{\tau=1}^n g_i^{\tau, k} =  \frac{1}{Bn} \sum_{j=1}^B \sum_{\tau=1}^n \nabla_i f_{\xi^k_{\tau, j}} (X^k)$. If $u^k \neq 1$, then $e_i^{\tau, k+1} = e_i^{\tau, k} + \nabla_i f_{\xi^k_\tau} (X^k) - \nabla_i f_{\xi^k_\tau} (X^{k-1}) - y_i^{\tau, k}$. Thus for $u^k \neq 1$, 
\begin{eqnarray*}
	{\tilde g}_i^k &=& g_i^k + e_i^{k+1} \\ 
	&=& g_i^{k-1} + \frac{1}{n} \sum_{\tau=1}^n y_i^{\tau, k} + e_i^k + \frac{1}{n} \sum_{\tau=1}^n \left( \nabla_i f_{\xi^k_\tau} (X^k) - \nabla_i f_{\xi^k_\tau} (X^{k-1}) \right) - \frac{1}{n} \sum_{\tau=1}^n y_i^{\tau, k}  \\ 
	&=& {\tilde g}_i^{k-1} +  \frac{1}{n} \sum_{\tau=1}^n \left( \nabla_i f_{\xi^k_\tau} (X^k) - \nabla_i f_{\xi^k_\tau} (X^{k-1}) \right). 
\end{eqnarray*}
Then we get the following update rule for ${\tilde g}_i^k$. 
$$
{\tilde g}_i^k= \left\{ \begin{array}{cl}
	\frac{1}{Bn} \sum_{j=1}^B \sum_{\tau=1}^n \nabla_i f_{\xi^k_{\tau, j}} (X^k) & \mbox{ if $u^k = 1$} \\
	{\tilde g}_i^{k-1} + \frac{1}{n} \sum_{\tau=1}^n \left( \nabla_i f_{\xi^k_\tau} (X^k) - \nabla_i f_{\xi^k_\tau} (X^{k-1}) \right) &\mbox{ otherwise}
\end{array} \right.
$$
Hence ${\tilde g}_i^k$ behave the same as the $g_i^k$ in Algorithm \ref{alg:q-cgluon} with no compression. We apply Lemma \ref{lm:sumgammaij} with $\omega=0$ to ${\tilde g}_i^k$, then we can obtain that for $0<q\leq 1$, we have 
\begin{equation}\label{eq:sumgammaijboundq>0-dG}
	\E \left[  \| \sum_{j=1}^k \alpha \beta^{k-j} \gamma_i^j\|_2^2  \right]  \leq \frac{2\alpha}{(2-\alpha)(\alpha+\beta q)} \left(   \frac{2\sigma^2}{Bn}  + \frac{(1-q)\delta_i^2 t_i^2 \eta^2}{qn} \right); 
\end{equation}
for $q=0$, we have 
\begin{equation}\label{eq:sumgammaijboundq=0-dG}
	\E \left[  \| \sum_{j=1}^k \alpha \beta^{k-j} \gamma_i^j\|_2^2  \right]  \leq \frac{2\sigma^2}{Bn}  + \frac{2k \delta_i^2 t_i^2\eta^2}{n}. 
\end{equation}

For $\E \left[  \|e_i^j\|_2^2 \right]$, from Jensen's inequality, we have $\E \left[  \|e_i^j\|_2^2 \right] \leq \frac{1}{n} \sum_{\tau=1}^n \E \left[  \|e_i^{\tau, j}\|_2^2 \right]$. Then the estimation follows from Lemma \ref{lm:eik}.  

First we consider the case where $0<q\leq 1$. Combining (\ref{eq:Mik-dG}), (\ref{eq:sumgammaijboundq>0-dG}), and (\ref{eq:eik}) yields that 
\begin{eqnarray*}
	&& \E \left[  \|M_i^k - \nabla_i f(X^k)\|_{(i)\star}  \right] \\ 
	&\leq& \frac{(1-\alpha)^k \rho \sigma}{\sqrt{Bn}} +  \frac{2\sqrt{\alpha} \rho \sigma}{\sqrt{(2-\alpha)(\alpha+\beta q)Bn}}  + \frac{\sqrt{2\alpha(1-q)}\rho\delta_it_i\eta}{\sqrt{(2-\alpha)(\alpha+\beta q)qn}}  +  \frac{L_i^0t_i\eta}{\alpha} \\ 
	&& + L_i^1 t_i \eta \sum_{j=1}^k \beta^{k+1-j} \E \left[  \|\nabla_i f(X^j)\|_{(i)\star}  \right] +  \frac{\sqrt{2(1-q)(1-\delta)(L_i^2 + (q+\delta-q\delta)\delta_i^2 )} \rho t_i \eta}{q + \delta -q\delta}. 
\end{eqnarray*}

Then from (\ref{eq:sumfgrad-cs}), we can obtain that 
\begin{align*}
	\sum_{i=1}^p \sum_{k=0}^{K-1} t_i \eta \E \left[  \| \nabla_i f(X^k)\|_{(i)\star}  \right] \leq \Delta^0 + \sum_{i=1}^p  & \left[   \sum_{k=0}^{K-1} \frac{2(1-\alpha)^kt_i\eta\rho \sigma}{\sqrt{Bn}} + \sum_{k=0}^{K-1} \frac{4\sqrt{\alpha}t_i \eta \rho \sigma}{\sqrt{(2-\alpha)(\alpha+\beta q)} \sqrt{Bn}} \right. \\ 
	& \quad \left. + \sum_{k=0}^{K-1} \frac{2\sqrt{2(1-q)\alpha}\rho \delta_i t_i^2 \eta^2}{\sqrt{(2-\alpha)(\alpha+\beta q)} \sqrt{qn}}   + \sum_{k=0}^{K-1}  \frac{2L_i^0 t_i^2 \eta^2}{\alpha}   \right. \\ 
	& \quad \left. + \sum_{k=0}^{K-1} 2L_i^1 t_i^2 \eta^2 \sum_{j=1}^k \beta^{k+1-j} \E \left[  \|\nabla_i f(X^j) \|_{(i)\star}  \right] \right.  \\ 
	& \quad \left.  +  \sum_{k=0}^{K-1}   \frac{2\sqrt{2(1-q)(1-\delta)(L_i^2 + (q+\delta-q\delta)\delta_i^2 )} \rho t_i^2 \eta^2}{q + \delta -q\delta}    \right. \\
	& \quad  \left. + \sum_{k=0}^{K-1} \frac{L_i^0 t_i^2 \eta^2}{2} + \sum_{k=0}^{K-1}  \frac{L_i^1 t_i^2 \eta^2}{2} \E \left[  \nabla_i f(X^k)\|_{(i)\star}  \right] \right]. 
\end{align*}
Since 
$$
\sum_{k=0}^{K-1} \sum_{j=1}^k \beta^{k+1-j} \E \left[  \|\nabla_i f(X^j) \|_{(i)\star}  \right]  = \sum_{j=1}^{K-1} \sum_{k=j}^{K-1} \beta^{k+1-j}  \E \left[  \|\nabla_i f(X^j) \|_{(i)\star}  \right]  \leq \frac{1}{\alpha} \sum_{k=0}^{K-1}  \E \left[  \|\nabla_i f(X^k) \|_{(i)\star}  \right], 
$$
which implies that 
\begin{align*}
	\sum_{i=1}^p \sum_{k=0}^{K-1} t_i \eta \E \left[ \|  \nabla_i f(X^k)\|_{(i)\star}  \right] \leq \Delta^0 + \sum_{i=1}^p & \left[   \frac{2t_i \eta \rho \sigma}{\alpha\sqrt{Bn}} +  \frac{4K\sqrt{\alpha}t_i \eta \rho \sigma}{\sqrt{(2-\alpha)(\alpha+\beta q)} \sqrt{Bn}}  + \frac{2KL_i^0 t_i^2 \eta^2}{\alpha}  + \frac{K L_i^0 t_i^2 \eta^2}{2}   \right. \\ 
	& \quad  \left. +   \frac{2K\sqrt{2(1-q)\alpha} \rho \delta_i t_i^2 \eta^2}{\sqrt{(2-\alpha)(\alpha+\beta q)} \sqrt{qn}}     \right. \\ 
	& \quad  \left. +     \frac{2K\sqrt{2(1-q)(1-\delta)(L_i^2 + (q+\delta-q\delta)\delta_i^2 )} \rho t_i^2 \eta^2}{q + \delta -q\delta}    \right. \\ 
	& \quad \left.  +   \sum_{k=0}^{K-1} \left(  \frac{2}{\alpha} + \frac{1}{2}  \right) L_i^1 t_i^2 \eta^2 \E \left[  \|\nabla_i f(X^\tau) \|_{(i)\star} \right]   \right]. 
\end{align*}

Now we consider two options: (1) $L_i^1 = 0$ for all $i \in \{  1, ..., p  \}$ and (2) $L_i^1 \neq 0$, for all $i \in \{  1, ..., p  \}$.  

{\bf Case 1:} $L_i^1 = 0$ for all $i \in \{  1, ..., p  \}$. In this case, 
\begin{eqnarray*}
	&& \min_{k=0, ..., K-1} \sum_{i=1}^p t_i \E \left[  \|\nabla_i f(X^k) \|_{(i)\star} \right] \\ 
	&\leq&  \frac{1}{K} \sum_{k=0}^{K-1} \sum_{i=1}^p t_i \E \left[  \|\nabla_i f(X^k) \|_{(i)\star} \right] \\ 
	&\leq&  \frac{\Delta^0}{\eta K} + \frac{2\sum_{i=1}^p t_i \rho \sigma}{\alpha K \sqrt{Bn}} +   \frac{4\sqrt{\alpha}\sum_{i=1}^pt_i \rho \sigma}{\sqrt{(2-\alpha)(\alpha+\beta q)} \sqrt{Bn}} +    \frac{2\sqrt{2(1-q)\alpha} \sum_{i=1}^p\delta_i t_i^2 \rho \eta}{\sqrt{(2-\alpha)(\alpha+\beta q)} \sqrt{qn}}  \\ 
	&&  + \frac{2\sum_{i=1}^p L_i^0t_i^2 \eta}{\alpha} + \frac{\sum_{i=1}^p L_i^0 t_i^2 \eta}{2} + \frac{2\sqrt{2(1-q)(1-\delta)} \sum_{i=1}^p \sqrt{L_i^2 + (q+\delta-q\delta)\delta_i^2 }  t_i^2 \rho \eta}{q + \delta -q\delta}. 
\end{eqnarray*}

{\bf Case 2:} $L_i^1 \neq 0$, for all $i \in \{  1, ..., p  \}$. First we let $\frac{\eta}{\alpha} \leq \min_{i}  \frac{1}{5L_i^1 t_i}$. Then $\left(  \frac{2}{\alpha} + \frac{1}{2}  \right) L_i^1 t_i \eta \leq \frac{1}{2}$ for all $i$, and 
\begin{eqnarray*}
	&& \min_{k=0, ..., K-1} \sum_{i=1}^p t_i \E \left[  \|\nabla_i f(X^k) \|_{(i)\star} \right] \\ 
	&\leq&  \frac{1}{K} \sum_{k=0}^{K-1} \sum_{i=1}^p t_i \E \left[  \|\nabla_i f(X^k) \|_{(i)\star} \right] \\ 
	&\leq&  \frac{2\Delta^0}{\eta K} + \frac{4\sum_{i=1}^p t_i \rho \sigma}{\alpha K \sqrt{Bn}} +   \frac{8\sqrt{\alpha}\sum_{i=1}^pt_i \rho \sigma}{\sqrt{(2-\alpha)(\alpha+\beta q)} \sqrt{Bn}} +    \frac{4\sqrt{2(1-q)\alpha} \sum_{i=1}^p\delta_i t_i^2 \rho \eta}{\sqrt{(2-\alpha)(\alpha+\beta q)} \sqrt{qn}}  \\ 
	&&  + \frac{4\sum_{i=1}^p L_i^0t_i^2 \eta}{\alpha} + {\sum_{i=1}^p L_i^0 t_i^2 \eta} + \frac{4\sqrt{2(1-q)(1-\delta)} \sum_{i=1}^p \sqrt{L_i^2 + (q+\delta-q\delta)\delta_i^2 }  t_i^2 \rho \eta}{q + \delta -q\delta}. 
\end{eqnarray*}

Now we consider the case where $q=0$. Combining (\ref{eq:Mik-dG}), (\ref{eq:sumgammaijboundq=0-dG}), and (\ref{eq:eik}) implies that 
\begin{eqnarray*}
	&& \E \left[  \|M_i^k - \nabla_i f(X^k)\|_{(i)\star}  \right] \\ 
	&\leq& \frac{(1-\alpha)^k \rho \sigma}{\sqrt{Bn}} +  \frac{\sqrt{2} \rho \sigma}{\sqrt{Bn}}  + 
	\frac{\sqrt{2k} \rho \delta_i t_i \eta}{\sqrt{n}}  +  \frac{L_i^0t_i\eta}{\alpha} \\ 
	&& + L_i^1 t_i \eta \sum_{j=1}^k \beta^{k+1-j} \E \left[  \|\nabla_i f(X^j)\|_{(i)\star}  \right] +  \frac{\sqrt{2(1-\delta)(L_i^2 + \delta \delta_i^2 )} \rho t_i \eta}{ \delta}. 
\end{eqnarray*}
Then similar to the $0<q\leq 1$ case, we can obtain 
\begin{align*}
	\sum_{i=1}^p \sum_{k=0}^{K-1} t_i \eta \E \left[ \|  \nabla_i f(X^k)\|_{(i)\star}  \right] \leq \Delta^0 + \sum_{i=1}^p & \left[   \frac{2t_i \eta \rho \sigma}{\alpha\sqrt{Bn}} +  \frac{2\sqrt{2}K t_i \eta \rho \sigma}{\sqrt{Bn}}  + \frac{2KL_i^0 t_i^2 \eta^2}{\alpha}  + \frac{K L_i^0 t_i^2 \eta^2}{2}   \right. \\ 
	& \quad  \left. +   \frac{2\sqrt{2} K^{\frac{3}{2}} \rho \delta_i t_i^2 \eta^2}{\sqrt{n}}   +     \frac{2K\sqrt{2(1-\delta)(L_i^2 + \delta \delta_i^2 )} \rho t_i^2 \eta^2}{ \delta}    \right. \\ 
	& \quad \left.  +   \sum_{k=0}^{K-1} \left(  \frac{2}{\alpha} + \frac{1}{2}  \right) L_i^1 t_i^2 \eta^2 \E \left[  \|\nabla_i f(X^\tau) \|_{(i)\star} \right]   \right]. 
\end{align*}

Now we consider two options: (1) $L_i^1 = 0$ for all $i \in \{  1, ..., p  \}$ and (2) $L_i^1 \neq 0$, for all $i \in \{  1, ..., p  \}$.  

{\bf Case 1:} $L_i^1 = 0$ for all $i \in \{  1, ..., p  \}$. In this case, 
\begin{eqnarray*}
	&& \min_{k=0, ..., K-1} \sum_{i=1}^p t_i \E \left[  \|\nabla_i f(X^k) \|_{(i)\star} \right] \\ 
	&\leq&  \frac{1}{K} \sum_{k=0}^{K-1} \sum_{i=1}^p t_i \E \left[  \|\nabla_i f(X^k) \|_{(i)\star} \right] \\ 
	&\leq& \frac{\Delta^0}{\eta K} + \frac{2\sum_{i=1}^p t_i \rho \sigma}{\alpha K \sqrt{Bn}} +  \frac{2\sqrt{2}\sum_{i=1}^p t_i \rho \sigma}{\sqrt{Bn}}  +  \frac{2\sqrt{2}\sqrt{K} \sum_{i=1}^p \delta_i t_i^2 \rho \eta}{\sqrt{n}}  \\ 
	&&  + \frac{2\sum_{i=1}^p L_i^0t_i^2 \eta}{\alpha} + \frac{\sum_{i=1}^p L_i^0 t_i^2 \eta}{2} + \frac{2\sqrt{2(1-\delta)} \sum_{i=1}^p \sqrt{L_i^2 + \delta \delta_i^2 }  t_i^2 \rho \eta}{ \delta}. 
\end{eqnarray*}

{\bf Case 2:} $L_i^1 \neq 0$, for all $i \in \{  1, ..., p  \}$. First we let $\frac{\eta}{\alpha} \leq \min_{i}  \frac{1}{5L_i^1 t_i}$. Then $\left(  \frac{2}{\alpha} + \frac{1}{2}  \right) L_i^1 t_i \eta \leq \frac{1}{2}$ for all $i$, and 
\begin{eqnarray*}
	&& \min_{k=0, ..., K-1} \sum_{i=1}^p t_i \E \left[  \|\nabla_i f(X^k) \|_{(i)\star} \right] \\ 
	&\leq&  \frac{1}{K} \sum_{k=0}^{K-1} \sum_{i=1}^p t_i \E \left[  \|\nabla_i f(X^k) \|_{(i)\star} \right] \\ 
	&\leq& \frac{2\Delta^0}{\eta K} + \frac{4\sum_{i=1}^p t_i \rho \sigma}{\alpha K \sqrt{Bn}} +  \frac{4\sqrt{2}\sum_{i=1}^p t_i \rho \sigma}{\sqrt{Bn}}  +  \frac{4\sqrt{2}\sqrt{K} \sum_{i=1}^p \delta_i t_i^2 \rho \eta}{\sqrt{n}}  \\ 
	&&  + \frac{4\sum_{i=1}^p L_i^0t_i^2 \eta}{\alpha} + {\sum_{i=1}^p L_i^0 t_i^2 \eta} + \frac{4\sqrt{2(1-\delta)} \sum_{i=1}^p \sqrt{L_i^2 + \delta \delta_i^2 }  t_i^2 \rho \eta}{ \delta}. 
\end{eqnarray*}

\newpage

\section{Compressed Gluon with Error Feedback and MVR}

\begin{algorithm}
	\caption{Compressed Gluon with Error Feedback and MVR}
	\label{alg:q-deltagluon-mvr}
	\begin{algorithmic}[1]
		\STATE \textbf{Input:} Initial model parameters $X^0 = [X_1^0, \dots, X_p^0] \in \mathcal{S}$, momentum $M^0 = [M_1^0, \dots, M_p^0] \in \mathcal{S}$, momentum decay factors $\beta \in [0, 1)$ for all iterations $k \geq 0$, probability $q \in (0, 1]$, batch size $B$, stepsize parameter $\eta>0$, $u^0=1 \in \R$, $e_i^{\tau, 0} = e_i^{\tau, 1} = 0 \in {\cal S}_i$
		\FOR{ $k = 0, 1, 2, ..., K-1$}
		\FOR{ $\tau = 1, ..., n$} 
		\STATE $u^{k+1}_\tau = 0$ for $\tau = 2, ..., n$, \quad 
		$
		u^{k+1}_1 = \left\{ \begin{array}{rl}
			1 & \mbox{ with probability $q$} \\
			0 &\mbox{ with probability $1-q$}
		\end{array} \right.
		$\\
		\IF{$u^k = 1$}
		\STATE Sample $\xi^k_\tau \sim \mathcal{D_\tau}$ 
		\STATE $y_i^{\tau, k} = e_i^{\tau, k} + \nabla_i f_{\xi^k_\tau} (X^k) - \nabla_i f_{\xi^k_\tau} (X^{k-1})$ for $i = 1, ..., p$ 
		\STATE $e_i^{\tau, k+1} = 0$ for $i = 1, ..., p$ 
		\STATE Sample $\xi^k_{\tau, j} \sim \mathcal{D_\tau}$ independently for $j = 1, ..., B$
		\STATE $g_i^{\tau, k} = \frac{1}{B} \sum_{j=1}^B \nabla_i f_{\xi^k_{\tau, j}} (X^k)$ for $i = 1, ..., p$ 
		\STATE Send $g_i^{\tau, k}$, $y_i^{\tau, k}$, and $u^{k+1}_\tau$ to the other nodes
		\STATE Receive $g_i^{\tau, k}$, $y_i^{\tau, k}$, and $u^{k+1}_\tau$ from the other nodes
		\ELSE
		\STATE Sample $\xi^k_\tau \sim \mathcal{D_\tau}$ 
		\STATE  $y_i^{\tau, k} = \C_i\left(  \nabla_i f_{\xi^k_\tau} (X^k) - \nabla_i f_{\xi^k_\tau} (X^{k-1} )+ e_i^{\tau, k}  \right)$ for $i = 1, ..., p$ 
		\STATE $e_i^{\tau, k+1} = e_i^{\tau, k} + \nabla_i f_{\xi^k_\tau} (X^k) - \nabla_i f_{\xi^k_\tau} (X^{k-1}) - y_i^{\tau, k}$ for $i = 1, ..., p$ 
		\STATE Send $y_i^{\tau, k}$ and $u^{k+1}_\tau$ to the other nodes
		\STATE Receive $y_i^{\tau, k}$ and $u^{k+1}_\tau$ from the other nodes
		\ENDIF
		
		\ENDFOR
		\STATE $y_i^k = \frac{1}{n} \sum_{\tau=1}^n y_i^{\tau, k}$ 
		\STATE 
		$
		g_i^k= \left\{ \begin{array}{cl}
			\frac{1}{n} \sum_{\tau=1}^n g_i^{\tau, k} & \mbox{ if $u^k = 1$} \\
			g_i^{k-1} + y_i^k &\mbox{ otherwise}
		\end{array} \right.
		$
		\STATE $u^{k+1} = \sum_{\tau=1}^n u^{k+1}_\tau$
		\STATE Update momentum $M_i^k = \beta M_i^{k-1} + (1-\beta) g_i^k + \beta y_i^k$ for layer $i$ 
		\STATE Choose adaptive stepsize/radius $t_i \eta > 0$ for layer $i$
		\STATE Update parameters for layer $i$ via LMO over $\mathcal{B}_i^k := \{X_i \in \mathcal{S}_i : \|X_i - X_i^k\|_{(i)} \leq t_i \eta\}$:
		\begin{equation}\label{eq:updateCon-dG-mvr}
			X_i^{k+1} = \text{LMO}_{\mathcal{B}_i^k}(M_i^k) := \arg \min_{X_i \in \mathcal{B}_i^k} \langle M_i^k, X_i \rangle_{(i)}
		\end{equation}
		\STATE Update full parameter vector $X^{k+1} = [X_1^{k+1}, \dots, X_p^{k+1}]$
		\ENDFOR
	\end{algorithmic}
\end{algorithm}

\begin{theorem}\label{th:deltaGluon-mvr}
	Let Assumptions \ref{as:L0L1smooth}, \ref{as:Lismooth}, \ref{as:boundedvariance}, \ref{as:rho}, and \ref{as:HV} hold. Assume each $\C_i$ in Algorithm \ref{alg:q-deltagluon-mvr} is the contraction compressor satisfying (\ref{eq:Ci}) and $q + \delta - q\delta >0$.   Let $X^0, ..., X^{K-1}$ be the iterates of Algorithm \ref{alg:q-deltagluon-mvr}, and $M_i^0 = \frac{1}{Bn} \sum_{\tau=1}^n \sum_{j=1}^B \nabla_i f_{\xi_{\tau, j}^k} (X^0)$. 
	
	1. If $L_i^1 =0$, then for $0<q\leq 1$, 
	\begin{eqnarray}
		&& \min_{k=0, ..., K-1} \sum_{i=1}^p t_i \E \left[  \|\nabla_i f(X^k) \|_{(i)\star} \right] \nonumber \\ 
		&\leq& \frac{\Delta^0}{\eta K} + \frac{2\sum_{i=1}^p t_i \rho \sigma}{\alpha K\sqrt{Bn}}  +   \frac{4\sqrt{\alpha}\sum_{i=1}^p t_i  \rho \sigma}{\sqrt{(2-\alpha)(\alpha+\beta q)} \sqrt{Bn}}  +    \frac{2 \sqrt{2(1-q)\alpha} \sum_{i=1}^p \delta_i t_i^2 \rho \eta}{\sqrt{(2-\alpha)(\alpha+\beta q)} \sqrt{qn}}  +  \frac{2\sum_{i=1}^p \delta_i t_i^2 \rho \eta}{\sqrt{\alpha n}}  \nonumber  \\ 
		&&  +     \frac{2\sqrt{2(1-q)(1-\delta)} \sum_{i=1}^p \sqrt{L_i^2 + (q+\delta-q\delta)\delta_i^2 }  t_i^2 \rho \eta}{q + \delta -q\delta}    + \frac{\sum_{i=1}^p L_i^0 t_i^2 \eta}{2}; \label{eq:th-deltaGluon-mvr-1}
	\end{eqnarray}
	for $q=0$, 
	\begin{eqnarray*}
		&& \min_{k=0, ..., K-1} \sum_{i=1}^p t_i \E \left[  \|\nabla_i f(X^k) \|_{(i)\star} \right] \nonumber   \\ 
		&\leq&  \frac{\Delta^0}{\eta K} + \frac{2\sum_{i=1}^p t_i \rho \sigma}{\alpha K\sqrt{Bn}}   +   \frac{2\sqrt{2} \sum_{i=1}^p t_i \rho \sigma}{\sqrt{Bn}}  +   \frac{2\sqrt{2} \sqrt{K}  \sum_{i=1}^p \delta_i t_i^2 \rho \eta}{\sqrt{n}}   +  \frac{2 \sum_{i=1}^p \delta_i t_i^2 \rho \eta}{\sqrt{\alpha n}}  \nonumber    \\ 
		&&  +     \frac{2\sqrt{2(1-\delta)} \sum_{i=1}^p \sqrt{L_i^2 + \delta \delta_i^2 }  t_i^2 \rho \eta}{ \delta}    + \frac{\sum_{i=1}^p L_i^0 t_i^2 \eta}{2}. \label{eq:th-deltaGluon-mvr-2}
	\end{eqnarray*}

	2. If $L_i^1 \neq 0$, we let $\eta \leq \min_i \frac{1}{L_i^1 t_i}$. Then the inequalities (\ref{eq:th-deltaGluon-mvr-1}) and (\ref{eq:th-deltaGluon-mvr-2}) remain valid when their right-hand sides are multiplied by a factor $2$. 
\end{theorem}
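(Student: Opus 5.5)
We follow the template of the proofs of Theorem \ref{th:deltaGluon} and Theorem \ref{th:cs-Gluon-mvr}. The starting point is again the descent inequality (\ref{eq:sumfgrad-cs}), which reduces everything to bounding $\E[\|M_i^k-\nabla_i f(X^k)\|_{(i)\star}]$.

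\textbf{Step 1: the shifted recursion.} With $e_i^k \eqdef \frac1n\sum_\tau e_i^{\tau,k}$, $\tilde M_i^k \eqdef M_i^k+e_i^{k+1}$ and $\tilde g_i^k \eqdef g_i^k+e_i^{k+1}$, we first check that $\tilde g_i^k$ follows the uncompressed SARAH/PAGE recursion.
\begin{itemize}
\item If $u^k\neq1$: $\tilde g_i^k=\tilde g_i^{k-1}+\frac1n\sum_\tau(\nabla_i f_{\xi_\tau^k}(X^k)-\nabla_i f_{\xi_\tau^k}(X^{k-1}))$, exactly as in the proof of Theorem \ref{th:deltaGluon}.
\item If $u^k=1$: $e_i^{\tau,k+1}=0$, so $\tilde g_i^k$ is the large-batch average.
\end{itemize}
Hence Lemma \ref{lm:sumgammaij} with $\omega=0$ gives (\ref{eq:sumgammaijboundq>0-dG}) and (\ref{eq:sumgammaijboundq=0-dG}) for $\gamma_i^k \eqdef \tilde g_i^k-\nabla_i f(X^k)$.

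Next we track the momentum, using the true-difference quantity $D_i^k \eqdef \frac1n\sum_\tau(\nabla_i f_{\xi_\tau^k}(X^k)-\nabla_i f_{\xi_\tau^k}(X^{k-1}))$. In both branches $y_i^k=D_i^k+e_i^k-e_i^{k+1}$:
\begin{itemize}
\item when $u^k=1$, the uncompressed $y_i^{\tau,k}$ carries $e_i^{\tau,k}$ and $e_i^{\tau,k+1}=0$;
\item otherwise this is the error-feedback update.
\end{itemize}
Plugging into $M_i^k=\beta M_i^{k-1}+\alpha g_i^k+\beta y_i^k$ gives
\[
\tilde M_i^k=\beta\tilde M_i^{k-1}+\alpha\tilde g_i^k+\beta D_i^k .
\]
The $\beta(e^{k+1}-e^k)$ terms cancel, which is the point of the MVR correction. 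Setting $Z_i^k \eqdef D_i^k-(\nabla_i f(X^k)-\nabla_i f(X^{k-1}))$, we get
\[
\tilde M_i^k-\nabla_i f(X^k)=\beta^k\mu_i^0+\sum_{j\le k}\alpha\beta^{k-j}\gamma_i^j+\sum_{j\le k}\beta^{k+1-j}Z_i^j .
\]
The $S_i^j$ (smoothness) terms have disappeared, as in the MVR proof. Therefore $M_i^k-\nabla_i f(X^k)$ equals the right-hand side above minus $e_i^{k+1}$.

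\textbf{Step 2: bounding each piece.}
\begin{itemize}
\item The $\mu_i^0$ term gives $(1-\alpha)^k\rho\sigma/\sqrt{Bn}$.
\item The $\gamma$ sum is handled by Jensen and Lemma \ref{lm:sumgammaij}.
\item The $Z$ sum is a martingale-type sum of independent, mean-zero uncompressed terms. As in (\ref{eq:sumZij}) with $\omega=0$, Assumption \ref{as:HV} bounds it by $\rho t_i\eta\delta_i/\sqrt{\alpha n}$. This is where the new term $2\sum_i\delta_i t_i^2\rho\eta/\sqrt{\alpha n}$ comes from.
\item The residual $-e_i^{k+1}$ is bounded by Jensen over $\tau$, which controls the average by the per-worker bound of Lemma \ref{lm:eik}, giving $\rho\sqrt{2(1-q)(1-\delta)(L_i^2+(q+\delta-q\delta)\delta_i^2)}\,t_i\eta/(q+\delta-q\delta)$.
\end{itemize}
Lemma \ref{lm:eik} needs re-verification for Algorithm \ref{alg:q-deltagluon-mvr}, but the $e$-recursion there is identical: reset to zero with probability $q$, contraction otherwise. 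Note that this bound is a single-$e$ term, with no factor $\alpha\sum\beta^{k+1-j}$ as in (\ref{eq:Mik-dG}), and the constant matches.

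\textbf{Step 3: summation and case split.} Summing over $k$ in (\ref{eq:sumfgrad-cs}), the only $L_i^1$ contribution left is $\frac{L_i^1t_i^2\eta^2}{2}\E\|\nabla_i f(X^k)\|_{(i)\star}$ from the descent lemma, since no $S$-terms remain. So if $L_i^1=0$ we divide by $\eta K$. If $\eta\le\min_i 1/(L_i^1t_i)$, this term is absorbed into half of the left side, doubling the bound. For $q=0$ the same steps go through using (\ref{eq:sumgammaijboundq=0-dG}), which yields the $\sqrt K$ term and replaces $q+\delta-q\delta$ by $\delta$.

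\textbf{Expected obstacle.} The main obstacle is the algebraic identity $y_i^k=D_i^k+e_i^k-e_i^{k+1}$ in the $u^k=1$ branch. We must confirm that the algorithm's choice $y_i^{\tau,k}=e_i^{\tau,k}+\text{difference}$ indeed makes the error terms telescope away, with the indexing of $e^{k}$ versus $e^{k+1}$ consistent with $e_i^{\tau,0}=e_i^{\tau,1}=0$. If it fails, we would fall back to carrying the $\alpha\sum_j\beta^{k+1-j}e_i^j$ term as in Theorem \ref{th:deltaGluon}, which gives the same order.
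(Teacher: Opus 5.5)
Your proposal is correct and follows the paper's proof essentially step for step. It uses the same shifted variables $\tilde M_i^k=M_i^k+e_i^{k+1}$ and $\tilde g_i^k=g_i^k+e_i^{k+1}$. It uses the same identity $y_i^k+e_i^{k+1}-e_i^k=\frac1n\sum_\tau(\nabla_i f_{\xi_\tau^k}(X^k)-\nabla_i f_{\xi_\tau^k}(X^{k-1}))$ in both branches, which does hold, so your fallback is never needed. The three bounds are the same: Lemma \ref{lm:sumgammaij} with $\omega=0$, (\ref{eq:sumZij}) without compression, and Lemma \ref{lm:eik}, followed by the identical summation and $L_i^1$ case split.

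One caveat is inherited from the paper rather than introduced by you. Write $\theta=q+\delta-q\delta$. Unrolling the recursion $a_{k+1}\le(1-\theta/2)a_k+c$ from step (e) of Lemma \ref{lm:eik} gives $2c/\theta$, i.e.\ a coefficient $4(1-\theta)/\theta^2$ rather than $2(1-\theta)/\theta^2$ on the $L_i^2$ term. So your remark that ``the constant matches'' holds only relative to the lemma as stated, and strictly the last term should carry $\sqrt{2L_i^2+\theta\delta_i^2}$ in place of $\sqrt{L_i^2+\theta\delta_i^2}$.
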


From Theorem \ref{th:deltaGluon-mvr}, we can get the following convergence rate under suitable choices of parameters. 

\begin{theorem}\label{th:deltaGluon-mvr-rate}
Under the premise of Theorem \ref{th:deltaGluon}, let $\alpha = \Theta(1)$, $q>0$, and $\frac{\Delta^0}{\eta^2 K} = {\cal L}_4$, where ${\cal L}_4 \eqdef  \sum_{i=1}^p L_i^0t_i^2 + \frac{\sqrt{1-q} + \sqrt{q}}{\sqrt{qn}} \sum_{i=1}^p \rho \delta_i t_i^2 + \frac{\sqrt{(1-q)(1-\delta)}}{q+\delta-q\delta} \cdot \sum_{i=1}^p \rho \sqrt{L_i^2 + (q+\delta-q\delta)\delta_i^2}t_i^2$. Then the upper-bound in (\ref{eq:th-deltaGluon-mvr-1}) becomes 
$$
{\cal O} \left(  \frac{\sqrt{\Delta^0 {\cal L}_4} }{\sqrt{K}}  +  \frac{\sum_it_i \rho \sigma}{\sqrt{Bn}}  \right). 
$$
\end{theorem}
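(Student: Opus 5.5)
This is a direct substitution into the bound (\ref{eq:th-deltaGluon-mvr-1}) of Theorem \ref{th:deltaGluon-mvr}, case $L_i^1=0$; the case $L_i^1\neq0$ only doubles the right-hand side and does not change the order. Throughout I treat $\alpha=\Theta(1)$, so that $\alpha\le 1$ and $2-\alpha\ge 1$, and I absorb all numerical constants into ${\cal O}(\cdot)$.

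\emph{Step 1: simplify the coefficients.} Since $\alpha+\beta q\ge\alpha$, we have
\[
\frac{\sqrt{\alpha}}{\sqrt{(2-\alpha)(\alpha+\beta q)}}\le 1 .
\]
Hence the $\sigma$-term is at most $4\sum_i t_i\rho\sigma/\sqrt{Bn}$. Using $\alpha K\ge 1$, the term $\frac{2\sum_i t_i\rho\sigma}{\alpha K\sqrt{Bn}}$ is also ${\cal O}(\sum_i t_i\rho\sigma/\sqrt{Bn})$. If instead one wants to avoid assuming $\alpha K\ge 1$, note that this term decays like $1/K$, so it is dominated by the $1/\sqrt{K}$ term.

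\emph{Step 2: collect the $\eta$-linear terms.} The next term is bounded by
\[
\frac{2\sqrt{2(1-q)}\sum_i\delta_i t_i^2\rho\eta}{\sqrt{qn}},
\]
and $\frac{2\sum_i \delta_i t_i^2\rho\eta}{\sqrt{\alpha n}}={\cal O}\big(\frac{\sqrt q}{\sqrt{qn}}\sum_i\rho\delta_i t_i^2\eta\big)$ since $\alpha=\Theta(1)$. Together these produce exactly the factor $\frac{\sqrt{1-q}+\sqrt q}{\sqrt{qn}}$ that appears in ${\cal L}_4$. The error-feedback term matches the third summand of ${\cal L}_4$ up to the constant $2\sqrt2$, and $\frac12\sum_i L_i^0t_i^2\eta$ matches the first summand. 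So all remaining terms are bounded by ${\cal O}({\cal L}_4\eta)$.

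\emph{Step 3: balance.} The bound now reads
\[
{\cal O}\!\left(\frac{\Delta^0}{\eta K}+{\cal L}_4\eta+\frac{\sum_i t_i\rho\sigma}{\sqrt{Bn}}\right).
\]
The choice $\frac{\Delta^0}{\eta^2K}={\cal L}_4$, i.e.\ $\eta=\sqrt{\Delta^0/({\cal L}_4K)}$, makes both $\frac{\Delta^0}{\eta K}$ and ${\cal L}_4\eta$ equal to $\sqrt{\Delta^0{\cal L}_4/K}$, which gives the claimed rate.

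The only delicate point is the $\frac{1}{\alpha K\sqrt{Bn}}$ term, which needs $\alpha K\gtrsim1$. This holds since $\alpha=\Theta(1)$ and $K\ge1$.
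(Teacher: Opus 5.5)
Your proposal is correct and matches the paper's approach: the paper gives no argument beyond ``from Theorem \ref{th:deltaGluon-mvr}'', and your steps (bounding $\sqrt{\alpha}/\sqrt{(2-\alpha)(\alpha+\beta q)}\le 1$, absorbing the $1/\sqrt{\alpha n}$ term into the $\sqrt{q}/\sqrt{qn}$ part of ${\cal L}_4$, and balancing $\Delta^0/(\eta K)$ against ${\cal L}_4\eta$) are exactly that substitution. Your aside on $L_i^1\neq 0$ goes beyond the statement; it also needs the balanced $\eta=\sqrt{\Delta^0/({\cal L}_4K)}$ to satisfy $\eta\le\min_i 1/(L_i^1 t_i)$, which holds only for $K$ large enough.
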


\subsection{Local Gluon with Error Compensated MVR}

In this subsection, we consider the case where $\C_i (X_i) \equiv 0$ for all $X_i \in {\cal S}_i$, which indicates that $\delta \equiv 0$. First we consider the case where $L_i^1=0$. Assume $n \leq \sqrt{K}$. Then by choosing $\eta = \frac{n^{1/3}}{K^{2/3}}$, $q = \frac{n^{2/3}}{K^{1/3}}$, $B = \frac{K^{1/3}}{n^{2/3}}$, and $\alpha = \frac{n^{1/3}}{K^{2/3}}$, from Theorem \ref{th:deltaGluon-mvr}, we have 
$$
\min_{k=0, ..., K-1} \sum_{i=1}^p t_i \E \left[  \|\nabla_i f(X^k) \|_{(i)\star} \right]  \leq {\cal O} \left(  \frac{1}{n^{1/3} K^{1/3}}  \right). 
$$
To achieve the $\epsilon$ precision, it is sufficient to choose $K = \frac{1}{\epsilon^3 n}$, and the expected totally communication cost is 
$$
\Theta\left( nd + Kn \left(1 + qd  \right) \right)  = \Theta\left(  \frac{1}{ \epsilon^3 } + \frac{n d}{\epsilon^2}  \right). 
$$
In this case, $n\leq \sqrt{K}$ is equivalent to $n\leq \frac{1}{\epsilon}$. The batch size $B = \frac{1}{\epsilon n}$, $\eta = \alpha = n\epsilon^2$, $q = n\epsilon$, and the expected stochastic gradient oracle is $\Theta(\frac{1}{\epsilon^3})$. Assume the data sampled from distribution $\mathcal{D_\tau}$ each time is different. Then the number of data $N$ should be $ \frac{1}{\epsilon^3}$.
In this parameter setting, if we omit the communication cost of $u_\tau^k$, then the worker nodes only need to communicate with each other when $u^k = 1$. And when $u^k=1$, the accumulation of $\nabla_i f_{\xi^k_\tau} (X^k) - \nabla_i f_{\xi^k_\tau} (X^{k-1})$, which is the added term in MVR, is communicated and added to $M_i^k$. Hence we call Algorithm \ref{alg:q-deltagluon-mvr} in this parameter setting Local Gluon with error compensated MVR. 

For the case where $L_i^1 \neq 0$, the above results also hold as long as $\eta \leq \min_{i}  \frac{1}{L_i^1 t_i}$, which generally holds for large $K$ in practice.

\subsection{Communication Cost Analysis}

In this subsection, we consider the compressed case. Similar to the analysis in Subsection \ref{subsec:comcost-dG}, the expected totally communication cost is at least $\Theta\left(  \frac{n}{\epsilon^2 (q+\delta-q\delta) } + \frac{n d}{\epsilon^2}  \right)$. 

First we consider the case where $L_i^1=0$. By choosing $\eta = \frac{\sqrt{c} n^{1/4}}{K^{3/4}}$, $\delta = \frac{cn^{1/2}}{K^{1/2}}$, $B = \frac{cK^{1/2}}{n^{1/2}}$, $\frac{c^2}{K} \leq q \leq \delta$, and $\alpha \geq \{  q, \frac{1}{K}  \}$ for $0<c \leq \frac{K^{1/2}}{n^{1/2}}$, from Theorem \ref{th:deltaGluon-mvr}, we can obtain 
$$
\min_{k=0, ..., K-1} \sum_{i=1}^p t_i \E \left[  \|\nabla_i f(X^k) \|_{(i)\star} \right]  \leq {\cal O} \left(  \frac{1}{\sqrt{c} n^{1/4} K^{1/4}}  \right). 
$$
To achieve the $\epsilon$ precision, it is sufficient to choose $K = \frac{1}{c^2 \epsilon^4 n}$, and the expected totally communication cost is 
$$
\Theta\left( nd + Kn \left(1 + qd + (1-q)\delta d  \right) \right)  = \Theta\left(  \frac{1}{ c^2 \epsilon^4 } + \frac{n d}{\epsilon^2}  \right). 
$$
Compared to the totally communication cost of $\Theta\left(  \frac{d}{\epsilon^3}  \right)$ for the uncompressed algorithm in subsection \ref{subsec:new-vr-alg}, the above communication cost is better when $n \leq \frac{1}{\epsilon}$. In this parameter setting, the batch size $B = \frac{1}{\epsilon^2 n}$ and by choosing $q = \frac{c^2}{K}$, the expected stochastic gradient oracle is 
$$
\Theta\left(  Bn + (1-q + qB)Kn \right) = \Theta \left(  \frac{1}{c^2 \epsilon^4}  +  \frac{c^2}{\epsilon^2}  \right), 
$$
with $0<c^2 \leq \frac{1}{\epsilon^2 n}$. If $n \leq \frac{1}{\epsilon}$,  then by choosing $c^2 = \frac{1}{\epsilon} = (Kn)^{1/3}$, we have $q=n\epsilon^2$, $B=\frac{1}{\epsilon^2 n}$, $\eta=n\epsilon^2$, $\delta=n\epsilon$, $\alpha \geq n \epsilon^2$, $K = \frac{1}{\epsilon^3 n}$, and the above expected stochastic gradient oracle becomes $\Theta \left(  \frac{1}{\epsilon^3}  \right)$. 

We can also use another parameter setting to get $\Theta \left(  \frac{1}{\epsilon^3}  \right)$ expected stochastic gradient oracle. Assume $n\leq \sqrt{K}$. Then by choosing $\eta = \alpha = \frac{n^{1/3}}{K^{2/3}}$, $\delta = \frac{n^{2/3}}{K^{1/3}}$, $\alpha \leq q\leq \delta$, $B = 1/q$, from Theorem \ref{th:deltaGluon-mvr}, we can get 
$$
\min_{k=0, ..., K-1} \sum_{i=1}^p t_i \E \left[  \|\nabla_i f(X^k) \|_{(i)\star} \right]  \leq {\cal O} \left(  \frac{1}{ n^{1/3} K^{1/3}}  \right). 
$$
To achieve the $\epsilon$ precision, it is sufficient to choose $K = \frac{1}{\epsilon^3 n}$, and the expected totally communication cost is 
$$
\Theta\left( nd + Kn \left(1 + qd + (1-q)\delta d \right) \right)  = \Theta\left(  \frac{1}{ \epsilon^3 } + \frac{n d}{\epsilon^2}  \right). 
$$
In this case, $n\leq \sqrt{K}$ is equivalent to $n\leq \frac{1}{\epsilon}$, $\eta = \alpha = n\epsilon^2$, $\delta = n\epsilon$, $n\epsilon^2 \leq q\leq n\epsilon$, $B = 1/q$, and the expected stochastic gradient oracle is $\Theta(\frac{1}{\epsilon^3})$.

When $q=0$, by choosing $\eta = \frac{\sqrt{c} n^{1/4}}{K^{3/4}}$, $\delta = \frac{cn^{1/2}}{K^{1/2}}$, $B = \frac{cK^{1/2}}{n^{1/2}}$, and $\alpha \geq \max\{ \frac{1}{K}, \frac{c^2}{K} \}$ for $0<c \leq \min \{\frac{K^{1/2}}{n^{1/2}}, 1\}$, from Theorem \ref{th:deltaGluon}, we can get 
$$
\min_{k=0, ..., K-1} \sum_{i=1}^p t_i \E \left[  \|\nabla_i f(X^k) \|_{(i)\star} \right]  \leq {\cal O} \left(  \frac{1}{\sqrt{c} n^{1/4} K^{1/4}}  \right). 
$$
To achieve the $\epsilon$ precision, it is sufficient to choose $K = \frac{1}{c^2 \epsilon^4 n}$, and the expected totally communication cost is 
$$
\Theta\left( nd + Kn \left( \delta d  \right) \right)  = \Theta\left(  \frac{n d}{\epsilon^2}  \right). 
$$
The batch size $B = \frac{1}{\epsilon^2 n}$ and the expected stochastic gradient oracle is $\Theta\left(  \frac{1}{c^2 \epsilon^4}  \right)$. 

For the case where $L_i^1 \neq 0$, the above results also hold as long as $\eta \leq \min_{i}  \frac{1}{L_i^1 t_i}$, which generally holds for large $K$ in practice.

\subsection{Proof of Theorem \ref{th:deltaGluon-mvr}}

First we also introduce some auxiliary notations similar to those of Algorithm \ref{alg:q-deltagluon}. We define $e_i^k \eqdef \frac{1}{n} \sum_{\tau=1}^n e_i^{\tau, k}$, ${\tilde M}_i^k \eqdef M_i^k + e_i^{k+1}$, ${\tilde g}_i^k \eqdef g_i^k + e_i^{k+1}$, and $\alpha \eqdef 1-\beta$. Then from the update rule of $M_i^k$ in Algorithm \ref{alg:q-deltagluon-mvr}, we can get 
\begin{eqnarray*}
	{\tilde M}_i^k &=& M_i^k + e_i^{k+1} \\ 
	&=& \beta(M_i^{k-1} + e_i^k) + \alpha (g_i^k + e_i^{k+1}) + \beta (e_i^{k+1} - e_i^k) + \beta y_i^k \\ 
	&=& \beta {\tilde M}_i^{k-1} + \alpha {\tilde g}_i^k +  \beta (y_i^k + e_i^{k+1} - e_i^k) . 
\end{eqnarray*}

If $u^k \neq 1$, we have $e_i^{\tau, k+1} = e_i^{\tau, k} + \nabla_i f_{\xi^k_\tau} (X^k) - \nabla_i f_{\xi^k_\tau} (X^{k-1}) - y_i^{\tau, k}$ for $i = 1, ..., p$. If $u^k=1$, we have $e_i^{\tau, k+1} = 0 = e_i^{\tau, k} + \nabla_i f_{\xi^k_\tau} (X^k) - \nabla_i f_{\xi^k_\tau} (X^{k-1}) - y_i^{\tau, k}$. Thus we always have $y_i^k + e_i^{k+1} - e_i^k = \frac{1}{n} \sum_{\tau=1}^n (\nabla_i f_{\xi^k_\tau} (X^k) - \nabla_i f_{\xi^k_\tau} (X^{k-1}))$, which implies that 
$$
{\tilde M}_i^k =  \beta {\tilde M}_i^{k-1} + \alpha {\tilde g}_i^k   + \beta \frac{1}{n} \sum_{\tau=1}^n (\nabla_i f_{\xi^k_\tau} (X^k) - \nabla_i f_{\xi^k_\tau} (X^{k-1})). 
$$
We also denote $\mu_i^k \eqdef {\tilde M}_i^k - \nabla_i f(X^k)$, $\gamma_i^k \eqdef {\tilde g}_i^k - \nabla_i f(X^k)$, and $Z_i^k \eqdef \frac{1}{n} \sum_{\tau=1}^n (\nabla_i f_{\xi^k_\tau} (X^k) - \nabla_i f_{\xi^k_\tau} (X^{k-1})) -( \nabla_i f(X^k) - \nabla_i f(X^{k-1}))$. Then we can obtain 
\begin{eqnarray*}
	{\tilde M}_i^k - \nabla_i f(X^k) &=&  \beta ({\tilde M}_i^{k-1} - \nabla_i f(X^{k-1})) + \alpha ({\tilde g}_i^k - \nabla_i f(X^k)) + \beta Z_i^k\\ 
	&=& \beta^k \mu_i^0 + \sum_{j=1}^k \beta^{k-j} \alpha \gamma_i^j + \sum_{j=1}^k \beta^{k+1-j} Z_i^j, 
\end{eqnarray*} 
which indicates that 
\begin{eqnarray*}
	M_i^k - \nabla_i f(X^k) &=& {\tilde M}_i^k - \nabla_i f(X^k) - e_i^{k+1} \\ 
	&=& \beta^k \mu_i^0 + \sum_{j=1}^k \beta^{k-j} \alpha \gamma_i^j + \sum_{j=1}^k \beta^{k+1-j} Z_i^j - e_i^{k+1}. 
\end{eqnarray*}
Then similar to (\ref{eq:Mik-dG}), we can get 
\begin{eqnarray*}
	&&\E \left[  \|M_i^k - \nabla_i f(X^k)\|_{(i)\star}  \right] \\ 
	&\leq& \frac{(1-\alpha)^k \rho \sigma}{\sqrt{Bn}} + \rho \sqrt{\E \left[  \| \sum_{j=1}^k \beta^{k-j} \alpha \gamma_i^j \|_2^2 \right]}  + \rho \sqrt{\E \left[  \| \sum_{j=1}^k \beta^{k+1-j} Z_i^j \|_2^2 \right]} + \rho \sqrt{\E \left[  \|e_i^{k+1}\|_2^2  \right]}. 
\end{eqnarray*}
It is easy to verify that $\gamma_i^k$ evolves the same as that of Algorithm \ref{alg:q-deltagluon}. Hence we have the estimations (\ref{eq:sumgammaijboundq>0-dG}) and (\ref{eq:sumgammaijboundq=0-dG}) for $\E \left[  \| \sum_{j=1}^k \alpha \beta^{k-j} \gamma_i^j \|_2^2  \right]$ as well. 

For $\E \left[  \| \sum_{j=1}^k \beta^{k+1-j} Z_i^j \|_2^2 \right]$, we can apply (\ref{eq:sumZij}) with no compression. Then we can obtain 
$$
\E \left[  \| \sum_{j=1}^k \beta^{k+1-j} Z_i^j \|_2^2 \right] \leq \frac{\delta_i^2 t_i^2 \eta^2}{\alpha n}. 
$$
For $\E\left[  \|e_i^{k+1}\|_2^2  \right]$, from Lemma \ref{lm:eik}. we have 
$$
\E\left[  \|e_i^{k+1}\|_2^2  \right] \leq \frac{1}{n} \sum_{\tau=1}^n \E \left[  \|e_i^{\tau, k+1}\|_2^2  \right] \leq \frac{2(1-q)(1-\delta)(L_i^2 + (q+\delta-q\delta)\delta_i^2 ) t_i^2 \eta^2}{(q+\delta-q\delta)^2}. 
$$

Combining the above estimations, we can obtain that 
\begin{eqnarray}
	&&\E \left[  \|M_i^k - \nabla_i f(X^k)\|_{(i)\star}  \right] \nonumber \\ 
	&\leq& \frac{(1-\alpha)^k \rho \sigma}{\sqrt{Bn}} +  \frac{2\sqrt{\alpha} \rho \sigma}{\sqrt{(2-\alpha)(\alpha+\beta q)Bn}}  + \frac{\sqrt{2\alpha(1-q)}\rho\delta_it_i\eta}{\sqrt{(2-\alpha)(\alpha+\beta q)qn}}  \nonumber  \\ 
	&& +  \frac{\rho \delta_i t_i \eta}{\sqrt{\alpha n}} +  \frac{\sqrt{2(1-q)(1-\delta)(L_i^2 + (q+\delta-q\delta)\delta_i^2 )} \rho t_i \eta}{q + \delta -q\delta}, \label{eq:Mik-dG-mvr-q>0}
\end{eqnarray}
for $0<q\leq 1$; and 
\begin{eqnarray}
	&&\E \left[  \|M_i^k - \nabla_i f(X^k)\|_{(i)\star}  \right] \nonumber \\ 
	&\leq& \frac{(1-\alpha)^k \rho \sigma}{\sqrt{Bn}} +  \frac{\sqrt{2} \rho \sigma}{\sqrt{Bn}}  + 
	\frac{\sqrt{2k} \rho \delta_i t_i \eta}{\sqrt{n}}  \nonumber \\
	&&  +  \frac{\rho \delta_i t_i \eta}{\sqrt{\alpha n}} +  \frac{\sqrt{2(1-q)(1-\delta)(L_i^2 + (q+\delta-q\delta)\delta_i^2 )} \rho t_i \eta}{q + \delta -q\delta},  \label{eq:Mik-dG-mvr-q=0}
\end{eqnarray}
for $q=0$. 

First we consider the case where $0<q\leq 1$. From (\ref{eq:sumfgrad-cs}) and (\ref{eq:Mik-dG-mvr-q>0}), we can get that 
\begin{align*}
	\sum_{i=1}^p \sum_{k=0}^{K-1} t_i \eta \E \left[  \| \nabla_i f(X^k)\|_{(i)\star}  \right] \leq \Delta^0 + \sum_{i=1}^p  & \left[   \frac{2 t_i\eta\rho \sigma}{\alpha \sqrt{Bn}} +  \frac{4K \sqrt{\alpha}t_i \eta \rho \sigma}{\sqrt{(2-\alpha)(\alpha+\beta q)} \sqrt{Bn}} \right. \\ 
	& \quad \left. +  \frac{2K \sqrt{2(1-q)\alpha}\rho \delta_i t_i^2 \eta^2}{\sqrt{(2-\alpha)(\alpha+\beta q)} \sqrt{qn}}  +  \frac{2K\rho \delta_i t_i^2 \eta^2}{\sqrt{\alpha n}}    \right. \\ 
	& \quad \left.  +    \frac{2K\sqrt{2(1-q)(1-\delta)(L_i^2 + (q+\delta-q\delta)\delta_i^2 )} \rho t_i^2 \eta^2}{q + \delta -q\delta}    \right. \\
	& \quad  \left. +  \frac{K L_i^0 t_i^2 \eta^2}{2} + \sum_{k=0}^{K-1}  \frac{L_i^1 t_i^2 \eta^2}{2} \E \left[  \nabla_i f(X^k)\|_{(i)\star}  \right] \right]. 
\end{align*}

Now we consider two options: (1) $L_i^1 = 0$ for all $i \in \{  1, ..., p  \}$ and (2) $L_i^1 \neq 0$, for all $i \in \{  1, ..., p  \}$.  

{\bf Case 1:} $L_i^1 = 0$ for all $i \in \{  1, ..., p  \}$. In this case, 
\begin{eqnarray*}
	&& \min_{k=0, ..., K-1} \sum_{i=1}^p t_i \E \left[  \|\nabla_i f(X^k) \|_{(i)\star} \right] \\ 
	&\leq&  \frac{1}{K} \sum_{k=0}^{K-1} \sum_{i=1}^p t_i \E \left[  \|\nabla_i f(X^k) \|_{(i)\star} \right] \\ 
	&\leq&  \frac{\Delta^0}{\eta K} + \frac{2\sum_{i=1}^p t_i \rho \sigma}{\alpha K\sqrt{Bn}}  +   \frac{4\sqrt{\alpha}\sum_{i=1}^p t_i  \rho \sigma}{\sqrt{(2-\alpha)(\alpha+\beta q)} \sqrt{Bn}}  +    \frac{2 \sqrt{2(1-q)\alpha} \sum_{i=1}^p \delta_i t_i^2 \rho \eta}{\sqrt{(2-\alpha)(\alpha+\beta q)} \sqrt{qn}}  +  \frac{2\sum_{i=1}^p \delta_i t_i^2 \rho \eta}{\sqrt{\alpha n}}  \\ 
	&&  +     \frac{2\sqrt{2(1-q)(1-\delta)} \sum_{i=1}^p \sqrt{L_i^2 + (q+\delta-q\delta)\delta_i^2 }  t_i^2 \rho \eta}{q + \delta -q\delta}    + \frac{\sum_{i=1}^p L_i^0 t_i^2 \eta}{2}. 
\end{eqnarray*}

{\bf Case 2:} $L_i^1 \neq 0$, for all $i \in \{  1, ..., p  \}$. First we let $\eta \leq \min_i \frac{1}{L_i^1 t_i}$. Then $\frac{1}{2} L_i^1 t_i \eta \leq \frac{1}{2}$ for all $i$, and 
\begin{eqnarray*}
	&& \min_{k=0, ..., K-1} \sum_{i=1}^p t_i \E \left[  \|\nabla_i f(X^k) \|_{(i)\star} \right] \\ 
	&\leq&  \frac{1}{K} \sum_{k=0}^{K-1} \sum_{i=1}^p t_i \E \left[  \|\nabla_i f(X^k) \|_{(i)\star} \right] \\ 
	&\leq&  \frac{2\Delta^0}{\eta K} + \frac{4\sum_{i=1}^p t_i \rho \sigma}{\alpha K\sqrt{Bn}}  +   \frac{8\sqrt{\alpha}\sum_{i=1}^p t_i  \rho \sigma}{\sqrt{(2-\alpha)(\alpha+\beta q)} \sqrt{Bn}}  +    \frac{4 \sqrt{2(1-q)\alpha} \sum_{i=1}^p \delta_i t_i^2 \rho \eta}{\sqrt{(2-\alpha)(\alpha+\beta q)} \sqrt{qn}}  +  \frac{4\sum_{i=1}^p \delta_i t_i^2 \rho \eta}{\sqrt{\alpha n}}  \\ 
	&&  +     \frac{4\sqrt{2(1-q)(1-\delta)} \sum_{i=1}^p \sqrt{L_i^2 + (q+\delta-q\delta)\delta_i^2 }  t_i^2 \rho \eta}{q + \delta -q\delta}   + \sum_{i=1}^p L_i^0 t_i^2 \eta. 
\end{eqnarray*}

Next we consider the case where $q=0$. From (\ref{eq:sumfgrad-cs}) and (\ref{eq:Mik-dG-mvr-q=0}), we can get that 
\begin{align*}
	\sum_{i=1}^p \sum_{k=0}^{K-1} t_i \eta \E \left[  \| \nabla_i f(X^k)\|_{(i)\star}  \right] \leq \Delta^0 + \sum_{i=1}^p  & \left[   \frac{2 t_i\eta\rho \sigma}{\alpha \sqrt{Bn}} +    \frac{2\sqrt{2}Kt_i \eta \rho \sigma}{\sqrt{Bn}}  + 
	\frac{2\sqrt{2} K^{\frac{3}{2}} \rho \delta_i t_i^2 \eta^2}{\sqrt{n}}       \right. \\ 
	& \quad \left.   +  \frac{2K\rho \delta_i t_i^2 \eta^2}{\sqrt{\alpha n}}  +    \frac{2K\sqrt{2(1-\delta)(L_i^2 + \delta \delta_i^2 )} \rho t_i^2 \eta^2}{ \delta}    \right. \\
	& \quad  \left. +  \frac{K L_i^0 t_i^2 \eta^2}{2} + \sum_{k=0}^{K-1}  \frac{L_i^1 t_i^2 \eta^2}{2} \E \left[  \nabla_i f(X^k)\|_{(i)\star}  \right] \right]. 
\end{align*}

Now we consider two options: (1) $L_i^1 = 0$ for all $i \in \{  1, ..., p  \}$ and (2) $L_i^1 \neq 0$, for all $i \in \{  1, ..., p  \}$.  

{\bf Case 1:} $L_i^1 = 0$ for all $i \in \{  1, ..., p  \}$. In this case, 
\begin{eqnarray*}
	&& \min_{k=0, ..., K-1} \sum_{i=1}^p t_i \E \left[  \|\nabla_i f(X^k) \|_{(i)\star} \right] \\ 
	&\leq&  \frac{1}{K} \sum_{k=0}^{K-1} \sum_{i=1}^p t_i \E \left[  \|\nabla_i f(X^k) \|_{(i)\star} \right] \\ 
	&\leq&  \frac{\Delta^0}{\eta K} + \frac{2\sum_{i=1}^p t_i \rho \sigma}{\alpha K\sqrt{Bn}}  +   \frac{2\sqrt{2} \sum_{i=1}^p t_i \rho \sigma}{\sqrt{Bn}}  +   \frac{2\sqrt{2} \sqrt{K}  \sum_{i=1}^p \delta_i t_i^2 \rho \eta}{\sqrt{n}}   +  \frac{2 \sum_{i=1}^p \delta_i t_i^2 \rho \eta}{\sqrt{\alpha n}}   \\ 
	&&  +     \frac{2\sqrt{2(1-\delta)} \sum_{i=1}^p \sqrt{L_i^2 + \delta \delta_i^2 }  t_i^2 \rho \eta}{ \delta}    + \frac{\sum_{i=1}^p L_i^0 t_i^2 \eta}{2}. 
\end{eqnarray*}

{\bf Case 2:} $L_i^1 \neq 0$, for all $i \in \{  1, ..., p  \}$. First we let $\eta \leq \min_i \frac{1}{L_i^1 t_i}$. Then $\frac{1}{2} L_i^1 t_i \eta \leq \frac{1}{2}$ for all $i$, and 
\begin{eqnarray*}
	&& \min_{k=0, ..., K-1} \sum_{i=1}^p t_i \E \left[  \|\nabla_i f(X^k) \|_{(i)\star} \right] \\ 
	&\leq&  \frac{1}{K} \sum_{k=0}^{K-1} \sum_{i=1}^p t_i \E \left[  \|\nabla_i f(X^k) \|_{(i)\star} \right] \\ 
	&\leq&  \frac{2\Delta^0}{\eta K} + \frac{4\sum_{i=1}^p t_i \rho \sigma}{\alpha K\sqrt{Bn}}   +   \frac{4\sqrt{2} \sum_{i=1}^p t_i \rho \sigma}{\sqrt{Bn}}  +    \frac{4\sqrt{2} \sqrt{K}  \sum_{i=1}^p \delta_i t_i^2 \rho \eta}{\sqrt{n}}   +  \frac{4 \sum_{i=1}^p \delta_i t_i^2 \rho \eta}{\sqrt{\alpha n}}   \\ 
	&&  +     \frac{4\sqrt{2(1-\delta)} \sum_{i=1}^p \sqrt{L_i^2 + \delta \delta_i^2 }  t_i^2 \rho \eta}{ \delta}   + \sum_{i=1}^p L_i^0 t_i^2 \eta. 
\end{eqnarray*}

\newpage

\section{Logistic Regression Ablation}
\label{appendix:logreg}

We make a detailed ablation over the learning rate, momentum parameter $\beta$, probability $q$, compression factor $K\%$ and $B$ for Algorithm \ref{alg:q-cgluon} (Figures \ref{fig:alg1_B1_logreg}, \ref{fig:alg1_B4_logreg}, \ref{fig:alg1_B16_logreg}), Algorithm \ref{alg:q-deltagluon} (Figure \ref{fig:logreg_alg2}) and VR-MARINA (Figure \ref{fig:logreg_marina_ablation})

We make a list of observations related to the Algorithms \ref{alg:q-cgluon} and \ref{alg:q-deltagluon}.

\begin{itemize}
    \item Introducing the parameter $q<1$ results in a significant slowdown in convergence compared to the SGD baseline, even without compression (see $K=100\%$). However, increasing the compression rate does not further degrade convergence speed. Consequently, for all final experiments in the main body of the paper, we employ an aggressive compression setting of $K=1\%$.
    \item Increasing the parameter $B$ enables the model to converge to a more favorable neighborhood.
    \item Across all experimental setups, a larger momentum value $\beta=0.99$ consistently yielded better performance.
    \item The greedy compressor (Top$K\%$) demonstrates no significant advantage over the random sparsifier (Rand$K\%$).
\end{itemize}

\begin{figure}[htbp!]
    \centering
    
    \includegraphics[width=1\linewidth, height=5cm]{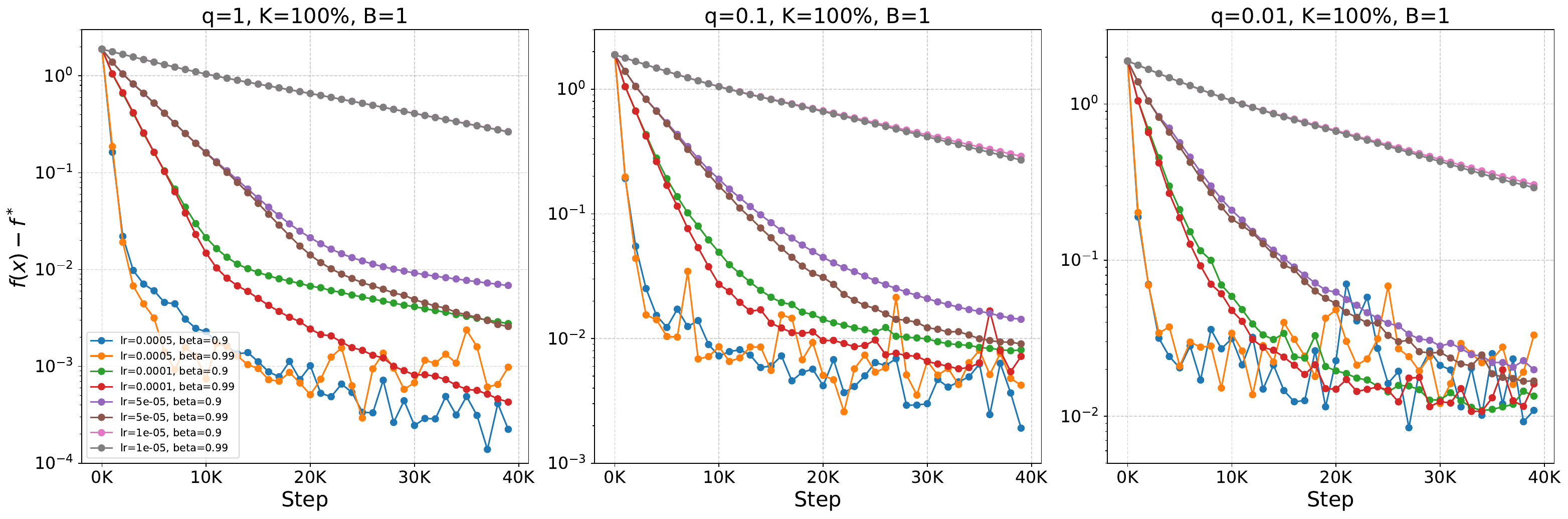}
    \vspace{0.5cm} 
    
    \includegraphics[width=1\linewidth, height=5cm]{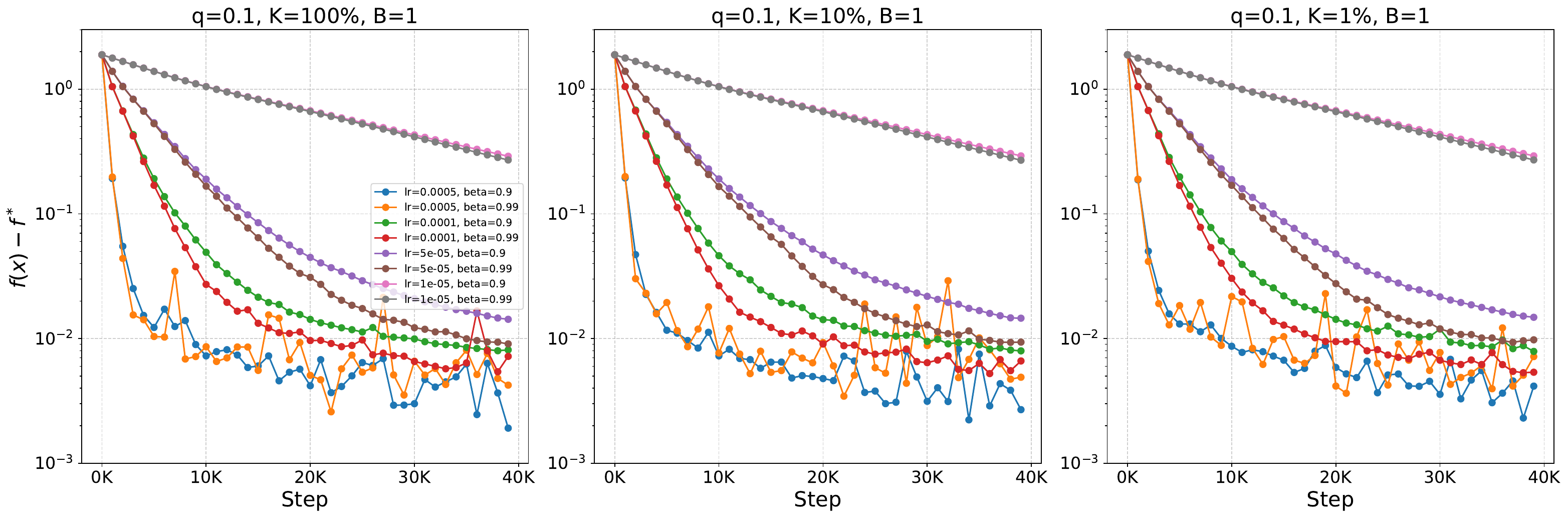} 
    \vspace{0.5cm}
    
    \includegraphics[width=1\linewidth, height=5cm]{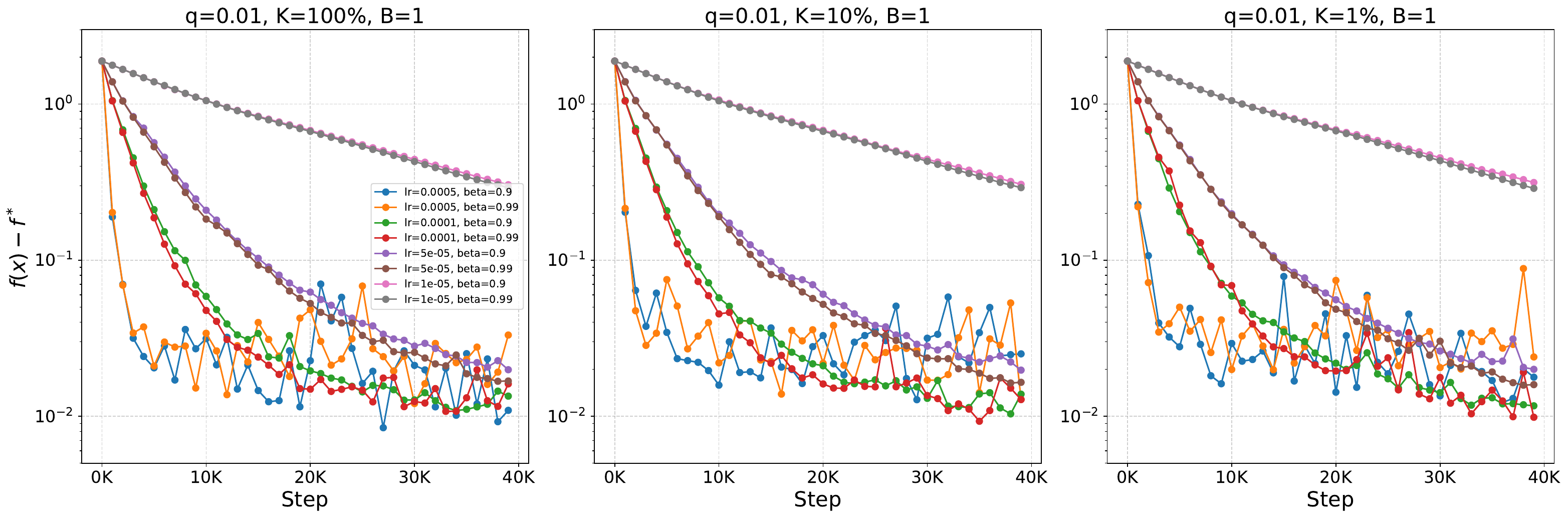}
    \caption{$f(x)-f^*$ vs. Step, Algorithm \ref{alg:q-cgluon}, Logistic Regression, a5a dataset, $B=1$ across various $\beta$, lr, compression factor $K\%$ and $q$.}
    \label{fig:alg1_B1_logreg}
    
\end{figure}

\begin{figure}[htbp!]
    \centering
    
    \includegraphics[width=1\linewidth, height=5cm]{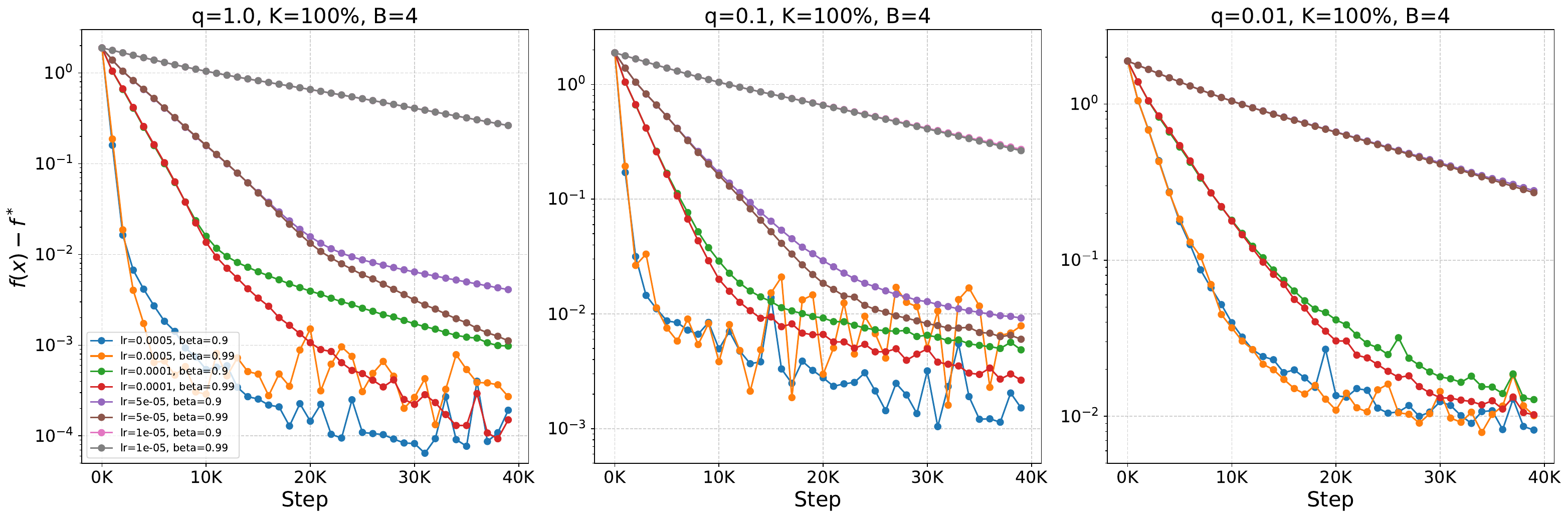}
    \vspace{0.5cm} 
    
    \includegraphics[width=1\linewidth, height=5cm]{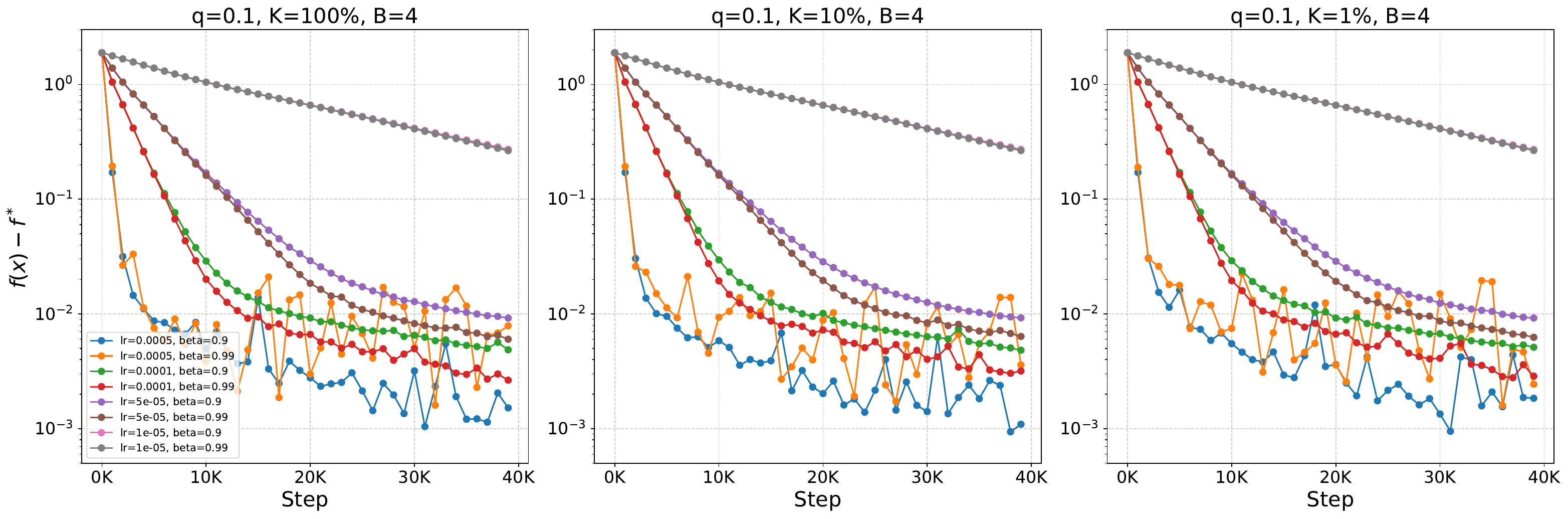} 
    \vspace{0.5cm}
    
    \includegraphics[width=1\linewidth, height=5cm]{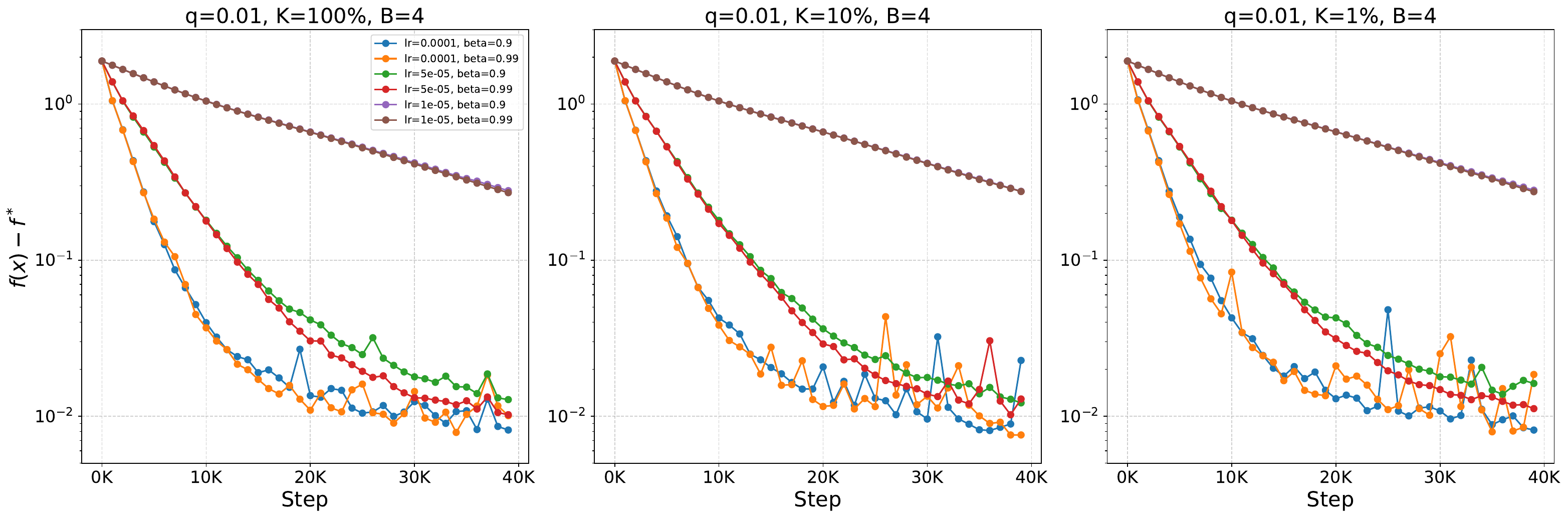}
    \caption{$f(x)-f^*$ vs. Step, Algorithm \ref{alg:q-cgluon}, Logistic Regression, a5a dataset, $B=1$ across various $\beta$, lr, compression factor $K\%$ and $q$.}
    \label{fig:alg1_B4_logreg}
    
\end{figure}

\begin{figure}[htbp!]
    \centering
    
    \includegraphics[width=1\linewidth, height=5cm]{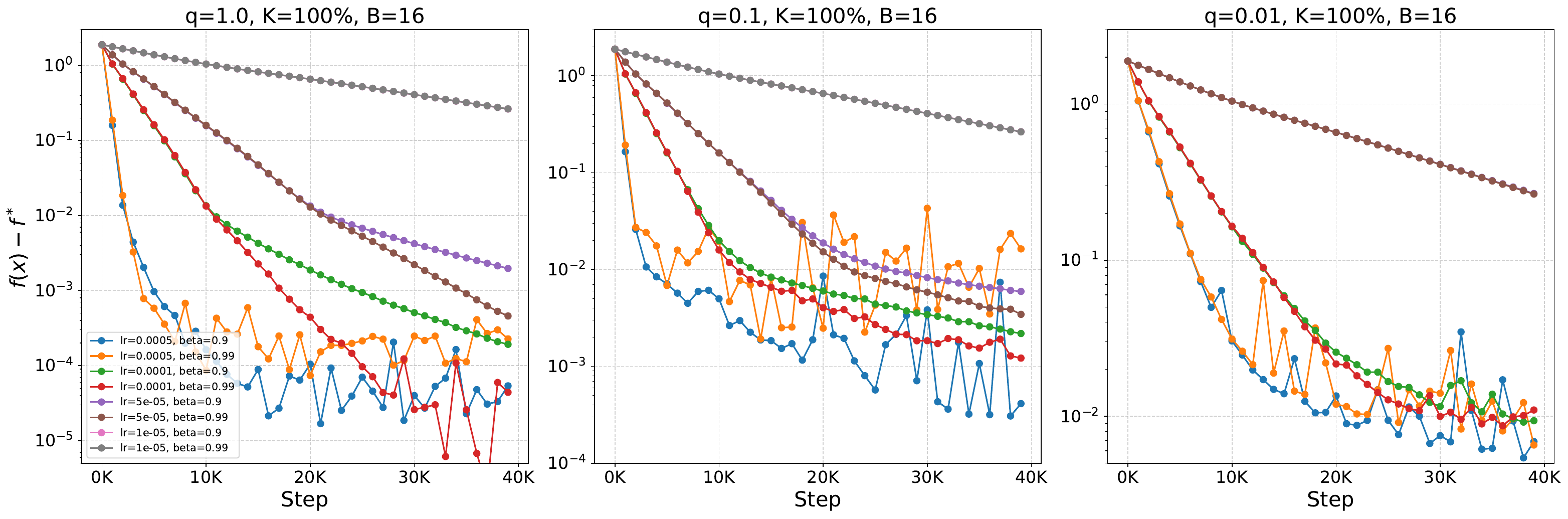}
    \vspace{0.5cm} 
    
    \includegraphics[width=1\linewidth, height=5cm]{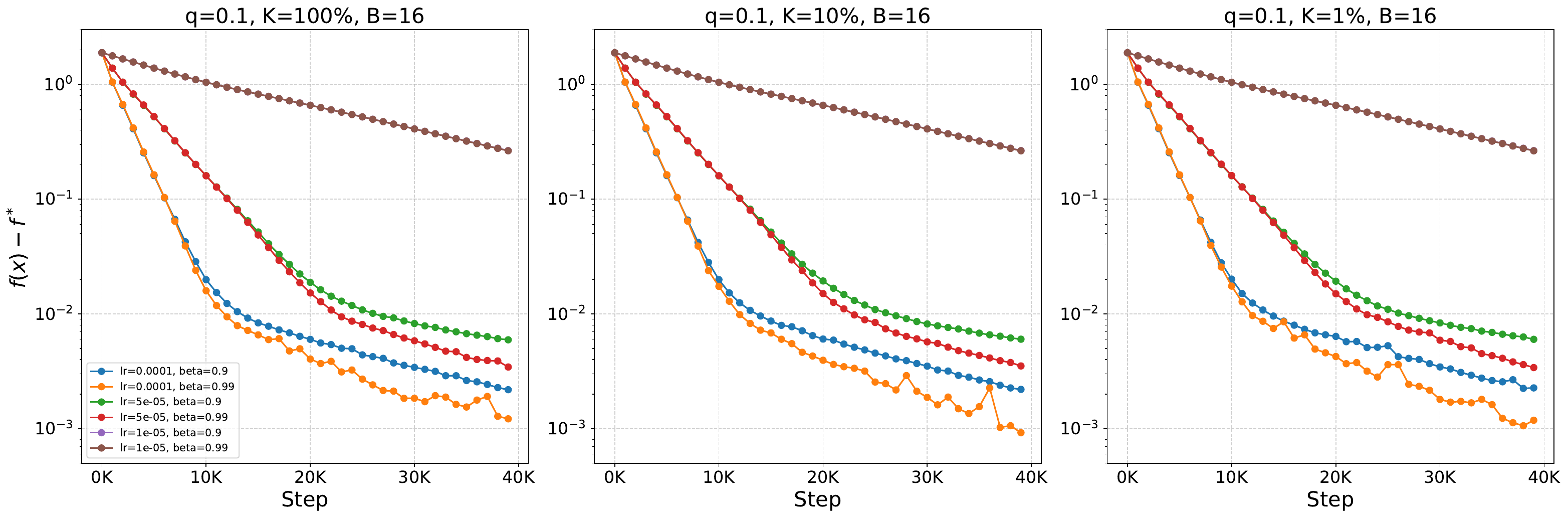} 
    \vspace{0.5cm}
    
    \includegraphics[width=1\linewidth, height=5cm]{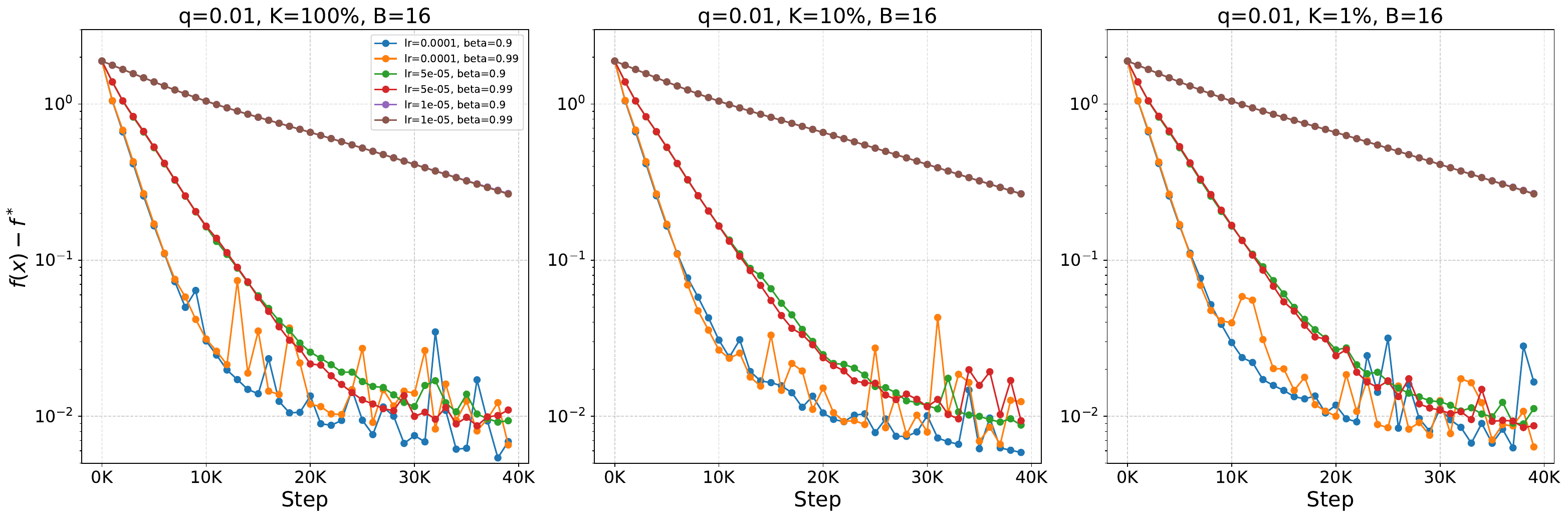}
    \caption{$f(x)-f^*$ vs. Step, Algorithm \ref{alg:q-cgluon}, Logistic Regression, a5a dataset, $B=16$ across various $\beta$, lr, compression factor $K\%$ and $q$.}
    \label{fig:alg1_B16_logreg}

\end{figure}

\begin{figure}[htbp!]
    \centering
    \includegraphics[width=0.48\linewidth]{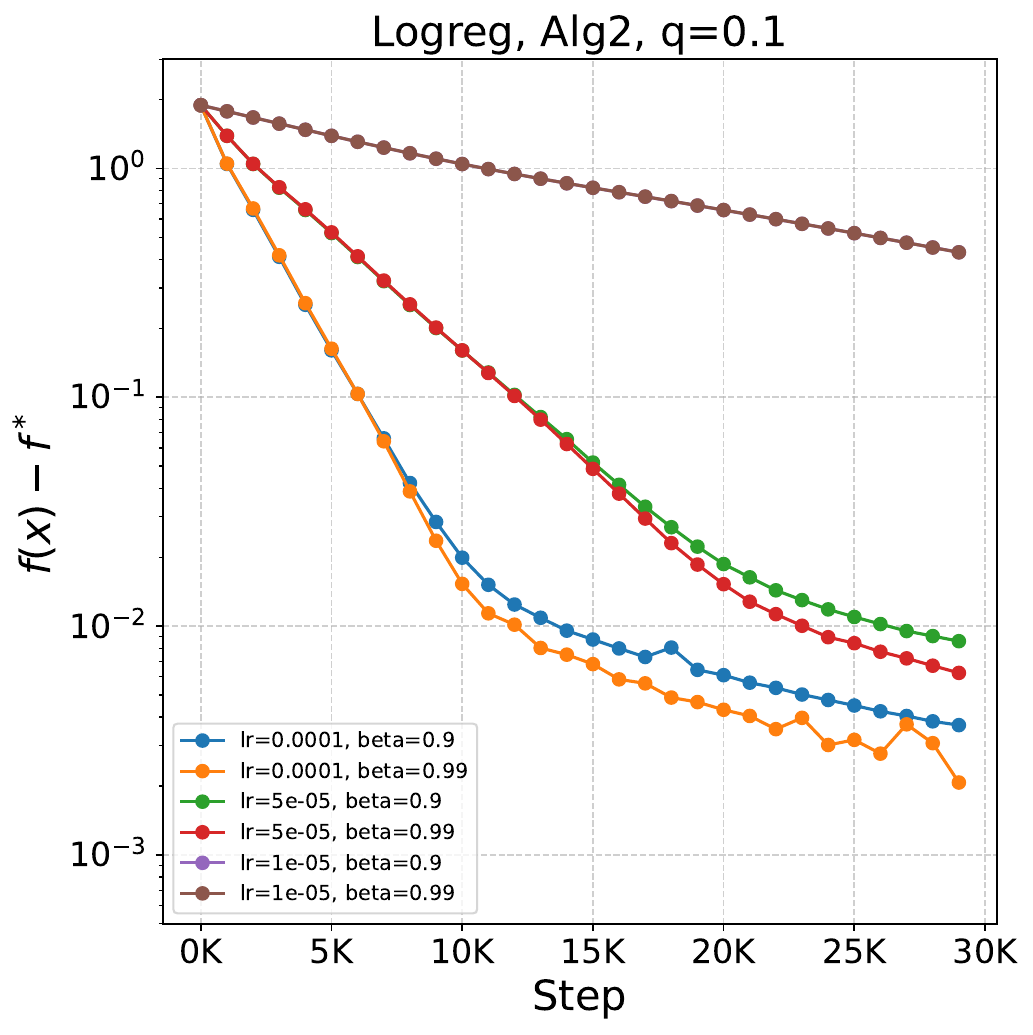}%
    \hfill
    \includegraphics[width=0.48\linewidth]{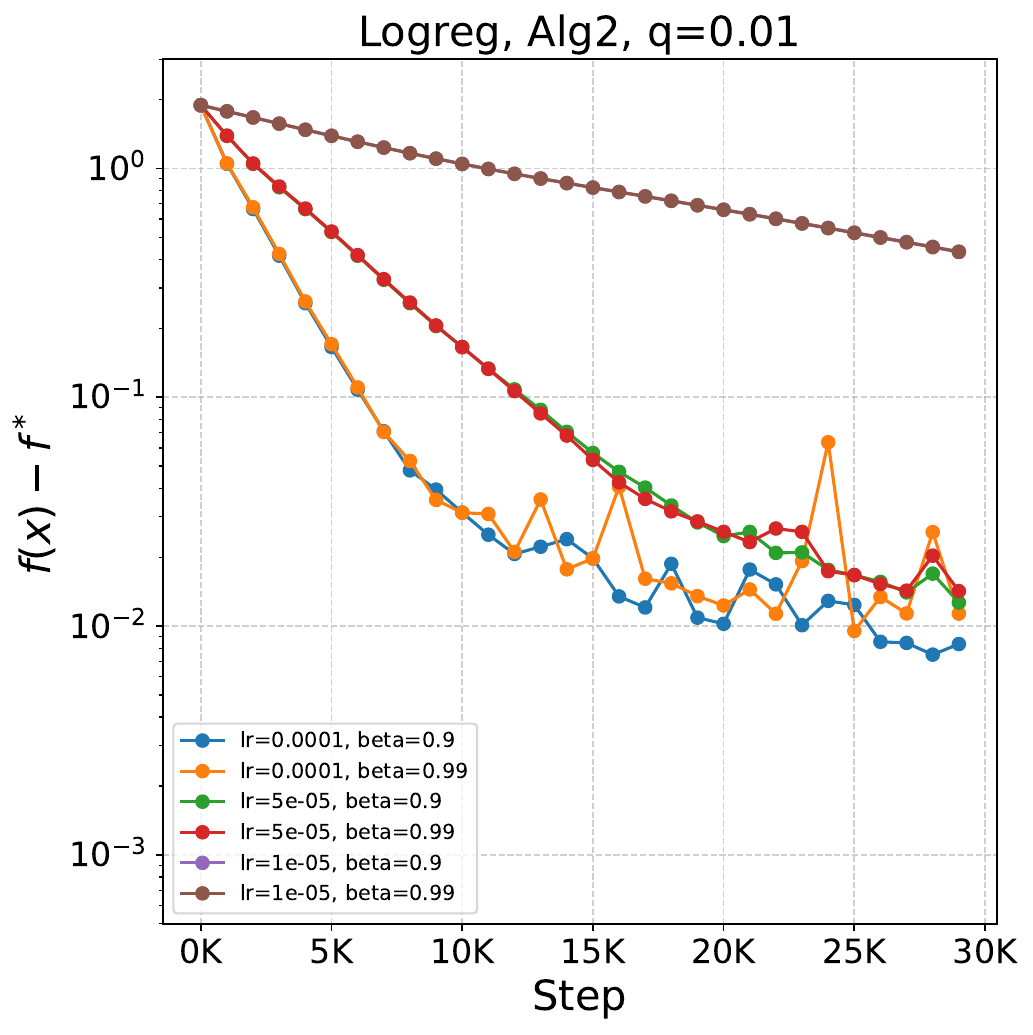}
    \caption{$f(x)-f^*$ vs. training step, Algorithm \ref{alg:q-deltagluon}, logistic regression ablation on the a5a dataset, $B=16$, Top$1\%$ compression. Only stable trajectories are presented.}
    \label{fig:logreg_alg2}
\end{figure}

\begin{figure}[htbp!]
    \centering
    \includegraphics[width=0.48\linewidth]{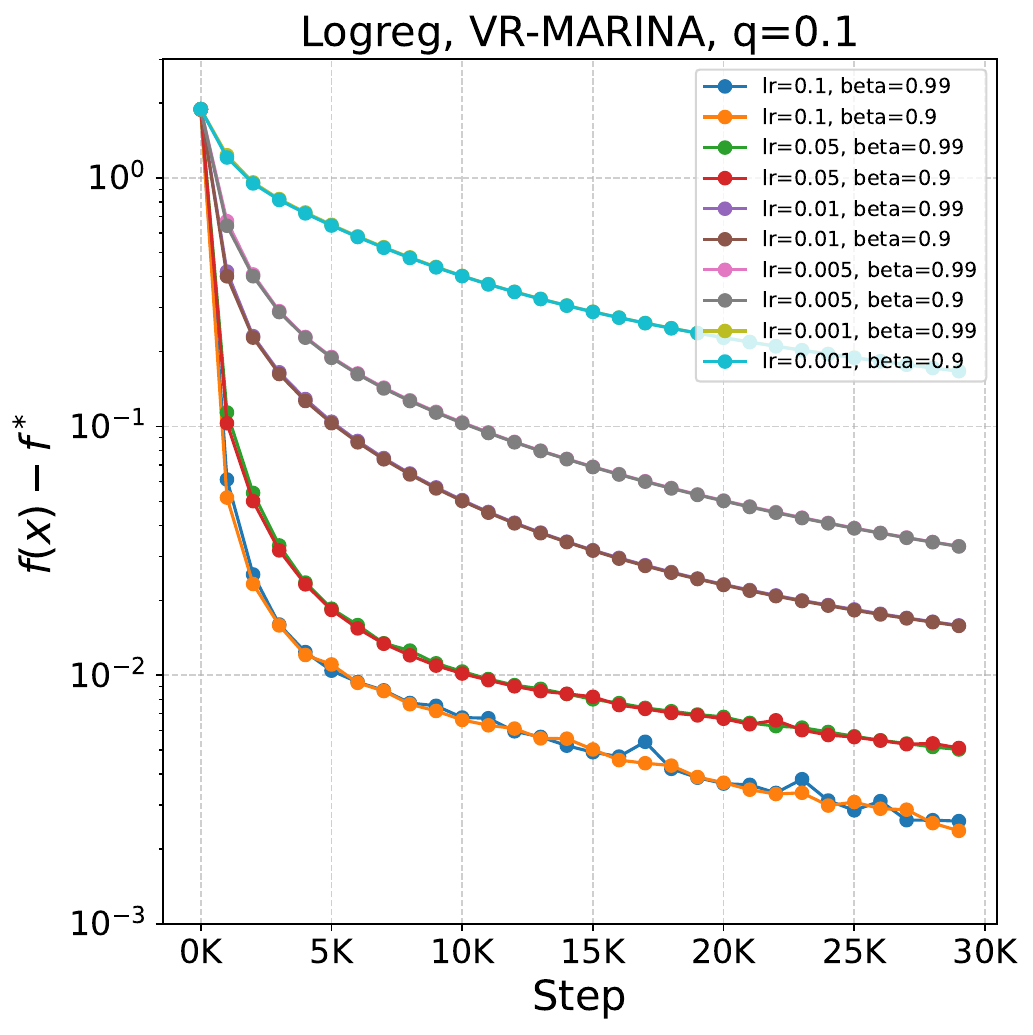}%
    \hfill
    \includegraphics[width=0.48\linewidth]{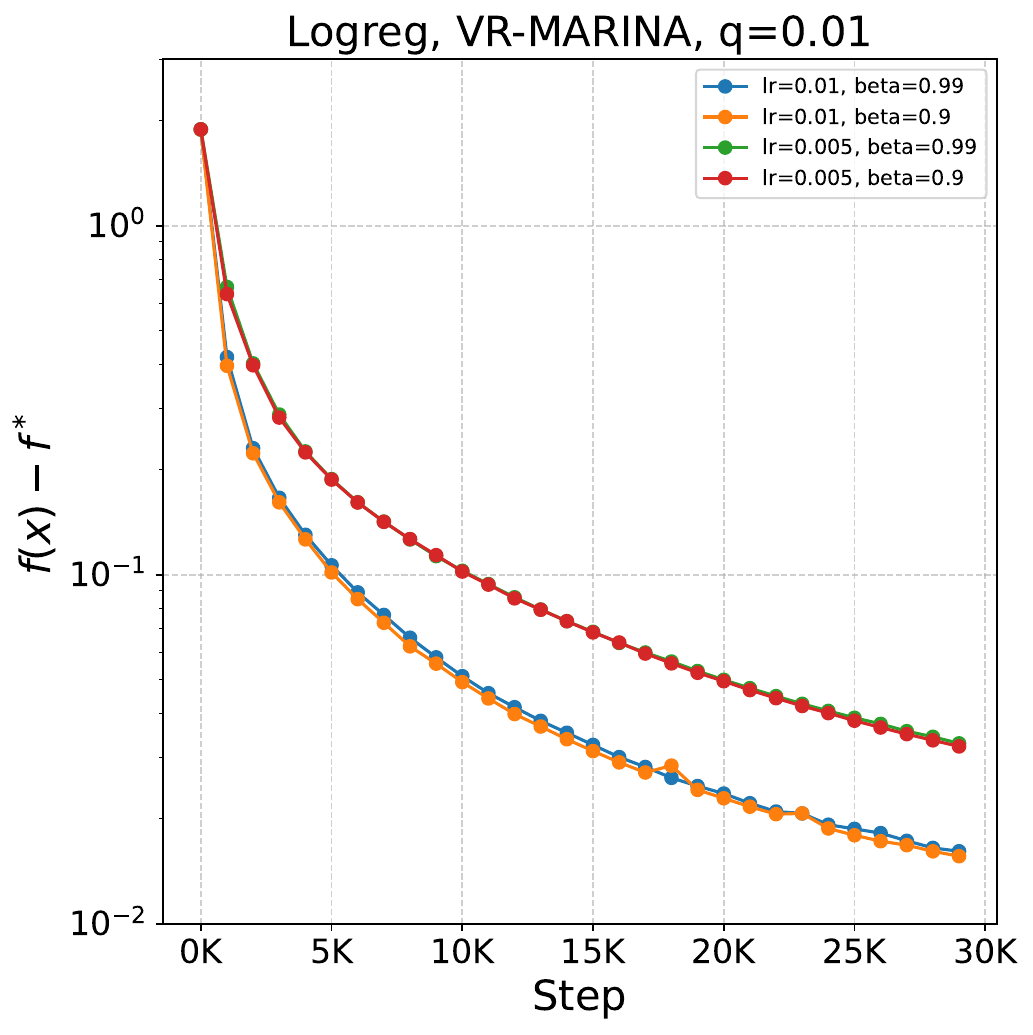}
    \caption{$f(x)-f^*$ vs. training step, VR-MARINA, logistic regression on the a5a dataset, $B=16$ (to ensure a fair comparison with our methods), Rand$1\%$ compression. Only stable trajectories are presented.}
    \label{fig:logreg_marina_ablation}
\end{figure}

\section{CNN Ablation}
\label{appendix:cifar10}

In this section, we incorporate observations from Appendix \ref{appendix:logreg} and set the compression parameter to $K=1\%$. We report only stably converging trajectories across the hyperparameter grid in Figure \ref{cifar10_ablation}. Runs with $q=0.05$ and $q=0.01$ were found not to be competitive with the baseline in terms of communication cost due to very slow convergence.

\begin{figure}[H]
    \centering
    \includegraphics[width=1.0\linewidth]{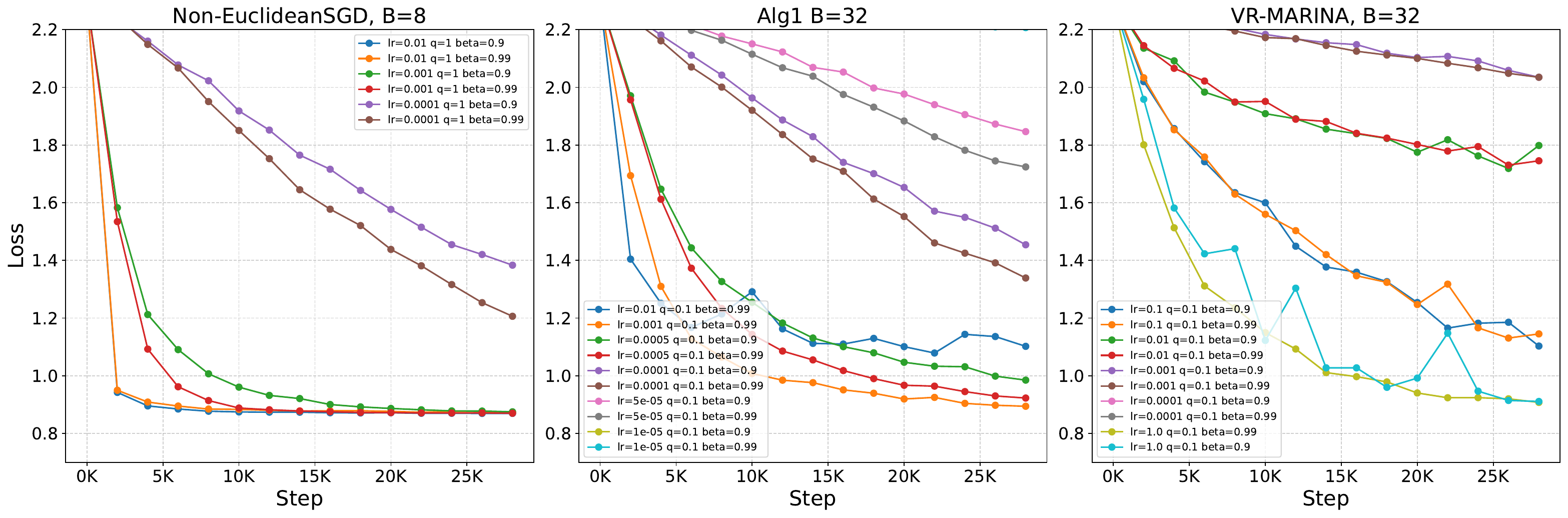}%
    \caption{Loss vs. training step, CIFAR10, compression parameter $K=1\%$, learning rate and momentum tuning for Non-Euclidean SGD baseline, Algorithm \ref{alg:q-cgluon}, and VR-MARINA.}
    \label{cifar10_ablation}
\end{figure}


\end{document}